\def\eqref#1{equation~\ref{#1}}
\def\1{\bm{1}}
\DeclareMathAlphabet{\mathsfit}{\encodingdefault}{\sfdefault}{m}{sl}
\SetMathAlphabet{\mathsfit}{bold}{\encodingdefault}{\sfdefault}{bx}{n}
\theoremstyle{plain}
\newtheorem{theorem}{Theorem}[section]
\newtheorem{lemma}[theorem]{Lemma}
\newtheorem{corollary}[theorem]{Corollary}
\theoremstyle{definition}
\newtheorem{assumption}[theorem]{Assumption}
\theoremstyle{remark}
\newtheorem{remark}[theorem]{Remark}
\definecolor{Gray}{gray}{0.9}
\newcommand{\xmark}{\ding{55}}%
\DeclareRobustCommand\onedot{\futurelet\@let@token\@onedot}
\def\@onedot{\ifx\@let@token.\else.\null\fi\xspace}
\def\eg{\emph{e.g}\onedot} 
\def\ie{\emph{i.e}\onedot}
\def\wrt{w.r.t\onedot} 
 \def\SOUL@hlpreamble{%
 \setul{}{3.2ex}%
 \let\SOUL@stcolor\SOUL@hlcolor
 \SOUL@stpreamble
 }
\definecolor{difcolor}{RGB}{204, 0, 204}
\DeclareRobustCommand{\edit}[1]{#1}
\newcommand{\hlc}[2][yellow]{{%
    \colorlet{foo}{#1}%
    \sethlcolor{foo}\hl{#2}}%
}
\DeclareRobustCommand{\fedavg}
{\textsc{FedAvg}}
\DeclareRobustCommand{\fedprox}
{\textsc{FedProx}}
\DeclareRobustCommand{\scaffold}
{\textsc{SCAFFOLD}}
\DeclareRobustCommand{\feddyn}
{\textsc{FedDyn}}
\DeclareRobustCommand{\adabest}
{\textsc{AdaBest}}
\DeclareRobustCommand{\fedavgm}
{\textsc{FedAvgM}}
\DeclareRobustCommand{\mime}
{\textsc{Mime}}
\DeclareRobustCommand{\mimemom}
{\textsc{MimeMom}}
\DeclareRobustCommand{\mimelite}
{\textsc{MimeLiteMom}}
\DeclareRobustCommand{\fedcm}
{\textsc{FedCM}}
\DeclareRobustCommand{\fedadc}
{\textsc{FedADC}}
\DeclareRobustCommand{\cifar}[1]
{\textsc{Cifar-#1}}
\DeclareRobustCommand{\shakespeare}
{\textsc{Shakespeare}}
\DeclareRobustCommand{\stackoverflow}
{\textsc{StackOverflow}}
\DeclareRobustCommand{\landmarks}
{\textsc{GLDv2}}
\DeclareRobustCommand{\inaturalist}
{\textsc{INaturalist}}
\DeclareRobustCommand{\resnet}
{\textsc{ResNet-20}}
\DeclareRobustCommand{\lenet}
{\textsc{CNN}}
\DeclareRobustCommand{\mobilenet}
{\textsc{MobileNetV2}}
\DeclareRobustCommand{\vit}
{\textsc{ViT-B\textbackslash16}}
\DeclareRobustCommand{\nan}
{\textcolor{BrickRed}{\xmark}}
\DeclareRobustCommand{\no}
{\textcolor{BrickRed}{\xmark}}
\DeclareRobustCommand{\yes}
{\textcolor{MaterialGreen800}{\ding{51}}}
\DeclareRobustCommand{\fedhbm}
{\textsc{FedHBM}}
\DeclareRobustCommand{\ghb}
{\textsc{GHBM}}
\DeclareRobustCommand{\localghb}
{\textsc{LocalGHBM}}
\DeclareRobustCommand{\S}[1]
{\mathcal{S}^{#1}}
\DeclareRobustCommand{\svrvar}[1]
{{\mathcal{E}}_{#1}}
\DeclareRobustCommand{\fsdrift}[1]
{\Xi_{#1}}
\DeclareRobustCommand{\drift}[1]
{\mathcal{U}_{#1}}
\DeclareRobustCommand{\normsq}[1]
{\left\|#1\right\|^2}
\DeclareRobustCommand{\expect}[1]
{\mathbb{E}\left[#1\right]}
\DeclareRobustCommand{\expectcs}[2]
{\mathbb{E}_{\S{#2} \sim\mathcal{U}(\S{})}\left[#1\right]}
\DeclareRobustCommand{\expectd}[2]
{\mathbb{E}_{{d_{#2} \sim \mathcal{D}_{#2}}}\left[#1\right]}
\crefname{equation}{Eq.}{Eq.}
\Crefname{equation}{Equation}{Equations}
\crefname{figure}{Fig.}{Figs.}
\Crefname{figure}{Figure}{Figures}
\crefname{table}{Tab.}{Tabs.}
\Crefname{table}{Table}{Tables}
\crefname{chapter}{Chap.}{Chaps.}
\Crefname{chapter}{Chapter}{Chapters}
\crefname{section}{Sec.}{Secs.}
\Crefname{section}{Section}{Sections}
\crefname{subsection}{Sec.}{Secs.}
\Crefname{subsection}{Section}{Sections}
\crefname{subsubsection}{Sec.}{Secs.}
\Crefname{subsubsection}{Section}{Sections}
\crefname{prob}{Prob.}{Probs.}
\Crefname{prob}{Problem}{Problems}
\crefname{property}{Prop.}{Props.}
\Crefname{property}{Property}{Properties}
\crefname{constr}{Constraint}{Constraints} %
\Crefname{constr}{Constraint}{Constraints}
\crefname{algocf}{Algorithm}{Algorithms}
\Crefname{algocf}{Algorithm}{Algorithms}
\crefname{algorithm}{Algorithm}{Algorithms}
\Crefname{algorithm}{Algorithm}{Algorithms}
\crefname{hyp}{Hyp.}{Hyps.}
\Crefname{hyp}{Hypothesis}{Hypothesis}
\crefname{assumption}{Assumption}{Assumptions}
\Crefname{assumption}{Assumption}{Assumptions}
\crefname{lem}{Lem.}{Lems.}
\Crefname{lem}{Lemma}{Lemma}
\crefname{prop}{Prop.}{Props.}
\Crefname{prop}{Proposition}{Propositions}
\crefname{theorem}{Thm.}{Thms.}
\Crefname{theorem}{Thm.}{Thms.}
\let\classAND\AND
\let\AND\relax
\let\AND\classAND
\title{Communication-Efficient Heterogeneous \\ Federated Learning with Generalized \\ Heavy-Ball Momentum}
\author{%
  \name Riccardo Zaccone\thanks{Corresponding author} \\
  \addr Politecnico di Torino \\
  \email riccardo.zaccone@polito.it
  \AND
  \name Sai Praneeth Karimireddy \\
  \addr USC Viterbi School of Engineering \\
  \email karimire@usc.edu
  \AND
  \name Carlo Masone \\
  \addr Politecnico di Torino \\
  \email carlo.masone@polito.it
  \AND
  \name Marco Ciccone \\
  \addr Vector Institute \\
  \email marco.ciccone@vectorinstitute.ai
}
\begin{document}

\maketitle

\begin{abstract}
Federated Learning (FL) has emerged as the state-of-the-art approach for learning from decentralized data in privacy-constrained scenarios.
However, system and statistical challenges hinder its real-world applicability, requiring efficient learning from edge devices and robustness to data heterogeneity.
Despite significant research efforts, existing approaches often degrade severely due to the joint effect of heterogeneity and partial client participation.
In particular, while momentum appears as a promising approach for overcoming statistical heterogeneity, in current approaches its update is biased towards the most recently sampled clients.
As we show in this work, this is the reason why it fails to outperform {\fedavg}, preventing its effective use in real-world large-scale scenarios.
In this work, we propose a novel \textit{Generalized Heavy-Ball Momentum} (\ghb) and theoretically prove it enables convergence under unbounded data heterogeneity in \textit{cyclic partial participation}, thereby advancing the understanding of momentum's effectiveness in FL.
We then introduce adaptive and communication-efficient variants of {\ghb} that match the communication complexity of {\fedavg} in settings where clients can be \textit{stateful}.
Extensive experiments on vision and language tasks confirm our theoretical findings, demonstrating that {\ghb} substantially improves state-of-the-art performance under random uniform client sampling, particularly in large-scale settings with high data heterogeneity and low client participation\footnote{Code is available at \href{https://rickzack.github.io/GHBM}{\texttt{https://github.com/RickZack/GHBM}}}.
\end{abstract}
\section{Introduction}
Federated Learning (FL) \citep{mcmahan2017FedAvg} is a paradigm to learn from decentralized data in which a central server orchestrates an iterative two-step training process that involves 1) local training, potentially on a large number of clients, each with its own private data, and 2) the aggregation of these updated local models on the server into a single, shared global model. This process is repeated over several communication rounds.
While the inherent privacy-preserving nature of FL makes it well-suited for decentralized applications with restricted data sharing, it also introduces significant challenges. Since local data reflects unique characteristics of individual clients, limiting the optimization to a client's personal data can lead to issues caused by \textit{statistical heterogeneity}.
This becomes particularly problematic when multiple optimization steps are performed before model synchronization, causing clients to \textit{drift} from the ideal global updates \citep{karimireddy2020scaffold}. Indeed, heterogeneity has been shown to hinder the convergence of \fedavg{} \citep{hsu2019measuring}, increasing the number of communication rounds needed to achieve a target model quality \citep{reddi2020FedOpt} and negatively impacting final performance.

Several studies have proposed solutions to mitigate the effects of heterogeneity. For instance, \scaffold~\citep{karimireddy2020scaffold} relies on additional control variables to correct the local client’s updates, while {\feddyn}~\citep{acar2021FedDyn} uses ADMM to align the global and local client solutions.
Albeit theoretically grounded, experimentally these methods are not sufficiently robust to handle extreme heterogeneity, low client participation, or large-scale problems, exhibiting slow convergence and instabilities~\citep{varno2022AdaBest}. 

Momentum-based FL methods show promise in addressing these challenges. By accumulating past update directions, momentum can help clients overcome the inconsistencies of local objectives introduced by heterogeneous data.
Several works explored incorporating momentum in FL, either at the server \citep{hsu2019measuring} or at client-level to correct local updates \cite{Ozfatura2021FedADC,xu2021fedcm}. Notably, \mime~\citep{karimireddy2021Mime} has been proposed as a framework to make clients mimic the updates of a centralized model trained on i.i.d. data by leveraging extra server statistics at the client side. 
While the theoretical advantages of momentum in FL have been demonstrated under \textit{full participation} \cite{cheng2024momentum}, it has been shown, both theoretically and experimentally, that its effectiveness is limited when client participation varies across training rounds. Indeed, the only momentum-based FL method that operates under \textit{partial participation} and does not rely on assumptions on bounded gradient heterogeneity, \textsc{SCAFFOLD-M} \citep{cheng2024momentum}, still relies on variance reduction - similarly to {\scaffold} - to contrast heterogeneity. As a result, it inherits both the limitations of variance reduction in deep learning \citep{defazioineffvr} and the drawbacks of {\scaffold} in FL, as highlighted by \cite{reddi2020FedOpt}.
In practice, as our work shows, existing momentum-based FL methods exhibit significant limitations in settings with low participation, high heterogeneity, and real-world large-scale problems.
Moreover, current approaches often incur increased communication costs due to the additional information exchanged to correct local updates \citep{karimireddy2020scaffold, karimireddy2021Mime, xu2021fedcm, Ozfatura2021FedADC}. 
This can be a significant drawback in communication-constrained environments, further hindering the practical adoption of FL in real-world applications and highlighting the critical need for more robust, effective, and communication-efficient FL algorithms.
In this work, we provide a theoretical justification for the ineffectiveness of classical momentum in FL demonstrating that due to the interplay of data heterogeneity and partial participation, the momentum term is updated with a biased estimate of the global gradient, reducing its effectiveness in correcting client drift.
To address these challenges, we propose a novel \textit{Generalized Heavy-Ball} (\ghb) formulation, which computes momentum as a decayed average of the past $\tau$ momentum terms. This design reduces bias toward the most recently selected clients, enabling convergence under arbitrary heterogeneity, not only in full participation but also in \textit{cyclic partial participation}. We then propose \fedhbm{},  an adaptive and communication-efficient instantiation of {\ghb}, and experimentally demonstrate its significantly improved performance over state-of-the-art methods.
\paragraph{Contributions.} We summarize our main results below.
\begin{itemize}[noitemsep,nolistsep,leftmargin=*]
    \item We present a novel formulation of momentum called \textit{Generalized Heavy-Ball} (\ghb) momentum, which extends the classical heavy-ball \citep{polyak1964heavyball}, and propose variants that are robust to heterogeneity and communication-efficient by design.
    \item We establish the theoretical convergence rate of {\ghb} for non-convex functions, extending the previous result of \cite{cheng2024momentum} of classical momentum, showing that {\ghb} converges under arbitrary heterogeneity even (and most notably) in \textit{cyclic partial participation}. 
    \item We empirically show that existing FL algorithms suffer severe limitations in extreme non-iid scenarios and real-world settings. In contrast, {\ghb} is extremely robust and achieves higher model quality with significantly faster convergence speeds than other client-drift correction methods.
\end{itemize}
\section{Related works}
\paragraph{The Problem of Statistical Heterogeneity.}
The detrimental effects of non-iid data in FL were first observed by \citep{zhao2018federated}, who proposed mitigating performance loss by broadcasting a small portion of public data to reduce the divergence between clients' distributions. Alternatively, \citep{li2019fedmd} uses server-side public data for knowledge distillation. Both approaches rely on the strong assumption of readily available and suitable data. 
Recognizing weight divergence as a source of performance loss, \fedprox~\citep{li2020FedProx} adds a regularization term to penalize divergence from the global model. Nevertheless, this was proved ineffective in addressing data heterogeneity~\cite{caldarola2022improving}. Other works \citep{kopparapu2020fedfmc, zaccone2022FedSeq, zeng2022fedGSP, Caldarola_2021_CVPR} explored grouping clients based on their data distribution to mitigate the challenges of aggregating divergent models.

\paragraph{Stochastic Variance Reduction in FL.}
Stochastic variance reduction techniques have been applied in FL \citep{chen2020FedSVRG, li2019FedDANE} with SCAFFOLD~\cite{karimireddy2020scaffold} providing for the first time convergence guarantees for arbitrarily heterogeneous data. The authors also shed light on the \textit{client-drift} of local optimization, which results in slow and unstable convergence. SCAFFOLD uses control variates to estimate the direction of the server model and clients' models %
and to correct the local update. This approach requires double the communication to exchange the control variates, and it is not robust enough to handle large-scale scenarios akin to cross-device FL \citep{reddi2020FedOpt, karimireddy2021Mime}.
Similarly, \textsc{SCAFFOLD-M} \citep{cheng2024momentum} integrates classical momentum into {\scaffold} to attain a slightly better convergence rate and maintain robustness to unbounded heterogeneity in partial participation. However, it still relies on variance reduction to tackle heterogeneity, inheriting and the same limitations of {\scaffold}, as the ineffectiveness of variance reduction in deep learning \citep{defazioineffvr}.

\paragraph{ADMM and Adaptivity.}
Other methods are based on the Alternating Direction Method of Multipliers \citep{chen2022OADMM, gong2022fedadmm, wang2022fedadmm}. In particular, \feddyn \citep{acar2021FedDyn} dynamically modifies the loss function such that the model parameters converge to stationary points of the global empirical loss. Although technically it enjoys the same convergence properties of SCAFFOLD without suffering from its increased communication cost, in practical cases it has displayed problems in dealing with pathological non-iid settings \citep{varno2022AdaBest}. Other works explored the use of adaptivity to speed up the convergence of FedAvg and reduce the communication overhead \citep{xie2019AdaAlter, reddi2020FedOpt}.

\paragraph{Use of Momentum as Local Correction.}
As a first attempt, \citet{hsu2019measuring} adopted momentum at server-side to reduce the impact of heterogeneity.
\edit{%
With a similar idea, \cite{kim2024communication} use the Nesterov Accelerated Gradient (NAG) to broadcast a lookahead global model and adds a proximal local penalty similar to {\fedprox} (additional details in \cref{app:discussion}).}
However, \edit{server-side momentum} has been proven of limited effectiveness under high heterogeneity, because the drift happens at the client level. This motivated later approaches that apply server momentum at each local step \citep{Ozfatura2021FedADC, xu2021fedcm}, and the more general approach by \citet{karimireddy2021Mime} to adapt any centralized optimizer to cross-device FL. It employs a combination of control variates and server optimizer state (\eg momentum) at each client step, which lead to increased communication bandwidth and frequency. A recent similar approach \citep{das2022faster} employs \edit{quantized} updates, still requiring significantly more computation client-side.
Rather differently from previous works, we propose a novel formulation of momentum specifically designed to take incorporate the descent information of clients selected at past $\tau$ rounds, which generalizes the classical heavy-ball \citep{polyak1964heavyball}. Most notably, we prove that our {\ghb} algorithm converges under arbitrary heterogeneity in cyclic partial participation - the first momentum method achieving this result without relying on other mechanisms like variance reduction.

\paragraph{Lowering Communication Requirements in FL.} 
Researchers have studied methods to reduce the memory needed for exchanging gradients in the distributed setting, for example by quantization \citep{alistarh2017QSGD} or by compression \citep{mishchenko2019distributed,koloskova2020Decentralized}.
In the context of FL, such ideas have been developed to meet the communication and scalability constraints \citep{reisizadeh2020FedPAQ}, and to take into account heterogeneity \citep{sattler2020STC}. Our work focuses on a novel formulation of momentum that takes into account the joint effects of heterogeneity and partial participation, and that has a heavy-ball structure allowing efficient use of the information already being sent in vanilla {\fedavg}, so additional techniques to compress that information remain orthogonal to our approach.
\section{Method}

\subsection{Setup}
In FL a server and a set $\S{}$ of clients collaboratively solve a learning problem, with $|\S{}|=K \in \mathbb{N^+}$. At each round $t \in [T]$, a fraction of $C \in (0,1]$ clients from $\S{}$ is selected to participate to the learning process: we denote this portion as $\S{t}\subseteq\S{}$. Each client $i \in \S{t}$ receives the server model $\theta_i^{t, 0} \equiv \theta^{t-1}$, and performs $J$ local optimization steps, using stochastic gradients $\tilde{g}_i^{t,j}$ evaluated on local parameters $\theta_i^{t,j-1}$ and a batch $d_{i,j}$, sampled from its local dataset $\mathcal{D}_i$. 
During local training, $\theta_i^{t,j}$ is the model of client $i$ at round $t$ after the $j$-th optimization step, while $\theta_i^{t} \equiv \theta^{t, J_i}$ is the model sent back to the server.
The server then aggregates the client updates $\tilde{g}_i^t:=(\theta^{t-1} - \theta_i^t)$, building \textit{pseudo-gradients} $\tilde{g}^t$ that are used to update the model \citep{reddi2020FedOpt}.

In this work we formalize the learning objective as a finite-sum optimization problem, where each function is the local clients' loss function with only access to that client's stochastic samples:
\begin{equation}
    \label{eq:objective_cross_silo}
    \arg\min_{\theta \in \mathbb{R}^d}
    \left[f(\theta):=\frac{1}{|\S{}|}\sum_{i\in \mathcal{S}} \left(f_i(\theta) :=\mathbb{E}_{d_i\sim{\mathcal{D}_i}}[f_i(\theta; d_i)] \right)\right]
\end{equation}

The analysis we provide in \cref{sec:theory:conv_rate} is based on the above formalization of the learning problem, which is commonly used to model \textit{cross-silo} FL settings, hence our theoretical results apply to that kind of scenarios.
In this context, we prove that \ghb{} converges under unbounded heterogeneity relying solely on momentum, expanding the understanding of its effectiveness compared to other methods that rely on \textit{variance reduction} or ADMM to achieve this result \citep{karimireddy2020scaffold, cheng2024momentum, acar2021FedDyn}.
On the other hand, it has been proved that it is not possible to guarantee convergence under arbitrary heterogeneity in the \textit{``stochastic''} or \textit{``streaming''} context which is commonly used for modeling \textit{cross-device} FL (see the lower bound in Theorem 3.4 of \cite{patel2022celsgd}), so considering it in our formal analysis would be of limited usefulness. 
Hence, we focus the theoretical analysis on the former case. Nevertheless, we also provide large-scale experimental validation on settings that adhere to the characteristics of  \textit{cross-device} FL to demonstrate that \ghb{} is suitable for such real-world scenarios (see \cref{sec:method_localghb}).

\subsection{Addressing Client Drift with Momentum}
One of the core propositions of federated optimization is to take advantage of local clients' work, by running multiple optimization steps on local parameters before synchronization. This has been proven effective for speeding up convergence when local datasets are i.i.d. with respect to a global distribution \citep{stich2018local, Lin2020, mcmahan2017FedAvg}, and is particularly important for improving communication efficiency, which is the bottleneck when learning in decentralized settings.
However, the statistical heterogeneity of clients' local datasets causes local models to \textit{drift} from the ideal trajectory of server parameters.
One way of addressing such drift is to use momentum during local optimization, based on the idea that a moving average of past server pseudo-gradients can correct local optimization towards the solution of the global problem.
At each round, FL methods based on momentum typically use the gradients of the selected clients, whether computed at local \citep{xu2021fedcm, Ozfatura2021FedADC} or global \citep{karimireddy2021Mime} parameters, to update the momentum term server-side. 

\paragraph{Partial Participation and Biased Momentum.} We claim that existing momentum-based methods overlook a critical aspect of federated learning: \textit{partial client participation}. 
Indeed, when only a portion of clients participate in the training rounds, the server pseudo-gradient used to update the momentum estimate can be biased towards the previously selected clients, hampering its corrective benefit to local optimization. 
This effect is particularly pronounced in settings with high data heterogeneity and low client participation (common in cross-device FL), where, as our experiments demonstrate, conventional momentum fails to correct the drift and improve over vanilla FedAvg.

\paragraph{Main Contribution.}
To address the challenges posed by partial participation, we propose a novel momentum-based approach that explicitly accounts for client sampling. Our key idea is to update the momentum term using a pseudo-gradient that approximates the true global gradient over all clients, including those not participating in the current round. By integrating the descent directions from past rounds into local updates, our method effectively mitigates the bias introduced by partial participation, resulting in a more accurate and robust momentum estimate. Notably, our momentum formulation retains a heavy-ball structure similar to classical momentum, enabling it to be used in FL without requiring to send additional data from server to clients, thus maintaining the same communication complexity as FedAvg.

\subsection{Generalized Heavy-Ball Momentum (GHBM)}
\label{sec:method_ghb}
In this section, we introduce our novel formulation for momentum, which we call \textit{Generalized Heavy-Ball Momentum} (\ghb).
First, we recall that classical momentum consists of a moving average of past gradients, and it is commonly expressed as in \cref{eq:mom_exp}, which can be equivalently expressed in a version commonly referred to as \textit{heavy-ball momentum} in \cref{eq:mom_hb} (see \cref{lemma:expr_hb}):

\begin{minipage}{0.55\textwidth}
{\small\textbf{\textsc{Heavy-Ball Momentum (HBM)}}}
\begin{align}
\label{eq:mom_exp}
    \tilde{m}^t &\leftarrow \beta \tilde{m}^{t-1} + \tilde{g}^t(\theta^{t-1}; \mathcal{D}^t) \\
    \theta^{t} &\leftarrow \theta^{t-1} - \eta \tilde{m}^t \nonumber
\end{align}
\end{minipage}
\begin{minipage}{0.4\textwidth}
\textbf{}
\begin{align}
    \label{eq:mom_hb}
    \tilde{m}^t &\leftarrow (\theta^{t-1}-\theta^{t-2}) \\
    \theta^{t} &\leftarrow \theta^{t-1} - \eta \tilde{g}^t(\theta^{t-1}; \mathcal{D}^t) + \beta\tilde{m}^t \nonumber
\end{align}    
\end{minipage}

Let us notice that, when applied to FL optimization, the gradient referred to above as $\tilde{g}^t$ is built from updates of clients $i \in \S{t}$ (and so on dataset $\mathcal{D}^t:=\cup_{i \in \S{t}}\mathcal{D}_i$), which are usually a small portion of all the clients participating in the training. Consequently, at each round the momentum is updated using a direction biased towards the distribution of clients selected in that round.
Indeed, the prerequisites for this update to reflect the objectives of the other clients are (i) iidness of local datasets or (ii) high client participation. Both conditions are rarely met in practice, and lead to ineffectiveness of existing momentum-based FL methods in realistic scenarios.
Our objective is to update the momentum term at each round with a reliable estimate of the gradient \wrt the global data distribution of all clients. 
In practice, the desired update rule for momentum would use the average gradient of all clients selected in the last $\tau$ rounds at current parameters $\theta^{t-1}$, as in \cref{eq:proto_mom_update}.

\begin{minipage}{0.45\textwidth}
{\small\textbf{\textsc{Desired Momentum Update}}}
\begin{align}
\label{eq:proto_mom_update}
    \tilde{m}^t &\leftarrow \beta \tilde{m}^{t-1} + \frac{1}{\tau}\sum_{k=t-\tau+1}^{t} \tilde{g}^{k}(\theta^{t-1}; \mathcal{D}^k)
\end{align} 
\end{minipage}
\qquad\qquad
\begin{minipage}{0.45\textwidth}
{\small\textbf{\textsc{Practical Momentum Update}}}
\begin{align}
\label{eq:proto_mom_update_real}
    \tilde{m}^t &\leftarrow \beta \tilde{m}^{t-1} + \frac{1}{\tau}\sum_{k=t-\tau+1}^{t} \tilde{g}^{k}(\theta^{k-1}; \mathcal{D}^k)
\end{align} 
\end{minipage}

While \cref{eq:proto_mom_update} cannot be implemented in partial participation because clients selected in rounds $k \in [t-\tau+1, t)$ do not have access to model parameters $\theta^{t-1}$, it is possible to reuse old gradients calculated at parameters $\theta^{k-1}$ as their approximation, as shown in \cref{eq:proto_mom_update_real}.
This introduces a \textit{lag} due to using outdated gradients. However,  as we show \cref{fig:grad_dev_tau}, the benefits of reducing heterogeneity greatly compensate for this lag, as increasing $\tau$ leads to a reduction in the deviation from the gradient calculated over all the clients.

\begin{figure}
    \centering
    \includegraphics[width=0.48\linewidth]{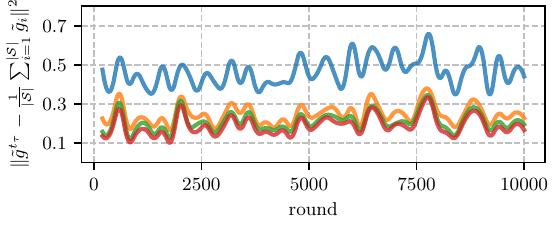}
    \quad
    \includegraphics[width=0.48\linewidth]{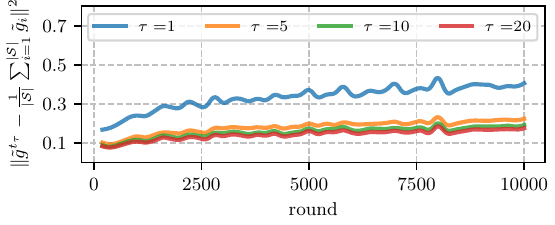} 
    \vspace{-0.3cm}
    \caption{\textbf{Reusing old gradients is beneficial, despite the introduced lag.} The plot shows the empirical measure of the deviation between (i) the average of the last $\tau$ server pseudo-gradient (at different parameters) and (ii) the server-pseudo gradient calculated over all the clients (at the same parameters), varying $\tau$, on \cifar{100} with {\resnet}, in non-iid ($\alpha=0$, left) and iid (\edit{$\alpha=10.000$}, right) settings. }
    \label{fig:grad_dev_tau}
    \vspace{-0.5cm}
\end{figure}

With this idea in mind, our proposed formulation consists of calculating the momentum term as the decayed average of past $\tau$ momentum terms, instead of explicitly using the server pseudo-gradients at the last $\tau$ rounds, as shown in \cref{eq:ghb_exp}.
This formulation is close to the update rule sketched in \cref{eq:proto_mom_update_real} and has the additional advantage of enjoying a heavy-ball form similar to \cref{eq:mom_hb} (see \cref{lemma:expr_ghb}), which will be useful for deriving communication-efficient FL algorithms.
In practice, the difference \wrt \cref{eq:mom_hb}  consists in considering a delta $\tau>1$:

\begin{minipage}{0.45\textwidth}
{\small\textbf{\textsc{Generalized Heavy-Ball Momentum (GHBM)}}}
\begin{align}
\label{eq:ghb_exp}
    \tilde{m}_\tau^t &\leftarrow \frac{1}{\tau}\sum_{k=1}^{\tau} \beta \tilde{m}_\tau^{t-k} + \tilde{g}^t(\theta^{t-1}; \mathcal{D}^t) \\
    \theta^t &\leftarrow \theta^{t-1} - \eta \tilde{m}_\tau^{t} \nonumber
\end{align} 
\end{minipage}
\begin{minipage}{0.538\textwidth}
\vspace{5pt}
\vphantom{{\small\textbf{\textsc{Generalized Heavy-Ball Momentum (GHBM)}}}}
\begin{align}
\label{eq:ghb}
    \tilde{m}_\tau^t &\leftarrow \frac{1}{\tau}\left(\theta^{t-1}-\theta^{t-\tau-1}\right) \\ 
    \theta^{t} &\leftarrow \theta^{t-1} - \eta \tilde{g}^t(\theta^{t-1}; \mathcal{D}^t) + \beta\tilde{m}_\tau^t \nonumber
\end{align} 
\end{minipage}

As it is trivial to notice, {\ghb} with $\tau=1$ recovers the classical momentum, hence it can be considered as a generalized formulation.
The {\ghb} term is then embedded into local updates using the heavy-ball form shown in \cref{eq:ghb}, leading to the following update rule:
\begin{align}
\label{eq:ghb_fl}
    \textsc{\pmb{Client step:}} \qquad \qquad &\theta_i^{t,j} \leftarrow \theta_i^{t, j-1} - \eta_l \tilde{g}_i^{t, j}(\theta_i^{t, j-1}; d_i^{t,j}) + \underbrace{\frac{\beta}{\tau J}\left(\theta^{t-1} - \theta^{t-\tau-1} \right)}_{\tau-\ghb}
\end{align}
\vspace{-5mm}
\paragraph{Discussion on $\pmb{\tau}$.}
\label{par:discussion_tau}
The $\tau$ hyperparameter in {\ghb} plays a crucial role, since it controls the number of server pseudo-gradients to average when estimating the update to the momentum term. 
Intuitively, when considering only the effect on heterogeneity reduction, the optimal value would be the one that provides the average over all clients.
Under proper assumptions on client sampling (see \cref{sec:assumptions}), this optimal value is $\tau=\nicefrac{1}{C}$, which is the inverse of the client participation rate.
As we demonstrate, this property is the key factor that allows {\ghb} to converge under arbitrary heterogeneity, achieving the same convergence rate in \textit{cyclic partial participation} as methods based on classical momentum attain in \textit{full participation} (see Sec. \ref{sec:theory:conv_rate}).
However, because {\ghb} reuses old gradients, it introduces a \textit{lag} that grows with $\tau$. Therefore, the optimal choice of $\tau$ comes with an inevitable trade-off between the heterogeneity reduction effect and other sources of error, which we discuss in \cref{par:overall_error_ghbm}.

\subsection{Communication Complexity of {\ghb} and Efficient Variants}
\label{sec:method_localghb}
\begin{wrapfigure}{Lh}{0.52\textwidth}
\vspace{5mm}
\begin{minipage}{0.52\textwidth}
\input{algorithms/ghbm_main}\end{minipage}
\vspace{-17mm}
\end{wrapfigure}
As it is possible to notice from \cref{algo:ghbm_main}, {\ghb} requires the server to additionally send the past model $\theta^{t-\tau-1}$, which is used to calculate the momentum term in \cref{eq:ghb_fl}. Alternatively, the server could send the momentum term $\tilde{m}^t_\tau$: in both cases, this introduces a communication overhead of $1.5\times$ {\wrt} {\fedavg}, as momentum is usually applied to all model parameters.
However, this overhead can be avoided by leveraging the observation that the choice of $\tau=\nicefrac{1}{C}$ is expected to be optimal. Indeed, it is sufficient to notice that, if clients participate cyclically, \ie, the period between each subsequent sampling is equal for all clients, and the frequency at which each client is selected for training is exactly $\nicefrac{1}{C}$. Notice that this is still true on average under uniform client sampling, \ie, calling $\tau_i$ the sampling period for client $i$, $\expect{\tau_i}=\tau=\nicefrac{1}{C}$. 

\vspace{2mm}
Leveraging those observations and exploiting the fact that {\ghb} has an equivalent heavy-ball form, the additional requirement on communication can be traded for a requirement on persistent storage at the clients, allowing them to keep the model received by the server across rounds, as shown in \cref{algo:ghbm_main}.  In this algorithm, which we call \textbf{\localghb}, $\tau_i$ is adaptive and determined stochastically by client participation.
The space complexity is constant in the size of model parameters for the clients and the communication complexity is the same as {\fedavg}.
We empirically found that performance can be further improved by considering $\theta_{i,j}^t$ instead of $\theta^{t-1}$ and $\theta_{i}^{t-\tau_i}$ instead of $\theta^{t-\tau_i-1}$ when calculating $\tilde{m}^t_{\tau_i}$. This final communication-efficient
update rule is named \textbf{\fedhbm}.
\newpage
\paragraph{Applicability of {\ghb}-based Algorithms in FL Scenarios.}
Although based on the same principle, our algorithms are suitable for different scenarios.
Similarly to algorithms proposed for cross-device FL \citep{karimireddy2021Mime}, {\ghb} uses \textit{stateless} clients, with the main $\tau$ hyperparameter controlled by the server. 
This ensures that clients always apply a momentum term consistent with the {\ghb} update rule, differently from algorithms that require clients participating in multiple rounds to adhere to their formulation, such as {\scaffold} and {\feddyn}. 
This is particularly important when the number of clients is large and a small portion of them participates in each round, and it is why, in our large-scale setting, these methods fail to converge.
These design choices make our algorithm in practice suitable for cross-device FL, where it offers significant advantages, as experimentally validated in \cref{sec:experiments:large_scale}.
On the other hand, {\fedhbm} and {\localghb} take advantage of the fact that clients participate multiple times in the training process to remove the need to send the momentum term from the server, recovering the same communication complexity of {\fedavg}. As a result, clients in these methods are \textit{stateful} - requiring to maintain variables across rounds \citep{kairouz2021advances} - and are therefore best suited for scenarios akin to \textit{cross-silo} FL.
\section{Theoretical Discussion}
\begin{table}[t]
\centering
\caption{\small 
\textbf{Comparison of convergence rates of FL algorithms.} {\ghb} improves the state-of-art by attaining, in \textit{cyclic partial participation}, the same rate of classical momentum in \textit{full participation}.
Remind that $L$ is the smoothness constant of objective functions, $\Delta=f(\theta^0)-\min_\theta f(\theta)$ is the initialization gap, $\sigma^2$ is the clients' gradient variance, $|\S{}|$ is the number of clients, $C$ is the participation ratio, $J$ is the number of local steps per round, and $T$ is the number of communication rounds.
$\zeta:=\sup_\theta\|\nabla f(\theta)\|$ and $G$ are uniform bounds of gradient norm and dissimilarity.
}
    \label{tab:conv_rates}

    \resizebox{\linewidth}{!}{
    \begin{threeparttable}
    \begin{tabular}{lllc}
    \toprule
    {\bf Algorithm}  & {\centering\bf Convergence Rate  $\frac{1}{T}\sum_{t=1}^{T}\expect{\|\nabla f(\theta^t)\|^2}\lesssim$} & \thead{\bf Additional \\ \bf Assumptions} & \thead{\bf Partial \\ \bf participation?}\vspace{1mm} \\ 
    \midrule \vspace{-3mm}\\
    \makecell[l]{
    {\sc FedAvg}\\ 
    \quad \citep{yang2021achieving}} & $\left(\frac{L\Delta\sigma^2}{|\S{}|JT}\right)^{1/2} + \frac{L\Delta}{T}$ & Bounded hetero.\tnote{1} & \no \\

    \quad \citep{yang2021achieving} & $\left(\frac{L\Delta J\sigma^2}{|\S{}|CT}\right)^{1/2} + \frac{L\Delta}{T}$ & Bounded hetero.\tnote{1} & \yes \\

    \makecell[l]{
    {\sc FedCM} \\ \quad\citep{xu2021fedcm}} & ${\left(\frac{L\Delta ({\sigma^2}+|\S{}|CJ\zeta^2)}{|\S{}|CJT}\right)^{1/2}+\left(\frac{L\Delta ({\sigma}/{\sqrt{J}}+ \sqrt{|\S{}|C}(\zeta+G)}{\sqrt{|\S{}|C}T}\right)^{2/3}}$ & {\makecell[l]{Bounded grad. \\ Bounded hetero.}} & \yes \\

    \quad \citep{cheng2024momentum} 
    & $\left(\frac{L\Delta \sigma^2}{|\S{}|JT}\right)^{1/2}+\frac{L\Delta}{T}$ &  $-$ & \no \vspace{0mm}  \\

    \makecell[l]{
    \textsc{SCAFFOLD-M} \\ \quad\citep{cheng2024momentum}} & $\left(\frac{L\Delta \sigma^2}{|\S{}|CJT}\right)^{1/2}+\frac{L\Delta}{T} \left(1 + \frac{|\S{}|^{2/3}}{|\S{}|C} \right)$ & $-$ & \yes \\

    \rowcolor{Cerulean!30!white}
    {\bf {\ghb} (\cref{thm:GHBM})}
    & $\left(\frac{L\Delta \sigma^2}{|\S{}|JT}\right)^{1/2}+\frac{L\Delta}{T}$ &  Cyclic participation & \yes \vspace{0mm}  \\
    
    \bottomrule
    \end{tabular}
    \begin{tablenotes}
        \footnotesize
        \item[1] The local learning rate vanishes to zero when gradient dissimilarity is unbounded, \ie, $G\to \infty$.
    \end{tablenotes}
    \end{threeparttable}
    }
\vspace{-5mm}
\end{table}
In this section, we establish the theoretical foundations of our algorithms. Our analysis reveals that: (i) the momentum update rule implemented by {\ghb} in \cref{eq:proto_mom_update_real} approximates an update with global gradient, with $\tau$ controlling the trade-off between heterogeneity reduction and the \textit{lag} due to using old gradients; (ii) thanks to this algorithmic design choice, {\ghb} converges under arbitrary heterogeneity even in (cyclic) partial participation. The proofs are deferred to \cref{appendix:theory}.

\subsection{Assumptions}
\label{sec:assumptions}
To prove our results we rely on notions of stochastic gradient with bounded variance (\ref{assum:unbias_bvar}) and the smoothness of the clients' objective functions (\ref{assum:smoothness}), which are common in deep learning.
Additionally, to facilitate comparisons with other algorithms that require it, we introduce the Bounded Gradient Dissimilarity (BGD) (\cref{assum:bounded_gd}). This assumption, commonly used in FL literature, provides an upper bound on the dissimilarity of clients' objectives.
While our main result in \cref{thm:GHBM} does not require this assumption, we use it to demonstrate the heterogeneity reduction effect of \ghb{}, and to show that, under the proper choice of $\tau$, BGD is not necessary.
Finally, we introduce the additional assumption that clients participate following a cyclic pattern (\cref{assum:cyclic_part}). Notably, this assumption is only required for obtaining our convergence rate and serves as a technical detail needed to deterministically quantify the contributions of the clients to the \ghb{} momentum term \edit{(see \cref{fig:cyclic_illustration} in the Appendix for an illustration of cyclic participation).}

\begin{minipage}[t]{0.45\textwidth}
\vspace{-13pt}
\begin{assumption}[Unbiasedness and bounded variance of stochastic gradient]
    \label{assum:unbias_bvar}
    \begin{align*}
        \expectd{\tilde{g}_i(\theta; d_i)}{i} = g_i(\theta; \mathcal{D}_i)& \\
        \expectd{\normsq{\tilde{g_i}(\theta; d_i) - g_i(\theta; \mathcal{D}_i)}}{i} \leq \sigma^2&
    \end{align*}
\end{assumption}
\end{minipage}\quad\quad
\begin{minipage}[t]{0.5\textwidth}
\vspace{-13pt}
\begin{assumption}[Smoothness of client's objectives]
Let it be a constant $L>0$, then for any $i,\, \theta_1,\, \theta_2$ the following holds:
    \label{assum:smoothness}
    \begin{align*}
    &\normsq{g_i(\theta_1) - g_i(\theta_2)} \leq L^2\normsq{\theta_1 - \theta_2} 
    \end{align*}
\end{assumption}
\end{minipage}

\begin{minipage}[t]{0.45\textwidth}
\vspace{0pt}
\begin{assumption}[Bounded Gradient Dissimilarity]
There exist a constant $G\geq0$ such that, $\forall i,\, \theta$: %
    \label{assum:bounded_gd}
    \begin{equation*}
        \frac{1}{|\S{}|} \sum_{i=1}^{|\S{}|}\normsq{g_i(\theta) - g(\theta)} \leq G^2
    \end{equation*}
\end{assumption}
\end{minipage}\quad\quad
\begin{minipage}[t]{0.5\textwidth}
\vspace{0pt}
\begin{assumption}[Cyclic Participation]
    \label{assum:cyclic_part}
    Let $\S{t}$ be the set of clients sampled at any round $t$. A sampling strategy is \textit{``cyclic``} with period $p=\nicefrac{1}{C}$ if:
    \begin{align*}
     \S{t} &= \S{t-p} &\forall\; t&>p \quad\land\quad
     \S{k} \cap \S{t} = \varnothing &\forall\; k \in (t-p, t)
    \end{align*}
\end{assumption}
\end{minipage}

\vspace{10pt}
\begin{remark}
\textbf{Our main result (\cref{thm:GHBM}) does not require the BGD assumption}: indeed we show that, under a proper choice of $\tau$, the effect of heterogeneity is completely removed from the convergence rate. 
\end{remark}
\vspace{5pt}
\begin{remark}
While \cref{thm:GHBM} relies on \cref{assum:cyclic_part}, \textbf{cyclic participation is not enforced in the experiments}, where we select clients randomly and uniformly, ensuring fair comparison with algorithms that do not need this assumption in their analysis. For a more comprehensive discussion on the role of the cyclic participation assumption in our work, we refer the reader to \cref{sec:theory:discussion_cyclic}.
\end{remark}

\subsection{Overcoming Bounded Gradient Dissimilarity in Partial Participation}
\label{sec:theory_ghb}
In this section, we explain the core elements used in our theory to guarantee convergence under arbitrary heterogeneity for {\ghb}.

\paragraph{Bounding the Participation-induced Heterogeneity.}
Let us recall the main idea behind {\ghb}: because of partial participation, at each round classical momentum is updated using a direction biased towards the distribution of clients selected in that round.
As a result, recalling that {\ghb} recovers classical momentum when $\tau=1$, we begin by bounding the effect of heterogeneity induced by partial client participation on the momentum estimate as a function of $\tau$.
To this end, let us provisionally adopt \cref{assum:bounded_gd} and assume we perform federated optimization with a single full gradient step in partial participation and consider the momentum update in \cref{eq:proto_mom_update}. In this setup, the following lemma holds:
\vspace{4pt}
\begin{lemma}[Deviation of $\tau$-averaged gradient from true gradient]
\label{lemma:tau_ghb}
Define $\S{t}_\tau:=\cup_{k=0}^{\tau-1}\S{t-k}$ as the set of clients selected in the last $\tau$ rounds, and $g^{t_\tau} := \nicefrac{1}{|\S{t}_\tau|}\sum_{i=1}^{|\S{t}_\tau|} g_i^t(\theta^{t-1})$ as the average server pseudo-gradient. 
The approximation of a gradient over the last $\tau$ rounds $g^{t_\tau}$ \wrt the true gradient is quantified by the following:
\begin{equation*}
    \expect{\normsq{g^{t_{\tau}} - \nabla f(\theta^{t-1})}} \leq 8 \expect{ \left(\frac{|\S{}| - |\S{t}_{\tau}|}{|\S{}|}\right)^2} \left( G^2 + \normsq{\nabla f(\theta^{t-1})} \right)
\end{equation*}
\end{lemma}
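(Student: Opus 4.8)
\textbf{Proof plan for Lemma~\ref{lemma:tau_ghb}.}
The plan is to expand the squared deviation by adding and subtracting the partial sum of client gradients over $\S{t}_\tau$, then bound each piece using the Bounded Gradient Dissimilarity assumption. First I would write $g^{t_\tau} = \frac{1}{|\S{t}_\tau|}\sum_{i \in \S{t}_\tau} g_i(\theta^{t-1})$ and compare it to $\nabla f(\theta^{t-1}) = \frac{1}{|\S{}|}\sum_{i \in \S{}} g_i(\theta^{t-1})$. The natural decomposition is to split the full average into the contribution from $\S{t}_\tau$ and the complement $\S{} \setminus \S{t}_\tau$: writing $m := |\S{t}_\tau|$ and $K := |\S{}|$, one has $\nabla f(\theta^{t-1}) = \frac{m}{K} g^{t_\tau} + \frac{1}{K}\sum_{i \notin \S{t}_\tau} g_i(\theta^{t-1})$, so that $g^{t_\tau} - \nabla f(\theta^{t-1}) = \frac{K-m}{K} g^{t_\tau} - \frac{1}{K}\sum_{i \notin \S{t}_\tau} g_i(\theta^{t-1})$. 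The coefficient $\frac{K-m}{K} = \frac{|\S{}| - |\S{t}_\tau|}{|\S{}|}$ is exactly the quantity appearing in the bound, which confirms this is the right split.

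Next I would apply the elementary inequality $\|a - b\|^2 \le 2\|a\|^2 + 2\|b\|^2$ to the two terms. For the first term, $\left(\frac{K-m}{K}\right)^2 \|g^{t_\tau}\|^2$; I would then bound $\|g^{t_\tau}\|^2$ by adding and subtracting $\nabla f(\theta^{t-1})$ again and using BGD — specifically $\|g^{t_\tau}\|^2 \le 2\|g^{t_\tau} - \nabla f(\theta^{t-1})\|^2 + 2\|\nabla f(\theta^{t-1})\|^2$, but this reintroduces the target quantity, so instead it is cleaner to bound $\|g^{t_\tau} - \nabla f(\theta^{t-1})\|$ directly via Jensen: $g^{t_\tau} - \nabla f$ is itself an average-type object. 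Actually the slickest route is: since $g^{t_\tau}$ is an average over $\S{t}_\tau$ of the $g_i$, and using that $\frac{1}{m}\sum_{i \in \S{t}_\tau}\|g_i(\theta^{t-1}) - \nabla f(\theta^{t-1})\|^2 \le \frac{K}{m} \cdot G^2$ (bounding the partial sum by the full sum, which is at most $K G^2$ by Assumption~\ref{assum:bounded_gd}), Jensen's inequality gives $\|g^{t_\tau} - \nabla f(\theta^{t-1})\|^2 \le \frac{K}{m} G^2$. For the second term, $\frac{1}{K^2}\|\sum_{i \notin \S{t}_\tau} g_i\|^2 \le \frac{K-m}{K^2}\sum_{i \notin \S{t}_\tau}\|g_i\|^2$ by Cauchy–Schwarz, and each $\|g_i(\theta^{t-1})\|^2 \le 2\|g_i - \nabla f\|^2 + 2\|\nabla f\|^2$, summing to at most $(K-m)(2 G^2 \cdot \frac{K}{K-m} + 2\|\nabla f\|^2)$ after again bounding the partial dissimilarity sum.

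Combining these, collecting the factors of $G^2$ and $\|\nabla f(\theta^{t-1})\|^2$, and using $\frac{K-m}{K} \le 1$ wherever a looser bound is needed to force the common factor $\left(\frac{K-m}{K}\right)^2$, should yield a bound of the form $c_1 \left(\frac{K-m}{K}\right)^2 (G^2 + \|\nabla f\|^2)$ for some absolute constant; the claimed constant $8$ gives enough slack to absorb the various factors of $2$. Finally I would take expectations over the client sampling (which makes $m = |\S{t}_\tau|$ random) and pull the $(G^2 + \|\nabla f(\theta^{t-1})\|^2)$ factor out — here I would note either that $\theta^{t-1}$ is independent of the current sampling, or simply keep it inside the expectation as the statement does. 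The main obstacle is bookkeeping: making sure the $\frac{K}{m}$ factors arising from Jensen on the partial average cancel correctly against the $\frac{K-m}{K}$ prefactors so that the final bound depends on the sampling \emph{only} through $\left(\frac{|\S{}|-|\S{t}_\tau|}{|\S{}|}\right)^2$ and not through an unwanted $\frac{1}{m}$, and verifying that constant $8$ is actually sufficient rather than needing to be enlarged.
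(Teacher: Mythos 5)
Your decomposition is the same as the paper's: writing $m=|\S{t}_\tau|$ and $K=|\S{}|$, your split $g^{t_\tau}-\nabla f(\theta^{t-1})=\frac{K-m}{K}g^{t_\tau}-\frac{1}{K}\sum_{i\notin\S{t}_\tau}g_i(\theta^{t-1})$ is algebraically identical to the paper's grouping of the difference into a sum over $\S{}\cap\S{t}_\tau$ with coefficient $\left(\frac{1}{m}-\frac{1}{K}\right)$ and a sum over $\S{}\setminus\S{t}_\tau$ with coefficient $\frac{1}{K}$, and the subsequent tools (Young's inequality, add-and-subtract $\nabla f$, the relaxed triangle inequality) coincide; your $\normsq{\nabla f}$ bookkeeping does close at exactly $8\left(\frac{K-m}{K}\right)^2\normsq{\nabla f(\theta^{t-1})}$. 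The gap is precisely the obstacle you flag at the end, and it does not resolve: by bounding each partial dissimilarity sum by the full sum, $\sum_{i\in A}\normsq{g_i-\nabla f}\leq K G^2$, your first term keeps an uncancelled factor $\frac{K}{m}$, and, more seriously, your complement term contributes roughly $4\,\frac{K-m}{K}\,G^2$, which is \emph{linear} rather than quadratic in the deficit fraction and so cannot be absorbed into $8\left(\frac{K-m}{K}\right)^2 G^2$ (for $\frac{K-m}{K}$ small, the linear term dominates the quadratic target). As written, your route proves only a strictly weaker bound than the lemma claims.

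The paper closes this step differently: it bounds each subset's dissimilarity sum proportionally to its cardinality, $\sum_{i\in\S{}\cap\S{t}_\tau}\normsq{g_i-\nabla f}\leq|\S{}\cap\S{t}_\tau|\,G^2$ and $\sum_{i\in\S{}\setminus\S{t}_\tau}\normsq{g_i-\nabla f}\leq|\S{}\setminus\S{t}_\tau|\,G^2$, i.e., it effectively applies \cref{assum:bounded_gd} as a per-client bound on $\normsq{g_i-\nabla f}$. With that reading, your own computation finishes immediately: both pieces become $4\left(\frac{K-m}{K}\right)^2\left(G^2+\normsq{\nabla f(\theta^{t-1})}\right)$ and sum to the stated constant $8$. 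Note also that this is not mere constant-chasing that finer bookkeeping could repair: under the literal averaged form of the assumption, a one-outlier configuration (a single excluded client carrying all the dissimilarity, so $m=K-1$) makes the deviation of order $G^2/K$ while the claimed bound is of order $G^2/K^2$, so the quadratic dependence genuinely requires the per-subset, cardinality-proportional application of the dissimilarity bound that the paper's proof uses.
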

\vspace{-8pt}
\cref{lemma:tau_ghb} shows that, as $\tau$ increases, the effect of heterogeneity reduces quadratically as the difference between the $|\S{t}|$ and $|\S{t}_\tau|$ approaches to zero.
The deviation is exactly zero when $\S{t}_\tau=\S{}$, \ie the set of clients selected in the last $\tau$ rounds includes all the clients.
While under uniform sampling it is unlikely to realize this condition because of the non-zero probability of sampling the same clients over consecutive rounds, under cyclic participation it is possible to make the above error exactly equal to zero
\footnote{An alternative approach could keep track of gradients of each client and then compute $g^{t_\tau}$ such that it includes the latest gradients of all clients. In that case, cyclic participation is not necessary, but calculating the needed $\tau$ is an instance of the \textit{Batched Coupons Collector} problem~\citep{coupon0, coupon1, coupon2}, for which a closed form solution is unknown. That approach would be unrealistic to implement so, motivated by the strong empirical success of {\ghb}, in our analysis we prefer adopting an additional assumption, and providing guarantees under cyclic client participation}.
\vspace{4pt}
\begin{corollary}
\label{corollary:tau_ghb_cyclic}
Consider \cref{lemma:tau_ghb} and further assume that, at each round of FL training, clients are sampled according to a rule satisfying \cref{assum:cyclic_part}. Then, for any $\tau \in \left(0, \frac{1}{C}\right]$:%
\begin{equation*}
    \expect{\normsq{g^{t_{\tau}} - \nabla f(\theta^{t-1})}} \leq 8  \left(1-\tau C\right)^2  \left( G^2 + \normsq{\nabla f(\theta^{t-1})} \right)
\end{equation*}
\end{corollary}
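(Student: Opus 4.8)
The plan is to obtain \cref{corollary:tau_ghb_cyclic} as an immediate consequence of \cref{lemma:tau_ghb}: the only thing left to do is to evaluate the factor $\expect{\bigl(\tfrac{|\S{}| - |\S{t}_\tau|}{|\S{}|}\bigr)^2}$ appearing on the right-hand side of the lemma. The point I would emphasise is that, under cyclic participation, this factor is no longer a random quantity but a fixed constant, so the expectation trivialises and everything reduces to counting the size of $\S{t}_\tau$.

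First I would argue that, whenever $\tau \le 1/C = p$, the $\tau$ sampled sets $\S{t}, \S{t-1}, \dots, \S{t-\tau+1}$ that make up $\S{t}_\tau = \cup_{k=0}^{\tau-1}\S{t-k}$ are pairwise disjoint. Given two of them, say $\S{t-i}$ and $\S{t-j}$ with $0 \le i < j \le \tau-1$, we have $0 < j-i \le \tau - 1 < p$, hence $t-j \in (\,t-i-p,\; t-i\,)$, and the disjointness clause of \cref{assum:cyclic_part} yields $\S{t-i}\cap\S{t-j}=\varnothing$. This is precisely where the restriction $\tau \le 1/C$ is used: a larger window would wrap past one period and reintroduce repeated clients, since $\S{t}=\S{t-p}$. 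Consequently $|\S{t}_\tau| = \sum_{k=0}^{\tau-1} |\S{t-k}|$, and because the cyclic rule reuses the same participation block every period, each term equals $|\S{t}| = KC$ (in the regime $KC\ge1$), so $|\S{t}_\tau| = \tau K C = \tau C\,|\S{}|$.

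Plugging this in gives $\tfrac{|\S{}| - |\S{t}_\tau|}{|\S{}|} = 1 - \tau C$, a deterministic constant that is moreover nonnegative exactly because $\tau \le 1/C$; hence $\expect{(1-\tau C)^2} = (1-\tau C)^2$, and substituting into \cref{lemma:tau_ghb} produces the stated inequality. As a sanity check, at the optimal choice $\tau = 1/C$ the prefactor vanishes, recovering the claim made in the text that the participation-induced heterogeneity can be driven to exactly zero. I do not anticipate any real obstacle here: the argument is essentially a substitution, and the only step needing care is the combinatorial bookkeeping that establishes disjointness of the windowed sets (together with the harmless edge case in which $KC$ is not an integer, where $|\S{t}|=\max(1,KC)$ replaces $KC$).
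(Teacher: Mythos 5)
Your proposal is correct and follows essentially the same route as the paper: apply \cref{lemma:tau_ghb}, use \cref{assum:cyclic_part} to conclude that the $\tau\le\nicefrac{1}{C}$ windowed sets are pairwise disjoint so that $|\S{t}_\tau|=\tau C|\S{}|$, which makes the prefactor the deterministic constant $(1-\tau C)^2$ and the expectation trivial. Your explicit disjointness bookkeeping and the non-integer $KC$ edge case are details the paper leaves implicit, but the argument is identical in substance.
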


\begin{remark}
    Under \cref{assum:cyclic_part} and $\tau=\nicefrac{1}{C}$, \textbf{the BGD assumption (\ref{assum:bounded_gd}) is not necessary}, as the two terms in the left-hand side (LHS) of the above inequality are the same by definition.
\end{remark}
\vspace{-4pt}
\paragraph{Bounding the Overall Error in Momentum Update.}
\label{par:overall_error_ghbm}
In the previous paragraph, we established the role of $\tau$ in {\ghb} for counteracting heterogeneity and derived its optimal value \wrt partial client participation. However, our analysis assumed that all clients selected in the last $\tau$ rounds compute a full gradient on the same server parameters. As discussed in \cref{sec:method_ghb}, a more realistic update rule for momentum would reuse past gradients as in \cref{eq:proto_mom_update_real}, computed at local parameters. This is because clients selected in rounds $k \in [t-\tau+1, t)$ do not have access to model parameters $\theta^{t-1}$.
As a result, increasing $\tau$ introduces additional sources of error to the momentum term, which we quantify in the following lemma.
\vspace{4pt}
\begin{lemma}[Bounded Error of Momentum Update]
\label{lemma:delayed_mgrad}
Consider the update rule in \cref{eq:proto_mom_update_real}, and call {\small $\tilde{g}^{t_{\tau}} = \frac{1}{\tau} \sum_{k=t-\tau+1}^{t} \frac{1}{|\S{k}|J} \sum_{i=1}^{|\S{k}|} \sum_{j=1}^{J} \tilde{g}_i^{k,j}({\theta_i^{k,j-1}})$} the server stochastic average pseudo-gradient over the last $\tau$ global steps \edit{and the average server pseudo-gradient at current parameters as {\small $g^{t_\tau} := \nicefrac{1}{|\S{t}_\tau|}\sum_{i=1}^{|\S{t}_\tau|} g_i^t(\theta^{t-1})$}}. Let also define the client drift {\small $\drift{t}:=\frac{1}{|\S{}|J} \sum_{j=1}^{J} \sum_{i=1}^{|\S{}|} \mathbb{E}\|\theta_{i}^{t,j} - \theta^{t-1}\|^2$} and the error of server update {\small $\svrvar{t}:= \mathbb{E}\|\nabla f(\theta^{t-1}) - \tilde{m}^{t+1}_\tau\|^2$}.
Under \cref{assum:unbias_bvar,assum:smoothness,assum:cyclic_part}, it holds that:
\begin{equation*}
    \expect{\normsq{\tilde{g}^{t_\tau} - g^{t_\tau} }} \leq 3\bigg(
    \underbrace{\frac{\sigma^2}{|\S{t}_\tau|J}\vphantom{\sum^{t}_{k=t-\tau+1}\drift{k}}}_{\text{(a) Noise}} + 
    \underbrace{\frac{L^2}{\tau} \sum^{t}_{k=t-\tau+1}\drift{k}}_{\text{(b) Client drift}} + 
    \underbrace{2L^2\eta^2 \sum_{k=t-\tau+1}^{t-1}\left(\expect{\normsq{\nabla f(\theta^{k-1})}} + \svrvar{k} \right) }_{\text{(c) Gradient lag}} \bigg)
\end{equation*}
\end{lemma}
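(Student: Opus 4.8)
The plan is to insert two intermediate quantities between $\tilde{g}^{t_\tau}$ and $g^{t_\tau}$ so that the error decomposes exactly into the three labelled sources. Let $\bar{g}^{t_\tau} := \frac{1}{\tau}\sum_{k=t-\tau+1}^{t}\frac{1}{|\S{k}|J}\sum_{i\in\S{k}}\sum_{j=1}^{J} g_i(\theta_i^{k,j-1})$ be the same weighted average with the full local gradients in place of the stochastic ones (still evaluated at the local iterates), and let $\hat{g}^{t_\tau} := \frac{1}{\tau}\sum_{k=t-\tau+1}^{t}\frac{1}{|\S{k}|}\sum_{i\in\S{k}} g_i(\theta^{k-1})$ be the full local gradients evaluated at the round-start global models. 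Writing $\tilde{g}^{t_\tau}-g^{t_\tau} = (\tilde{g}^{t_\tau}-\bar{g}^{t_\tau}) + (\bar{g}^{t_\tau}-\hat{g}^{t_\tau}) + (\hat{g}^{t_\tau}-g^{t_\tau})$ and applying $\normsq{a+b+c}\le 3(\normsq{a}+\normsq{b}+\normsq{c})$ produces the global prefactor $3$ and isolates the noise, client-drift, and gradient-lag terms respectively.

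For the noise term I would use \cref{assum:unbias_bvar}: conditioning each stochastic gradient on the iterate $\theta_i^{k,j-1}$ at which it is evaluated, the increments $\tilde{g}_i^{k,j}(\theta_i^{k,j-1})-g_i(\theta_i^{k,j-1})$ are zero-mean, and by the tower property all cross terms in the expanded square vanish, leaving only the diagonal. By \cref{assum:cyclic_part} the rounds $k=t-\tau+1,\dots,t$ contribute disjoint client sets whose union is $\S{t}_\tau$ with $|\S{t}_\tau|=\tau|\S{k}|$; since each of the $|\S{t}_\tau|J$ summands then carries weight $(|\S{t}_\tau|J)^{-1}$, bounded variance gives $\expect{\normsq{\tilde{g}^{t_\tau}-\bar{g}^{t_\tau}}}\le \sigma^2/(|\S{t}_\tau|J)$, \ie term (a).

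For the drift term I would apply Jensen once over the $\tau$ rounds and once over the $|\S{k}|J$ inner summands, then \cref{assum:smoothness} to replace $\normsq{g_i(\theta_i^{k,j-1})-g_i(\theta^{k-1})}$ by $L^2\normsq{\theta_i^{k,j-1}-\theta^{k-1}}$; under \cref{assum:cyclic_part} the per-round averages over $\S{k}$ aggregate into the population average that defines $\drift{k}$ (up to the harmless index shift $j\mapsto j-1$), giving $\expect{\normsq{\bar{g}^{t_\tau}-\hat{g}^{t_\tau}}}\le \tfrac{L^2}{\tau}\sum_{k=t-\tau+1}^{t}\drift{k}$, \ie term (b). For the lag term the $k=t$ summand is zero; Jensen and \cref{assum:smoothness} bound the remainder by $\tfrac{L^2}{\tau}\sum_{k=t-\tau+1}^{t-1}\normsq{\theta^{k-1}-\theta^{t-1}}$, after which I would telescope each server displacement along the \ghb{} update of \cref{eq:ghb} and use $\normsq{\tilde{m}^{\ell+1}_\tau}\le 2\normsq{\nabla f(\theta^{\ell-1})}+2\svrvar{\ell}$ together with the $1/\tau$ damping built into the \ghb{} momentum; accounting for how often each increment appears inside the window collapses the resulting sums into $2L^2\eta^2\sum_{k=t-\tau+1}^{t-1}(\normsq{\nabla f(\theta^{k-1})}+\svrvar{k})$, \ie term (c). Adding (a)$+$(b)$+$(c) with the factor $3$ yields the claim.

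The main obstacle will be the interaction with cyclic participation in steps (b) and (c): one has to use \cref{assum:cyclic_part} (with $\tau=\nicefrac{1}{C}$) to argue that over a full length-$\tau$ window the subsets $\S{k}$ partition $\S{}$, so that sums weighted by $1/|\S{k}|$ over the window legitimately become the population averages matching the definitions of $g^{t_\tau}$ and $\drift{k}$, all while keeping the index shifts ($j$ vs.\ $j-1$, $k-1$ vs.\ $k$, $\ell+1$) consistent. A related subtlety is that the statement is deliberately left in terms of $\drift{k}$ and $\svrvar{k}$, which are themselves controlled only later in the coupled recursion of the full convergence proof; this lemma is one link in that chain, so I would keep $\drift{}$ and $\svrvar{}$ symbolic rather than attempt to close them here, and I would be especially careful in the lag term not to lose a factor of $\tau$, which is precisely where the $1/\tau$ in the \ghb{} momentum must be exploited.
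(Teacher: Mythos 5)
Your proposal is correct and follows essentially the same route as the paper's proof: the paper likewise inserts $g_i(\theta_i^{k,j-1})$ and $g_i(\theta^{k-1})$ between $\tilde{g}_i^{k,j}(\theta_i^{k,j-1})$ and $g_i(\theta^{t-1})$, applies the relaxed triangle inequality to get the factor $3$, bounds the zero-mean stochastic part by $\sigma^2/(|\mathcal{S}^{t}_\tau|J)$ using cyclic participation, the second piece by smoothness and the definition of $\mathcal{U}_k$, and the lag piece by expanding $\theta^{k-1}-\theta^{t-1}$ into per-round increments $\eta\tilde{m}^{\ell+1}_\tau$ bounded via $2(\|\nabla f(\theta^{\ell-1})\|^2+\mathcal{E}_\ell)$ with the $(t-k)/\tau$ counting. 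The subtleties you flag (the $\mathcal{S}^k$-versus-$\mathcal{S}$ averaging in the drift term and the increment-counting in the lag term) are handled in the paper at the same level of detail you describe, so no genuinely different ideas are needed.
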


\cref{lemma:delayed_mgrad} shows that the error affecting the {\ghb} momentum update rule can be decomposed into three main components: 
the first term \textbf{(a)} is caused by clients taking stochastic gradients on mini-batches of data. The dependency indicates that increasing $\tau$ has a positive effect until the gradients of all clients participate to the estimate (\ie $\S{t}_\tau=\S{}$).
The second term \textbf{(b)} represents the average client drift over the last $\tau$ rounds, arising from clients performing multiple local steps. The lemma shows this term has a benign dependency, as increasing $\tau$ does not increase the overall error due to this component.
The last term \textbf{(c)} is the \textit{gradient lag}, which reflects the error introduced by using pseudo-gradients from clients based on old parameters. While this may be the main source of error (since it linearly increases with $\tau$), it depends on $\svrvar{k}$, which is the deviation of server update from the true gradient. If momentum succeeds in correcting local optimization (\ie $\svrvar{k}$ is small), this term will also be small and not hinder the optimization.
We verify experimentally that this is indeed the case: the heterogeneity reduction achieved by increasing $\tau$ outweights the overall error bounded in \cref{lemma:delayed_mgrad}, as showed in \cref{fig:grad_dev_tau}.

\subsection{Convergence Guarantees}
\label{sec:theory:conv_rate}
We can now state the convergence result for {\ghb} for \textbf{\textit{non-convex}} functions in (cyclic) partial participation. Comparison with recent related algorithms is provided in \cref{tab:conv_rates}.
\vspace{5pt}
\begin{theorem}
\label{thm:GHBM}
Under \cref{assum:unbias_bvar,assum:smoothness,assum:cyclic_part}, if we take $\tilde{m}^{0}_\tau=0$, and $\beta$, $\eta$ and $\eta_l$ as in \cref{eq:conv_thm_constraints}, then {\ghb} with $\tau=\nicefrac{1}{C}$ converges as:
   \begin{equation*}
       \frac{1}{T} \sum_{t=1}^{T}\expect{\normsq{\nabla f(\theta^{t-1})}} \lesssim \frac{L\Delta}{T} + \sqrt{\frac{L\Delta \sigma^2}{|\S{}|JT}}
   \end{equation*}
   where $\Delta:=f(\theta^0)-\min_\theta f(\theta)$, \edit{$\eta_l \leq \mathcal{O}\left(\nicefrac{1}{\sqrt{\tau}}\right)$ (see \cref{eq:conv_thm_constraints})} and $\lesssim$ absorbs numeric constants.
\end{theorem}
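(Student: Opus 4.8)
The plan is to reduce one round of \cref{algo:ghbm_main} to a server-level generalized heavy-ball step, run an $L$-smoothness descent argument that isolates the momentum error, and then bound that error using \cref{corollary:tau_ghb_cyclic} and \cref{lemma:delayed_mgrad}; the choice $\tau=\nicefrac{1}{C}$ under cyclic participation is what makes the heterogeneity contribution vanish identically.

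\textbf{Step 1 (reduction to a server-level heavy-ball recursion; descent inequality).} Aggregating the $J$ inner iterations of \cref{eq:ghb_fl} over the participating clients $i\in\S{t}$ and composing with the server step of \cref{algo:ghbm_main}, one round can be written purely in terms of $\theta^t$; by \cref{lemma:expr_ghb} this is the generalized heavy-ball move $\theta^t=\theta^{t-1}-\gamma\,\tilde m^t_\tau$ with effective step $\gamma:=\eta\eta_l J$, where $\tilde m^t_\tau$ obeys the exponential recursion $\tilde m^t_\tau=\tfrac{\beta}{\tau}\sum_{k=1}^{\tau}\tilde m^{t-k}_\tau+\tilde g^t$ and $\tilde g^t$ is the round-$t$ (multi-step, drifted, stochastic) local pseudo-gradient. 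Applying $L$-smoothness of $f$ along this step, together with the identity $-\langle a,b\rangle=\tfrac12\normsq{a-b}-\tfrac12\normsq{a}-\tfrac12\normsq{b}$ at $a=\nabla f(\theta^{t-1})$, $b=\tilde m^t_\tau$, and absorbing the $\normsq{\tilde m^t_\tau}$ terms once $\gamma L\le 1$, yields a per-round descent bound $\expect{f(\theta^t)}\le\expect{f(\theta^{t-1})}-\tfrac{\gamma}{2}\expect{\normsq{\nabla f(\theta^{t-1})}}+\tfrac{\gamma}{2}\svrvar{t}$, where $\svrvar{t}$ is the momentum error of \cref{lemma:delayed_mgrad}. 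Everything then reduces to controlling $\svrvar{t}$ and, through it, the client drift $\drift{t}$.

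\textbf{Step 2 (the momentum error is heterogeneity-free under cyclic participation).} Unrolling the $\tau$-step recursion represents $\tilde m^t_\tau$ as a combination of the windowed stochastic pseudo-gradients $\tilde g^{k_\tau}$ of \cref{lemma:delayed_mgrad} with geometrically decaying weights: the recursion contracts because $|x^\tau|>\tfrac{\beta}{\tau}\sum_{k=0}^{\tau-1}|x|^k$ whenever $|x|\ge1$ and $\beta<1$, so all roots of its characteristic polynomial lie strictly inside the unit disk (contraction rate $\approx\beta^{1/\tau}$). Hence, up to that contraction, $\svrvar{t}$ is controlled by $\expect{\normsq{\nabla f(\theta^{t-1})-g^{t_\tau}}}+\expect{\normsq{g^{t_\tau}-\tilde g^{t_\tau}}}$ plus a decayed sum of the analogous quantities at earlier rounds. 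The first term is the participation-induced heterogeneity; by \cref{corollary:tau_ghb_cyclic} with $\tau=\nicefrac{1}{C}$ it equals $8(1-\tau C)^2(G^2+\normsq{\nabla f(\theta^{t-1})})=0$, which is precisely why \cref{assum:bounded_gd} never enters. The second term is bounded by \cref{lemma:delayed_mgrad} in terms of the noise $\sigma^2/(|\S{t}_\tau|J)$, the windowed drift $\tfrac{L^2}{\tau}\sum_k\drift{k}$, and the gradient lag $2L^2\eta^2\sum_k(\expect{\normsq{\nabla f(\theta^{k-1})}}+\svrvar{k})$; under \cref{assum:cyclic_part} with $\tau=\nicefrac{1}{C}$ the window covers every client, so $|\S{t}_\tau|=|\S{}|$ and the noise becomes $\sigma^2/(|\S{}|J)$ — this is what erases the participation ratio $C$ from the final rate, recovering the full-participation constant.

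\textbf{Step 3 (drift bound, summation, and tuning).} For $\drift{t}$ I unroll the $J$ inner steps of \cref{eq:ghb_fl}: each contributes a local stochastic gradient plus the round-constant correction $\tfrac{\beta}{\tau J}(\theta^{t-1}-\theta^{t-\tau-1})$; bounding $\normsq{\theta^{t-1}-\theta^{t-\tau-1}}\lesssim\gamma^2\tau\sum_k\normsq{\tilde m^k_\tau}$, and using smoothness to pass from $g_i(\theta_i^{t,j})$ to $g_i(\theta^{t-1})$ — where the correction term is exactly what cancels the per-client gradient bias, so no $G^2$ surfaces — gives $\drift{t}\lesssim\eta_l^2 J^2\big(\sigma^2+\expect{\normsq{\nabla f(\theta^{t-1})}}+\svrvar{t}+\text{decayed past terms}\big)$. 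Substituting this into the Step-2 bound, then into the Step-1 descent inequality, summing over $t=1,\dots,T$ (telescoping $f$, using $\tilde m^0_\tau=0$), and rearranging: the coefficient multiplying $\tfrac1T\sum_t\expect{\normsq{\nabla f(\theta^{t-1})}}$ on the right-hand side is $\mathcal{O}(L^2\gamma^2+L^2\eta^2\tau\eta_l^2 J^2)$ and the self-coefficient of $\svrvar{}$ is $\mathcal{O}(L^2\eta^2\tau)$; the constraints in \cref{eq:conv_thm_constraints}, in particular $\eta_l\le\mathcal{O}(1/\sqrt{\tau})$, force both below $\tfrac14$, so they are absorbed on the left. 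What survives is $\tfrac1T\sum_t\expect{\normsq{\nabla f(\theta^{t-1})}}\lesssim\tfrac{\Delta}{\gamma T}+L\gamma\tfrac{\sigma^2}{|\S{}|J}$, and tuning the free effective step $\gamma$ as the minimum of its admissible upper bound and $\sqrt{\Delta|\S{}|J/(L\sigma^2 T)}$ yields the claimed $\tfrac{L\Delta}{T}+\sqrt{\tfrac{L\Delta\sigma^2}{|\S{}|JT}}$.

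\textbf{Main obstacle.} The delicate part is the simultaneous control of the two mutually coupled error sequences $\{\svrvar{t}\}$ and $\{\drift{t}\}$ in the presence of the $\tau$-step memory: one must check that the geometric contraction of the generalized heavy-ball recursion dominates the linear-in-$\tau$ growth of the gradient-lag term (c) of \cref{lemma:delayed_mgrad}, and that with $\eta_l=\mathcal{O}(1/\sqrt{\tau})$ every $\tau$-dependent factor stays $\mathcal{O}(1)$ so that the choice $\tau=\nicefrac{1}{C}$ is affordable; the rest is careful bookkeeping. A secondary subtlety is aligning the unrolling of $\tilde m^t_\tau$ with the cyclic pattern of \cref{assum:cyclic_part} so that $\S{t}_\tau$ equals $\S{}$ exactly and the heterogeneity term vanishes identically rather than being merely small.
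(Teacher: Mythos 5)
Your overall skeleton (smoothness descent bounding progress by $\svrvar{t}$, control of the momentum error through the deviation lemmas, drift bound, then tuning of step sizes) matches the paper's route, which combines \cref{lemma:conv_lemma} with \cref{lemma:svrvar,lemma:client_drift,lemma:fs_drift,lemma:delayed_mgrad,corollary:tau_ghb_cyclic}. However, two of your steps contain genuine gaps.

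First, in Step 3 you claim that the heavy-ball correction ``exactly cancels the per-client gradient bias, so no $G^2$ surfaces'' in the drift bound. This is not true and cannot be: each local step moves by the client's \emph{own} stochastic gradient plus the (global) momentum term, so the displacement $\theta_i^{t,j}-\theta^{t-1}$ necessarily carries a contribution of order $\eta_l^2 j^2\normsq{g_i(\theta^{t-1})}$, i.e.\ the individual client gradient norm, not merely its deviation from the global gradient. Without \cref{assum:bounded_gd} (which the theorem deliberately avoids) you cannot convert this into $\normsq{\nabla f}$ plus noise, so your claimed bound $\drift{t}\lesssim\eta_l^2J^2(\sigma^2+\expect{\normsq{\nabla f(\theta^{t-1})}}+\svrvar{t}+\cdots)$ does not follow. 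The paper instead carries a data-dependent quantity $G_\tau$ (the maximal average squared client-gradient norm over the first cycle) through \cref{lemma:fs_drift,lemma:svrvar}, and this term is removed from the final rate only by the specific constraint $\eta_l JL\lesssim\sqrt{L\Delta/(\beta^3\tau G_\tau T)}$ in \cref{eq:conv_thm_constraints}; your bookkeeping never produces this term, so the constraint you invoke ($\eta_l\leq\mathcal{O}(1/\sqrt{\tau})$) is being used for a different purpose and the step where $G_\tau$ is absorbed is missing.

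Second, in Step 2 you analyze the practical recursion $\tilde m^t_\tau=\frac{\beta}{\tau}\sum_{k=1}^{\tau}\tilde m^{t-k}_\tau+\tilde g^t$ directly via the location of its characteristic roots. The root-location observation is fine, but the crucial claim — that unrolling yields ``windowed'' pseudo-gradients so that \cref{corollary:tau_ghb_cyclic} applies and the heterogeneity term vanishes \emph{identically} — does not follow: the unrolled weights on past per-round pseudo-gradients are generated by the recursion and are not uniform over a window of length $\tau$, so under cyclic participation the clients in one cycle are not equally weighted and a residual heterogeneity term survives, which without \cref{assum:bounded_gd} you have no way to bound. The paper avoids exactly this obstacle by proving the theorem for the ``theory version'' of \ghb{} (\cref{algo:ghbm_theory_app}), whose momentum update is the explicit uniform average of the last $\tau$ pseudo-gradients maintained through the auxiliary sequence $\bar{\theta}^t$, so that \cref{lemma:delayed_mgrad} and \cref{corollary:tau_ghb_cyclic} apply with exact weights and \cref{lemma:svrvar} contracts at rate $(1-\beta)$ rather than through a $\tau$-step recursion. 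Either you must adopt that algorithmic restatement (as the paper does), or you must supply a quantitative argument that the non-uniform weights of the unrolled recursion still annihilate heterogeneity under \cref{assum:cyclic_part}; as written, the proposal has neither.
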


\paragraph{Discussion.}
The rate of {\ghb} shows two major improvements: (i) it does not rely on the BGD assumption (\ref{assum:bounded_gd}) and (ii) the dominant term on the right-hand side (RHS) scales with the size of all client population $|\S{}|$, instead of the clients selected in a single round $|\S{}|C$, thanks to incorporating old gradients.
While under the assumptions of \cref{thm:GHBM} any $\tau=\frac{k}{C}, \; \forall k \in \mathbb{N^+}$ will lead to similar conclusions, considering larger interval increases the error due to using old gradients (see \cref{par:overall_error_ghbm}), so we would like to choose $\tau$ as the minimum allowing convergence under unbounded heterogeneity. \edit{Indeed, a larger $\tau$ imposes a stricter bound on the client learning rate $\eta_l \leq \mathcal{O}\left(\nicefrac{1}{\sqrt{\tau}}\right)$ in \cref{eq:conv_thm_constraints}. Since \cref{thm:GHBM} also imposes $\tau=\nicefrac{1}{C}$, the bound on $\eta_l$ is explicitly related to the participation ratio $C$.}
\newpage
\begin{wrapfigure}{Rh!}{0.35\textwidth}
    \vspace{-2mm}
    \begin{center}
        \includegraphics[width=\linewidth]{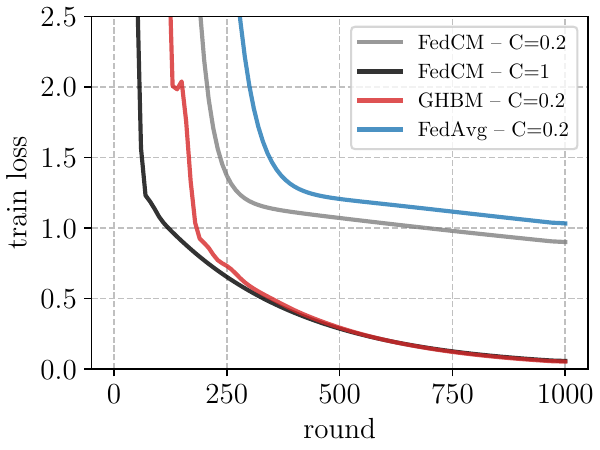}
    \end{center}
    \caption{\textbf{Comparison between {\fedcm} and {\ghb}} in \textit{cyclic participation} on a linear regression problem, non-iid setting, with $J=2$ local steps and $K=10$ clients. {\ghb} with $\tau=\nicefrac{1}{C}$ in \textit{cyclic participation} ($C=0.2$) performs similarly as {\fedcm} in \textit{full participation} ($C=1$).}
    \label{fig:theory_exp}
\vspace{-10mm}
\end{wrapfigure}

\paragraph{Comparison with FedCM.}
The best-known rate for {\fedcm} in partial participation \citep{xu2021fedcm} relies both on bounded gradients and bounded gradient dissimilarity and it is asymptotically weaker than ours.
For the case of \textit{full participation}, \cite{cheng2024momentum} proved that {\fedcm} converges without requiring bounded client dissimilarity.
Our results extend theirs in that we prove that {\ghb} can achieve the same convergence rate even in cyclic partial participation.
This follows from the fact that in this setting {\ghb} update rule approximates the one of classical momentum in full participation.
Indeed, to validate this theoretical finding, in Figure \ref{fig:theory_exp} we simulate a cyclic participation setting and show the train loss of {\ghb} across rounds, comparing with {\fedcm}, both when selecting a subset of clients and when selecting them all. As it is shown, the curve of {\ghb} with $\tau$ as prescribed by \cref{thm:GHBM} approaches the one of {\fedcm} in full participation. 

\paragraph{Comparison with \textsc{SCAFFOLD-M}.}
Recently \cite{cheng2024momentum} proved that momentum accelerates {\scaffold}, preserving strong guarantees against heterogeneity in partial participation.
However, the resulting \textsc{SCAFFOLD-M} method is still based on variance reduction, \ie, it converges under arbitrary heterogeneity thanks to variance reduction, not because it uses momentum.
Our rate additionally requires \cref{assum:cyclic_part}, but is faster and, most importantly, shows that momentum, when modified according to our formulation, can by itself provide similar guarantees even when not all clients participate.
\paragraph{Advantage of Local Steps and Connections to Incremental Gradient Methods.}
\edit{
\cref{thm:GHBM} does not show an explicit benefit from the local steps, similar to the best-known theory for momentum-based FL methods \citep{cheng2024momentum}. However, \ghb{} offers a clear advantage {\wrt} centralized methods for finite-sum optimization applied in FL (where clients represent functions), referred to as \textit{incremental gradient methods}.
One algorithm of this family, the Incremental Aggregated Gradient (IAG), removes the effect of functions heterogeneity by approximating a full gradient with an aggregate of past gradients, assuming cyclic participation \citep{gur2022IGD}. 
However, this holds only in standard distributed mini-batch optimization, where $J=1$.
\ghb{} shares a similar intuition, but applying this logic to the momentum update rather than the gradient estimate is crucial when local steps are involved.
Simply extending IAG with local steps would not mitigate client drift-induced heterogeneity as \ghb{} does. In fact, its convergence rate would be bounded by that of {\fedavg} in full participation, whose lower bound is known to be affected by heterogeneity (see Thm. II of \cite{karimireddy2020scaffold}).
}

\paragraph{On the Use of Cyclic Participation Assumption.}
\label{sec:theory:discussion_cyclic}
\edit{
The use of cyclic participation in the proof of \cref{thm:GHBM} allows precise control over the clients' contributions to the average of the last $\tau$ pseudo-gradients. This ensures that the $\tau$-averaged pseudo-gradient used to update the momentum is unaffected by heterogeneity, which is the important point behind the proof of \cref{thm:GHBM}. Under random uniform, due to the non-zero probability of sampling the same client within $\tau$ rounds,  this condition is hardly verified.
Although one could technically enforce this condition without cyclic sampling — by explicitly tracking each client’s pseudo-gradient and computing a uniform average across the most recent one from each client — this would be impractical. Such a design would not be compliant with protocols like Secure Aggregation, widely adopted in real-world FL systems, thus posing a significant practical limitation.
}

\edit{
Please note that in our analysis convergence under unbounded heterogeneity is not a simple byproduct of the assumption, but comes explicitly from the algorithmic structure of {\ghb} (\ie setting $\tau=\frac{k}{C}, \, \forall k \in \mathbb{N^+}$ is \textbf{necessary}).
The best-known analysis of {\fedavg} under cyclic participation is provided by \cite{cho2023cyclic}, which proves that in certain situations (\eg clients run GD instead of SGD) there can be an asymptotic advantage in the case we prospect with \cref{assum:cyclic_part}. However, it is important to notice that all the results presented in \cite{cho2023cyclic} rely on forms of bounded heterogeneity, and with this respect, the results presented in this work are novel and advance state of the art.
}

\section{Experimental Results}
\label{sec:experiments}
We present evidence both in controlled and real-world scenarios, showing that: (i) the {\ghb} formulation is pivotal to enable momentum to provide an effective correction even in extreme heterogeneity, (ii) our adaptive {\localghb} effectively exploits client participation to enhance communication efficiency and (iii) {\ghb} is suitable for cross-device scenarios, with stark improvement on large datasets and architectures.

\subsection{Setup}
\paragraph{Scenarios, Datasets and Models.}
For the controlled scenarios, we employ \cifar{10/100} as computer vision tasks, with {\resnet} and the same {\lenet} similar to a LeNet-5 commonly used in FL works \citep{hsu2020FederatedVisual}, and {\shakespeare} dataset as NLP task following \citep{reddi2020FedOpt, karimireddy2021Mime}. 
For \cifar{10/100} we follow the common practice of \cite{hsu2020FederatedVisual}, sampling local datasets according to a Dirichlet distribution with concentration parameter $\alpha$, denoting as \textsc{non-iid} and \textsc{iid} respectively the splits corresponding to $\alpha = 0$ and $\alpha=10.000$ (additional details in \cref{app:exp:non_iid}).
For {\shakespeare} we use instead the predefined splits \citep{caldas2019leaf}. The datasets are partitioned among $K=100$ clients, selecting a portion $C=10\%$ of them at each round.
The training round budget $T$ is set to be big enough for all algorithms to reach convergence in the worst-case scenario ($\alpha=0$), constrained by a time budget for the simulations. Being our proposed algorithm always faster, this ensures fair comparison with competitors.

For simulating real-world scenarios, we adopt the large-scale {\landmarks} and {\inaturalist} datasets as CV tasks, with both a {\vit} \citep{dosovitskiy2021an} and a {\mobilenet} \citep{sandler2018mobilenetv2} pretrained on ImageNet, and {\stackoverflow} dataset as NLP task, following \cite{reddi2020FedOpt, karimireddy2021Mime}. These settings are particularly challenging, because the learning tasks are complex, the number of client is high (\ie on the order of $10^4$-$10^5$) and the client participation (for convenience directly reported in \cref{tab:experiments:large_scale}) is scarce (see details in \cref{tab:datasets}).
As is, those settings are akin to \textit{cross-device} FL.
\paragraph{Metrics and Experimental protocol.}
As metrics, we consider \textit{final model quality}, as the average top-1 accuracy over the last 100 rounds of training (\cref{tab:main_controlled,tab:experiments:large_scale}), and \textit{communication/computational efficiency}: this is evaluated by measuring the total amount of exchanged bytes (\ie considering both the downlink and uplink communication) and the wall-clock time spent by an algorithm to reach the performance of {\fedavg} (\cref{tab:comm_comp_cost}).
\edit{We also provide full convergence curves for a subset of the experiments in \cref{fig:full_conv_curves_main}.}
Results are always reported as the average over $5$ independent runs, performed on the best-performing hyperparameters extensively searched separately for all competitor algorithms.
\textbf{All the experiments are conducted under random uniform client sampling}, as it is standard practice.
Further details on datasets, splits, models and hyperparameters are in \cref{app:exp}.

\subsection{The Effectiveness of {\ghb} Compared to Classical Momentum}
\label{sec:exp_ghb}
\begin{figure}[t]
    \centering
    \includegraphics[ width=0.39\textwidth]{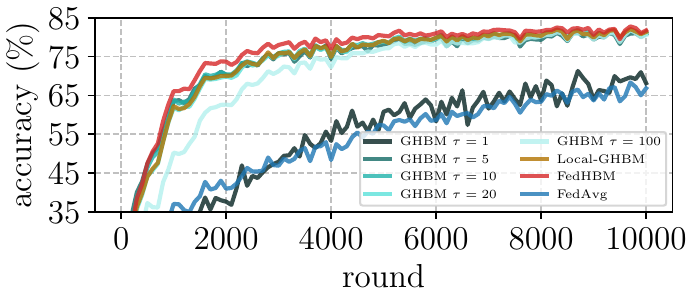}
    \includegraphics[ width=0.39\textwidth]{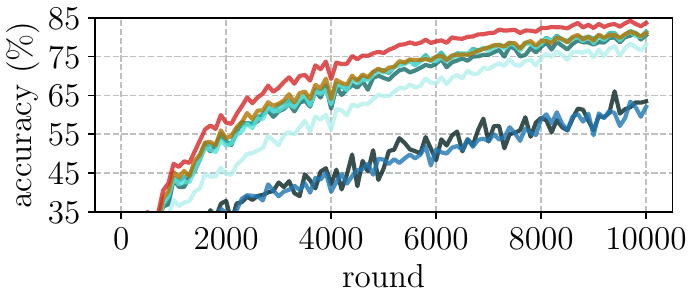}
    \includegraphics[ width=0.19\textwidth]{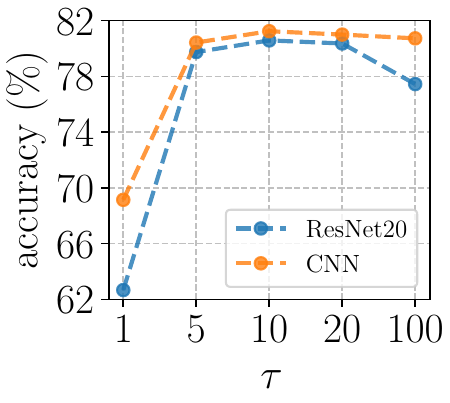}
    \vspace{-0.3cm}
    \caption{\textbf{{\ghb} effectively counteracts the effects of heterogeneity:} our momentum formulation ($\tau>1$) is crucial for superior performance, with an optimal value $\tau=\nicefrac{1}{C}=10$, as predicted in theory. Results on \cifar{10} with {\lenet} (left) and {\resnet} (right), under worst-case heterogeneity. }
    \label{fig:ablation_tau}
\end{figure}
We provide evidence of the effectiveness of {\ghb} under worst-case heterogeneity (\ie $\alpha=0$) by comparing the impact of our generalized heavy-ball momentum formulation to the classical momentum approach, which corresponds to selecting $\tau>1$ in the update rule in \cref{eq:ghb_fl}. As shown in \cref{fig:ablation_tau}, prior momentum-based methods \citep{xu2021fedcm, Ozfatura2021FedADC} fail to improve upon {\fedavg}. In contrast, as $\tau$ increases, {\ghb} exhibits a significant enhancement in both convergence speed and final model quality. The optimal value of $\tau$ is experimentally determined to be $\tau \approx \nicefrac{1}{C}=10$, with larger sub-optimal values only slightly affecting performance (rightmost plot).

This experiment demonstrates that, while complete heterogeneity reduction is theoretically proven only under cyclic participation (\ie \cref{thm:GHBM} holds under \cref{assum:cyclic_part}), {\ghb} empirically achieves strong heterogeneity reduction even with random uniform client sampling. In particular, the theoretical prescription on the optimal value $\tau=\nicefrac{1}{C}$ also holds in this setting.
Moreover, our communication-efficient variants always match or surpass the best-tuned {\ghb}, confirming that their adaptive estimate of each client's momentum positively contributes in a scenario of stochastic client participation (see \cref{sec:theory_ghb}).

\subsection{Comparison with the State-of-art}
\label{exp:comparison_sota}
\paragraph{Results in Controlled Scenario.}
We compare {\ghb} with the most common FL methods, and in particular with other momentum-based FL algorithms, including the recently proposed \textsc{SCAFFOLD-M} \citep{cheng2024momentum}, which which uses both the control variates of {\scaffold} and the momentum of {\fedcm} (and consequently incurs in a communication overhead of $2.5\times$ \wrt{} {\fedavg}).
Our results in \cref{tab:main_controlled} underscore that methods based on classical momentum fail at improving {\fedavg} in scenarios with high heterogeneity and partial participation, confirming that in those cases they should not be expected to provide a significant advantage over heterogeneity.
The general ineffectiveness of classical momentum also holds for \textsc{SCAFFOLD-M}: as it is possible to notice, its performance is not significantly better than {\scaffold}'s, and this well aligns with the theory, where the guarantees against heterogeneity come from the use of control variates, while momentum only brings acceleration. In that our results align with previous findings in literature suggesting that variance reduction, besides theoretically strong, is often not effective empirically in deep learning \citep{defazioineffvr}.
Conversely, our algorithms outperform {\fedavg} with an impressive margin of \textcolor{MaterialGreen800}{$+20.6\%$} and \textcolor{MaterialGreen800}{$+14.4\%$} on {\resnet} and {\lenet} under worst-case heterogeneity, and consistently over less severe conditions (higher values of $\alpha$ in \cref{fig:abl_alpha}).
In particular, as shown in \cref{fig:full_conv_curves_main}, {\ghb} improves over competitor methods also in \textsc{iid} scenarios: this relates to our convergence rate improving not only \wrt heterogeneity, but also displaying a better dependency on the stochastic noise.

\begin{table}[ht]
\vspace{2mm}
    \begin{minipage}{0.57\textwidth}
        \vspace{-6.5cm}
            \captionof{table}{
    \textbf{Comparison with state-of-the-art in controlled setting} (acc@$10k$-$20k$ rounds for {\resnet}/{\lenet}). {\small\textsc{NON-IID}} ($\alpha=0$) and {\small\textsc{IID}} (\edit{$\alpha=10.000$}). Best result in \textbf{bold}, second best \underline{underlined}. {\nan} indicates non-convergence.\vspace{-6pt}}
    \label{tab:main_controlled}
    \centering
    \resizebox{1.03\linewidth}{!}{
    \begin{tabular}{lcccccc}
      \toprule 
      \multirow{2}{*}{\textsc{Method}} &
      \multicolumn{2}{c}{\cifar{100} {\scriptsize(\resnet)}} &
      \multicolumn{2}{c}{\cifar{100} {\scriptsize(\lenet)}} &
      \multicolumn{2}{c}{\textsc{Shakespeare}} \\
    \cmidrule(lr){2-3} \cmidrule(lr){4-5} \cmidrule(lr){6-7} 
       &
      \multicolumn{1}{c}{\small \textsc{NON-IID}} &
      \multicolumn{1}{c}{\small \textsc{IID}} &
      \multicolumn{1}{c}{\small \textsc{NON-IID}} &
      \multicolumn{1}{c}{\small \textsc{IID}} &
      \multicolumn{1}{c}{\small \textsc{NON-IID}} &
      \multicolumn{1}{c}{\small \textsc{IID}} \\
        \midrule
         \fedavg & \edit{$24.7 {\scriptstyle\,\pm 1.2}$} & $58.6 {\scriptstyle\,\pm 0.4}$ & \edit{$38.3 {\scriptstyle\,\pm 0.3}$} & $49.7 {\scriptstyle\,\pm 0.2}$ & $47.3 {\scriptstyle\,\pm 0.1}$ & $47.1 {\scriptstyle\,\pm 0.2}$\\
         \fedprox & \edit{$24.8 {\scriptstyle\,\pm 1.1}$} & $58.5 {\scriptstyle\,\pm 0.3}$ & \edit{$40.6 {\scriptstyle\,\pm 0.2}$} & $49.9 {\scriptstyle\,\pm 0.2}$ & $47.3 {\scriptstyle\,\pm 0.1}$ & $47.1 {\scriptstyle\,\pm 0.2}$\\
         \scaffold & $30.7 {\scriptstyle\,\pm 1.3}$ & $58.0 {\scriptstyle\,\pm 0.6}$ & $45.5 {\scriptstyle\,\pm 0.1}$ & $49.4 {\scriptstyle\,\pm 0.4}$ & $50.2 {\scriptstyle\,\pm 0.1}$ & $50.1 {\scriptstyle\,\pm 0.1}$\\
         \feddyn &  $6.0 {\scriptstyle\,\pm 0.5}$ & $60.8 {\scriptstyle\,\pm 0.7}$ & {\nan} & $\underline{51.9} {\scriptstyle\,\pm 0.2}$ & ${50.7} {\scriptstyle\,\pm 0.2}$ & ${50.8} {\scriptstyle\,\pm 0.2}$\\
         \adabest & $8.4 {\scriptstyle\,\pm 2.0}$ & $55.6 {\scriptstyle\,\pm 0.3}$ & $35.6 {\scriptstyle\,\pm 0.3}$ & $49.7 {\scriptstyle\,\pm 0.2}$ & $47.3 {\scriptstyle\,\pm 0.1}$ & $47.1 {\scriptstyle\,\pm 0.2}$\\
         \mime & \edit{$26.8 {\scriptstyle\,\pm 2.1}$} & $59.0 {\scriptstyle\,\pm 0.3}$ & \edit{$45.3 {\scriptstyle\,\pm 0.4}$} & $50.9 {\scriptstyle\,\pm 0.4}$ & $48.3 {\scriptstyle\,\pm 0.2}$ & $48.5 {\scriptstyle\,\pm 0.1}$\\
        \cmidrule{1-1}
         \fedavgm & \edit{$24.8 {\scriptstyle\,\pm 0.7}$} & $58.7 {\scriptstyle\,\pm 0.9}$ & \edit{$42.1 {\scriptstyle\,\pm 0.3}$} & $50.7 {\scriptstyle\,\pm 0.2}$ & $50.0 {\scriptstyle\,\pm 0.0}$ & $50.4 {\scriptstyle\,\pm 0.1}$\\
        \textsc{FedACG} & $25.7 {\scriptstyle\,\pm 0.5}$ & $58.7 {\scriptstyle\,\pm 0.3}$ & $43.5 {\scriptstyle\,\pm 0.4}$ & $51.3 {\scriptstyle\,\pm 0.3}$ & $50.9 {\scriptstyle\,\pm 0.1}$ & $51.0 {\scriptstyle\,\pm 0.1}$\\
         \textsc{SCAFFOLD-M} & $30.9 {\scriptstyle\,\pm 0.7}$ & $60.1 {\scriptstyle\,\pm 0.5}$ & $45.7 {\scriptstyle\,\pm 0.2}$ & $50.1 {\scriptstyle\,\pm 0.3}$ & $50.8 {\scriptstyle\,\pm 0.0}$ & $51.0 {\scriptstyle\,\pm 0.1}$\\
         {\fedcm} ${\scriptstyle (\ghb\,\tau=1)}$ & $22.2 {\scriptstyle\,\pm 1.0}$ & $53.1 {\scriptstyle\,\pm 0.2}$ & $36.0 {\scriptstyle\,\pm 0.3}$ & $50.2 {\scriptstyle\,\pm 0.5}$ & $49.2 {\scriptstyle\,\pm 0.1}$ & $50.4 {\scriptstyle\,\pm 0.1}$\\
         {\fedadc} ${\scriptstyle (\ghb\,\tau=1)}$ & $22.4 {\scriptstyle\,\pm 0.1}$ & $53.2 {\scriptstyle\,\pm 0.2}$ & $37.9 {\scriptstyle\,\pm 0.3}$ & $50.2 {\scriptstyle\,\pm 0.4}$ & $49.2 {\scriptstyle\,\pm 0.1}$ & $50.4 {\scriptstyle\,\pm 0.1}$\\
         \mimemom & \edit{$24.3 {\scriptstyle\,\pm 0.9}$} & $60.5 {\scriptstyle\,\pm 0.6}$ & ${48.2} {\scriptstyle\,\pm 0.7}$ & $50.6 {\scriptstyle\,\pm 0.1}$ & $48.5 {\scriptstyle\,\pm 0.2}$ & $48.9 {\scriptstyle\,\pm 0.2}$\\
         \mimelite & \edit{$21.2 {\scriptstyle\,\pm 1.6}$} & $59.2 {\scriptstyle\,\pm 0.5}$ & $46.0 {\scriptstyle\,\pm 0.3}$ & $50.7 {\scriptstyle\,\pm 0.1}$ & $49.1 {\scriptstyle\,\pm 0.4}$ & $49.4 {\scriptstyle\,\pm 0.3}$\\
         \cmidrule{1-1}
          \rowcolor{Cerulean!30!white}\textbf{{\localghb} (ours)} & $\underline{38.2} {\scriptstyle\,\pm 1.0}$ & $\underline{62.0} {\scriptstyle\,\pm 0.5}$ & $\underline{50.3} {\scriptstyle\,\pm 0.5}$ & $51.9 {\scriptstyle\,\pm 0.4}$ & $\underline{51.2} {\scriptstyle\,\pm 0.1}$ & $\underline{51.1} {\scriptstyle\,\pm 0.3}$\\
         \rowcolor{Cerulean!30!white}\textbf{{\fedhbm} (ours)} & $\mathbf{42.5} {\scriptstyle\,\pm 0.8}$ & $\mathbf{62.5} {\scriptstyle\,\pm 0.5}$ & $\mathbf{50.4} {\scriptstyle\,\pm 0.5}$ & $\mathbf{52.0} {\scriptstyle\,\pm 0.4}$ & $\mathbf{51.3} {\scriptstyle\,\pm 0.1}$ & $\mathbf{51.4} {\scriptstyle\,\pm 0.2}$ \\
        \bottomrule
    \end{tabular}
    \setlength{\tabcolsep}{1.4pt}
    }

    \end{minipage} \hfill
    \begin{minipage}[b]{0.4\textwidth}
    \centering
        \begin{minipage}[b]{\textwidth}
         \includegraphics[width=0.9\textwidth]{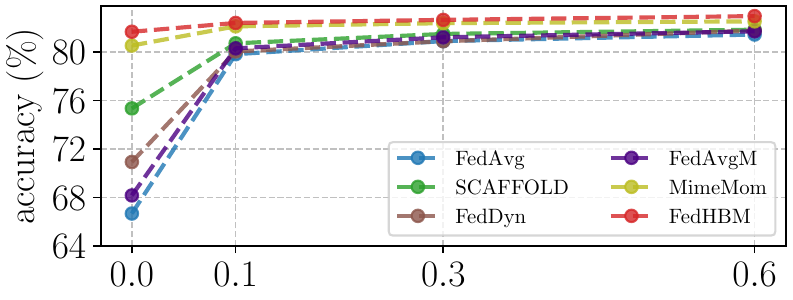}
        \end{minipage}\hfill
        \begin{minipage}[b]{\textwidth}\includegraphics[width=0.9\textwidth]{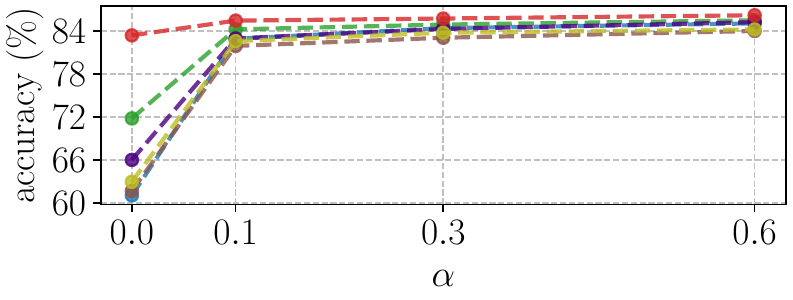}
        \end{minipage}
        \captionof{figure}{\textbf{Final model quality at different values of} $\pmb{\alpha}$ (lower $\alpha \rightarrow $ higher heterogeneity) on \cifar{10}, with {\lenet} (top) and {\resnet} (bottom).\label{fig:abl_alpha}}
    \end{minipage}
\end{table}
\begin{figure}[t]
    \centering
    \includegraphics[width=0.32\linewidth]{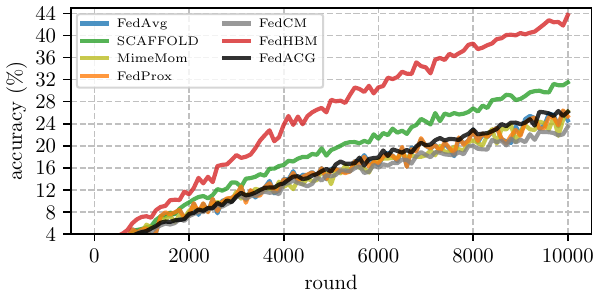}
    \includegraphics[width=0.32\linewidth]{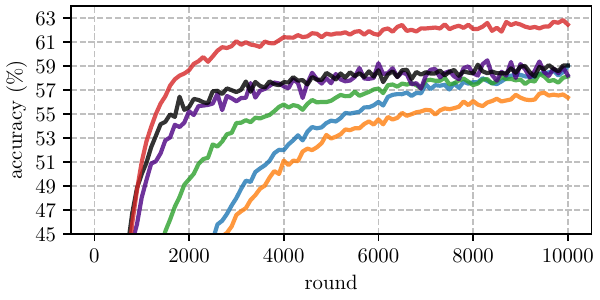}
    \includegraphics[width=0.32\linewidth]{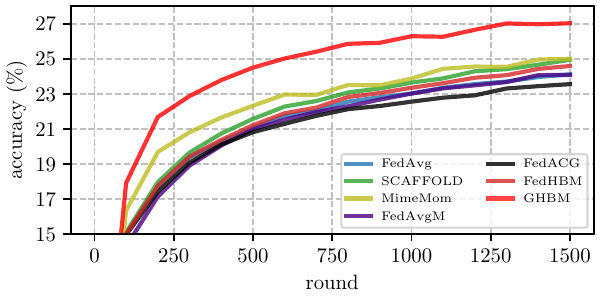}
    \vspace{-2mm}
    \caption{\textbf{{\ghb} largely outperforms state-of-the-art methods:} the plots show the test accuracy (\%) over rounds, with {\resnet} on \cifar{100}, both in \textsc{non-iid} (left) and \textsc{iid} (middle) settings, and on {\stackoverflow} (right). {\ghb} always displays much faster convergence and higher accuracy, even when distributions are \textsc{iid}, confirming robustness \wrt heterogeneity and better dependency on stochastic noise.}
    \label{fig:full_conv_curves_main}
\vspace{-3mm}
\end{figure}

\paragraph{Results in Real-world Large-scale Scenarios.}
\label{sec:experiments:large_scale}
Extending the experimentation to settings characterized by extremely low client participation, we test both our {\ghb} with $\tau$ tuned via a grid-search and our adaptive {\fedhbm}, which exploits client participation to keep the same communication complexity of {\fedavg}.
As discussed in \cref{par:discussion_tau,par:overall_error_ghbm}, under such extreme client participation patterns {\ghb} performs better because the trade-off between heterogeneity reduction and gradient lag is explicitly tuned by the choice of the best performing $\tau$, while {\fedhbm} will likely adopt a suboptimal value. 
However, results in \cref{tab:experiments:large_scale} show a stark improvement over the state-of-art for both our algorithms, indicating that the design principle of our momentum formulation is remarkably robust and provides effective improvement even when client participation is very low (\eg $C\leq 1\%$).
\begin{table}[th]
\caption{\textbf{Test accuracy (\%) comparison of best SOTA FL algorithms on large-scale and realistic settings.} {\ghb} is the best algorithm when client participation is extremely low, while {\fedhbm} still improves the other competitors by a large margin. {\nan} means that the algorithm did not converge.}
\label{tab:experiments:large_scale}
\resizebox{\linewidth}{!}{
\begin{tabular}{lccccccccl}
\toprule
\multirow{3}{*}{\textsc{\large Method}} & \multicolumn{4}{c}{\large\textcolor{NavyBlue}{\mobilenet}} & \multicolumn{3}{c}{\large\textcolor{MaterialGreen600}{\vit}} & \\
  \cmidrule(l){2-5} \cmidrule(l){6-8}
  & {\landmarks}  &  \multicolumn{3}{c}{{\inaturalist} } &
  {\landmarks}  &  \multicolumn{2}{c}{{\inaturalist} } &  {\stackoverflow} &  \multicolumn{1}{c}{} \\ 
  \cmidrule(lr){2-2} \cmidrule(l){3-5} \cmidrule(l){6-6} \cmidrule(lr){7-8} \cmidrule(l){9-9}
 &
  {\small$C\approx0.79\%$} &
  {\small$C\approx0.1\%$} &
  {\small$C\approx0.5\%$} &
  {\small$C\approx1\%$} &
  {\small$C\approx0.79\%$} &
  {\small$C\approx0.1\%$} &
  {\small$C\approx0.5\%$} &
  {\small$C\approx0.12\%$} &
   \\ \midrule
 \cmidrule(lr){1-10}
\fedavg &
  $60.3 {\,\scriptstyle \pm 0.2}$ &
  $38.0 {\,\scriptstyle \pm 0.8}$ &
  $45.25 {\,\scriptstyle \pm 0.1}$ &
  $47.59 {\,\scriptstyle \pm 0.1}$ &
  $68.5 {\,\scriptstyle \pm 0.5}$ &
  $65.6 {\,\scriptstyle \pm 0.1}$ &
  $70.7 {\,\scriptstyle \pm 0.8}$ &
  $24.0 {\,\scriptstyle \pm 0.4}$ &
   \\
\scaffold &
  $61.0 {\,\scriptstyle \pm 0.1}$ &
  \nan &
  \nan &
  \nan &
  $67.5 {\,\scriptstyle \pm 3.3}$ &
  \nan &
  \nan &
  $24.8 {\,\scriptstyle \pm 0.4}$ &
   \\ \cmidrule(l){1-1}
\fedavgm &
  $61.5 {\,\scriptstyle \pm 0.2}$ &
  $41.3 {\,\scriptstyle \pm 0.4} $ &
  $46.0 {\,\scriptstyle \pm 0.1} $ &
  $48.4 {\,\scriptstyle \pm 0.1}$ &
  $70.0 {\,\scriptstyle \pm 0.5}$ &
  $66.0 {\,\scriptstyle \pm 0.2}$ & 
  $71.4 {\,\scriptstyle \pm 0.5}$ &
  $24.1 {\,\scriptstyle \pm 0.3}$ &
   \\
\mimemom &
  \nan &
  \nan &
  \nan &
  \nan &
  \nan &
  \nan &
  \nan &
  $\underline{24.9} {\,\scriptstyle \pm 0.6}$ &
   \\ \cmidrule(l){1-1} 
\rowcolor{Cerulean!30!white} \textbf{{\ghb} - best $\tau$ (ours)} &
  $\mathbf{65.9} {\,\scriptstyle \pm 0.1}$ &
  $\mathbf{41.8} {\,\scriptstyle \pm 0.1}$ &
  $\mathbf{48.7} {\,\scriptstyle \pm 0.1}$ &
  $\mathbf{50.5} {\,\scriptstyle \pm 0.1}$ &
  $\mathbf{74.3} {\,\scriptstyle \pm 0.6}$ &
  $\mathbf{68.8} {\,\scriptstyle \pm 0.3}$ &
  $\mathbf{73.5} {\,\scriptstyle \pm 0.4}$ &
  $\mathbf{27.0} {\,\scriptstyle \pm 0.1}$ &
   \\
\rowcolor{Cerulean!30!white} \textbf{{\fedhbm} (ours)} &
  $\underline{65.4} {\,\scriptstyle \pm 0.2}$ &
  $\underline{41.6} {\,\scriptstyle \pm 0.2}$ &
  $\underline{47.3} {\,\scriptstyle \pm 0.0}$ &
  $\underline{49.8} {\,\scriptstyle \pm 0.0}$ &
  $\underline{73.1} {\,\scriptstyle \pm 0.9}$ &
  $\underline{66.7} {\,\scriptstyle \pm 0.7}$ &
  $\underline{72.1} {\,\scriptstyle \pm 0.5}$ &
  $24.5 {\,\scriptstyle \pm 0.4}$ &
   \\ \bottomrule
\end{tabular}
\setlength{\tabcolsep}{1.4pt}
}
\vspace{-2mm}
\end{table}

\vspace{-2pt}
\paragraph{Communication Efficiency.}
\edit{Results in \cref{tab:comm_comp_cost}} reveal that our proposed algorithms lead to a dramatic reduction in both communication and computational cost, with an average saving of  respectively \textcolor{MaterialGreen800}{$+55.9\%$} and \textcolor{MaterialGreen800}{$+61.5\%$}. 
In practice, \edit{while {\fedhbm} has the same communication complexity of {\fedavgm} and {\ghb} slightly higher}, both our algorithms much show faster convergence and higher final model quality, \edit{which ultimately lead to a significant reduction of the total communication and computational cost}. In particular, in settings with extremely low client participation (\eg{} {\landmarks} and {\inaturalist}), {\ghb} is more suitable for best accuracy, while {\fedhbm} is the best at lowering the communication cost.
\begin{table}[h]
\caption{\textbf{Total communication and computational cost for reaching the final model quality of {\fedavg}}, across academic and real-world large-scale datasets (details in \cref{app:exp:cost}). The coloured arrows indicate respectively a reduction (\textcolor{MaterialGreen800}{$\boldsymbol{\downarrow}$}) and an increase (\textcolor{MaterialRed800}{$\boldsymbol{\uparrow}$}) of communication/computational cost.}
\label{tab:comm_comp_cost}
\centering
\resizebox{\linewidth}{!}{
\begin{tabular}{@{}lcllllllll@{}}
\toprule
\multirow{3}{*}{\large\textsc{Method}} & \multirow{3}{*}{\textsc{\thead{Comm. \\ Overhead}}} &
  \multicolumn{4}{c}{\textsc{\edit{Total} Communication Cost {\footnotesize(bytes exchanged)}}} &
  \multicolumn{4}{c}{\textsc{\edit{Total} Computational Cost {\footnotesize(Wall-Clock Time hh:mm)} }} \\ 
  \cmidrule(lr){3-6} \cmidrule(l){7-10} 
 & &
  \multicolumn{2}{c}{\cifar{100} ($\alpha=0$)} &
  \multicolumn{2}{c}{\landmarks} &
  \multicolumn{2}{c}{\cifar{100} ($\alpha=0$)} &
  \multicolumn{2}{c}{\landmarks} \\ 
  \cmidrule(lr){3-4} \cmidrule(lr){5-6} \cmidrule(lr){7-8} \cmidrule(lr){9-10}
 & &
  {\lenet} &
  {\resnet} &
  {\mobilenet} &
  {\vit} &
  {\lenet} &
  {\resnet} &
  {\mobilenet} &
  {\vit} \\ 
  \midrule
{\fedavg}   & 
    \textcolor{MaterialGreen800}{$\mathbf{1\times}$} 
    & 30.9 GB 
    & 10.3 GB 
    & 89.8 GB 
    & 483.7 GB 
    & 02:05 
    & 03:36 
    & 13:51    
    & 13:56  \\
{\scaffold} & 
    \textcolor{MaterialOrange800}{$\mathbf{2\times}$}
    & 40.8 GB {\scriptsize\textcolor{MaterialRed800}{$\mathbf{\uparrow 32.0\%}$}}
    & 14.2 GB {\scriptsize\textcolor{MaterialRed800}{$\mathbf{\uparrow 37.8\%}$}}
    & 51.2 GB {\scriptsize\textcolor{MaterialGreen800}{$\mathbf{\downarrow 43.0\%}$}} 
    & 967.4 GB {\scriptsize\textcolor{MaterialRed800}{$\mathbf{\uparrow 100.0\%}$}}
    & 01:23 {\scriptsize\textcolor{MaterialGreen800}{$\mathbf{\downarrow 34.0\%}$}}
    & 02:39 {\scriptsize\textcolor{MaterialGreen800}{$\mathbf{\downarrow 26.4\%}$}}
    & 08:28 {\scriptsize\textcolor{MaterialGreen800}{$\mathbf{\downarrow 38.9\%}$}}
    & 15:15 {\scriptsize\textcolor{MaterialRed800}{$\mathbf{\uparrow 9.4\%}$}}   \\ \cmidrule{1-1}
{\fedavgm}  & 
    \textcolor{MaterialGreen800}{$\mathbf{1\times}$} 
    & 21.0 GB {\scriptsize\textcolor{MaterialGreen800}{$\mathbf{\downarrow 32.0\%}$}} 
    & 9.1 GB  {\scriptsize\textcolor{MaterialGreen800}{$\mathbf{\downarrow 11.6\%}$}}
    & 73.6 GB {\scriptsize\textcolor{MaterialGreen800}{$\mathbf{\downarrow 18.0\%}$}}
    & 403.1 GB {\scriptsize\textcolor{MaterialGreen800}{$\mathbf{\downarrow 16.7\%}$}}%
    & 01:25 {\scriptsize\textcolor{MaterialGreen800}{$\mathbf{\downarrow 32.0\%}$}}
    & 03:10 {\scriptsize\textcolor{MaterialGreen800}{$\mathbf{\downarrow 12.0\%}$}}
    & 11:22 {\scriptsize\textcolor{MaterialGreen800}{$\mathbf{\downarrow 18.0\%}$}}
    & 11:37 {\scriptsize\textcolor{MaterialGreen800}{$\mathbf{\downarrow 16.7\%}$}}   \\
{\mimemom}  & 
    \textcolor{MaterialRed800}{$\mathbf{3\times}$}   
    & 21.5 GB {\scriptsize\textcolor{MaterialGreen800}{$\mathbf{\downarrow 30.4\%}$}}
    & 30.9 GB {\scriptsize\textcolor{MaterialRed800}{$\mathbf{\uparrow 200.0\%}$}}
    & 269.4 GB {\scriptsize\textcolor{MaterialRed800}{$\mathbf{\uparrow 200.0\%}$}}
    &  1.417 TB {\scriptsize\textcolor{MaterialRed800}{$\mathbf{\uparrow 200.0\%}$}}
    & 01:27 {\scriptsize\textcolor{MaterialGreen800}{$\mathbf{\downarrow 30.4\%}$}}
    & 10:42 {\scriptsize\textcolor{MaterialRed800}{$\mathbf{\uparrow 197.8\%}$}}
    & 41:07 {\scriptsize\textcolor{MaterialRed800}{$\mathbf{\uparrow 197.8\%}$}}
    & 41:30 {\scriptsize\textcolor{MaterialRed800}{$\mathbf{\uparrow 197.8\%}$}}  \\ \cmidrule{1-1}
\rowcolor{Cerulean!30!white}
\textbf{{\ghb} (ours)}      & 
    \textcolor{MaterialYellow800}{$\mathbf{1.5\times}$}
    & \underline{8.5} GB {\scriptsize\textcolor{MaterialGreen800}{$\mathbf{\downarrow 72.5\%}$}}  
    & \underline{7.0} GB  {\scriptsize\textcolor{MaterialGreen800}{$\mathbf{\downarrow 32.5\%}$}}  
    & \underline{48.5} GB {\scriptsize\textcolor{MaterialGreen800}{$\mathbf{\downarrow 46.0\%}$}}
    & \underline{314.4} GB {\scriptsize\textcolor{MaterialGreen800}{$\mathbf{\downarrow 35.0\%}$}} %
    & \underline{00:24} {\scriptsize\textcolor{MaterialGreen800}{$\mathbf{\downarrow 80.8\%}$}}
    & \underline{01:37} {\scriptsize\textcolor{MaterialGreen800}{$\mathbf{\downarrow 55.0\%}$}}
    & \textbf{05:20} {\scriptsize\textcolor{MaterialGreen800}{$\mathbf{\downarrow 61.5\%}$}}
    &  \textbf{06:30}  {\scriptsize\textcolor{MaterialGreen800}{$\mathbf{\downarrow 53.3\%}$}} \\
\rowcolor{Cerulean!30!white}
\textbf{{\fedhbm} (ours)}   & 
    \textcolor{MaterialGreen800}{$\mathbf{1\times}$} 
    & \textbf{5.2} GB {\scriptsize\textcolor{MaterialGreen800}{$\mathbf{\downarrow 83.0\%}$}}  
    & \textbf{4.2} GB  {\scriptsize\textcolor{MaterialGreen800}{$\mathbf{\downarrow 59.2\%}$}} 
    & \textbf{29.6} GB {\scriptsize\textcolor{MaterialGreen800}{$\mathbf{\downarrow 67.0\%}$}}
    &  \textbf{234.4} GB {\scriptsize\textcolor{MaterialGreen800}{$\mathbf{\downarrow 51.5\%}$}} %
    & \textbf{00:22} {\scriptsize\textcolor{MaterialGreen800}{$\mathbf{\downarrow 82.0\%}$}}
    & \textbf{01:29} {\scriptsize\textcolor{MaterialGreen800}{$\mathbf{\downarrow 59.0\%}$}}
    & \underline{06:23} {\scriptsize\textcolor{MaterialGreen800}{$\mathbf{\downarrow 54.0\%}$}}
    & \underline{07:31} {\scriptsize\textcolor{MaterialGreen800}{$\mathbf{\downarrow 46.0\%}$}}  \\ \bottomrule
\end{tabular}
}
\end{table}

\section{Conclusions}
In this work, we propose \textit{Generalized Heavy-Ball Momentum} (\ghb), a novel momentum-based optimization method for Federated Learning (FL) that effectively mitigates the joint effect of statistical heterogeneity and partial participation. 
We theoretically prove that {\ghb} converges under arbitrary heterogeneity in \textit{cyclic partial participation}, achieving the same rate classical momentum enjoys in \textit{full participation}.
Additionally, we introduce {\fedhbm}, a communication-efficient variant that retains the benefits of momentum while maintaining the same communication complexity as {\fedavg}.
Extensive experiments, conducted under standard random uniform client sampling, confirm that {\ghb} significantly outperforms state-of-the-art FL methods in both convergence speed and final model quality, demonstrating its robustness in large-scale, real-world heterogeneous FL scenarios.
\section*{Acknowledgements}
The authors would like to thank Carlo Ciliberto for fruitful initial discussions on the theoretical aspects of {\ghb} and for his valuable feedback about the presentation of the method.
\section*{Funding}
The author(s) declare that financial support was received for the research, authorship, and/or publication of this article. This study was carried out within the project FAIR - Future Artificial Intelligence Research - and received funding from the European Union Next-GenerationEU [PIANO NAZIONALE DI RIPRESA E RESILIENZA (PNRR) – MISSIONE 4 COMPONENTE 2, INVESTIMENTO 1.3 – D.D. 1555 11/10/2022, PE00000013 - CUP: E13C22001800001]. This manuscript reflects only the authors’ views and opinions, neither the European Union nor the European Commission can be considered responsible for them. A part of the computational resources for this work was provided by hpc@polito, which is a Project of Academic Computing within the Department of Control and Computer Engineering at the Politecnico di Torino (\href{http://www.hpc.polito.it}{\texttt{http://www.hpc.polito.it}}). We acknowledge the CINECA award under the ISCRA initiative for the availability of high-performance computing resources. This work was supported by CINI.

\bibliography{main}
\bibliographystyle{tmlr}

\newpage
\appendix
\onecolumn

\section{Additional Discussion}

\subsection{Extended Related Works}
\label{app:discussion}
Recently, similarly based on variance reduction as SCAFFOLD, \citep{mishchenko22b} propose \textsc{ScaffNew} to achieve accelerated communication complexity in heterogeneous settings through control variates, guaranteeing convergence under arbitrary heterogeneity in full participation. 
The work by \cite{mishchenko2024federated}, under the assumption of second-order data heterogeneity, proposes an algorithm which can reduce client drift by estimating the global update direction as well as employing regularization.
\edit{%
The proposed algorithm can be seen as a combination of {\fedprox} with {\scaffold}/\textsc{ScaffNew}, and similarly relies on additional server control variates to correct the drift, so the underlying principle is still variance reduction. Quite differently, GHBM is based on momentum, properly modified to tackle heterogeneity and partial participation in FL.}
Similarly to the already discussed {\mime} \citep{karimireddy2021Mime}, \cite{karagulyan2024spam} propose the \textsc{SPAM} algorithm and leverage momentum as a local correction term to benefit from second-order similarity.

\paragraph{Comparison with FedACG \citep{kim2024communication}.}

\edit{
We provide a comparison with the FedACG algorithm based on: algorithmic design, theoretical guarantees and empirical results.
Algorithmically, it has two modifications \wrt{} {\fedavgm}: (i) it uses the Nesterov Accelerated Gradient (NAG) to broadcast a lookahead global model and (ii) adds a proximal local penalty similar to {\fedprox} \wrt{} this transmitted global model. The method has the same communication complexity as FedAvg, because it does not exchange additional information.
Our work proposes instead a novel formulation of momentum, explicitly designed to provide an advantage in heterogeneous FL with partial client participation. We propose both the main algorithm ({\ghb}), which has \textit{stateless} clients but has $1.5\times$ the communication complexity of FedAvg, and communication efficient versions (\eg{} {\fedhbm}), that preserve the communication complexity as FedAvg, at the cost of using local storage.
From a theoretical perspective, the convergence rate of FedACG does not prove any advantage \wrt{} heterogeneity, since it still relies on the bounded heterogeneity assumption. {\ghb} is proven to converge under arbitrary heterogeneity in cyclic partial participation, recovering the same convergence rate that \cite{cheng2024momentum} proved for {\fedcm} when in full participation. This is a significant advantage that then reflects in significantly improved performance.
From an empirical perspective, simulation results are presented in \cref{fig:full_conv_curves_main}. While it is faster than FedAvgM, it still falls short behind our algorithms in heterogeneous scenarios. This is a consequence of the same issue we showed in \cref{sec:method_ghb} for classical momentum.
}

\begin{figure}[hb]
    \centering
    \includegraphics[width=0.9\linewidth]{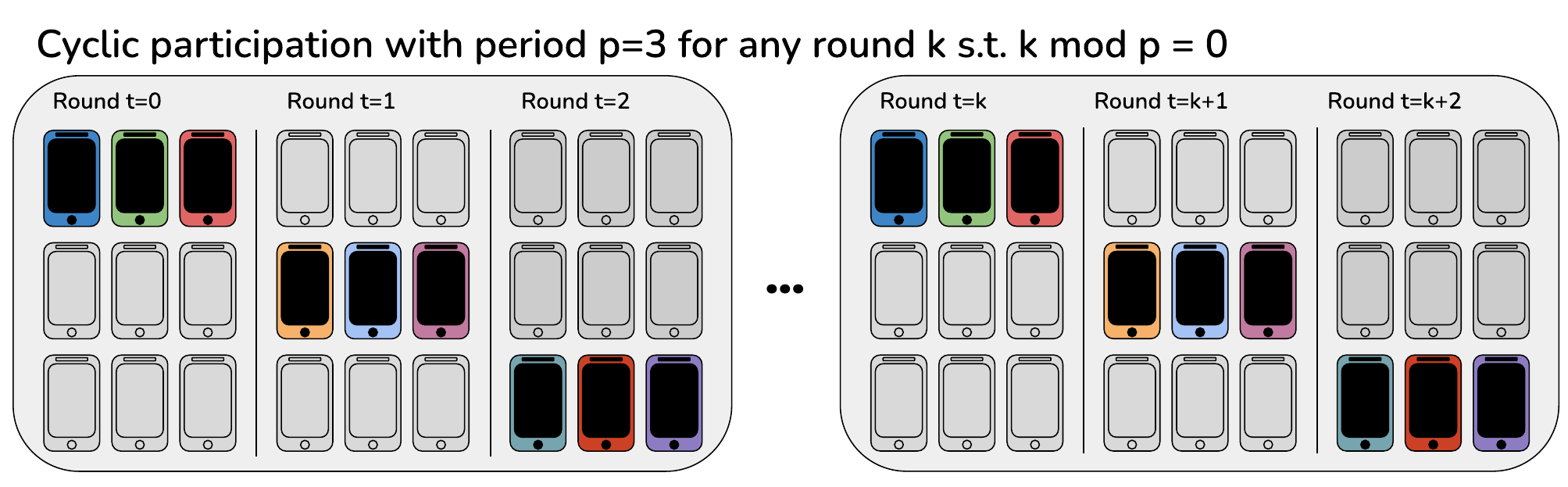}
    \caption{\edit{\textbf{Illustration of cyclic client participation with a total of $K=9$ clients.} 
    \cref{thm:GHBM} holds under the assumption of cyclic participation, which simply states that there is \edit{any} fixed order \edit{(so client shuffling methods like Shuffle-Once are compliant with the assumption)} in which clients appear across rounds in the training, \ie each client is sampled every $p=\frac{1}{C}$ rounds.
    In the above image, $K\cdot C=3$ clients are selected for training, \ie each client is selected exactly once every $p=3$ rounds.}}
    \label{fig:cyclic_illustration}
\end{figure}

\subsection{Notes on Failure Cases of SOTA Algorithms}
\label{app:exp:failure_cases}
In this paper, we evaluated our approach using the large-scale FL datasets proposed by \citep{hsu2020FederatedVisual}. Notably, several recent state-of-the-art FL algorithms failed to converge on these datasets. For {\scaffold} this result aligns with prior works \citep{reddi2020FedOpt, karimireddy2021Mime}, since it is unsuitable for cross-device FL with thousands of devices. Indeed, the client control variates can become stale, and may consequently degrade the performance. For {\mimemom} \citep{karimireddy2021Mime}, despite extensive hyperparameter tuning using the authors' original code, we were unable to achieve convergence. This finding is surprising since the approach has been proposed to tackle cross-device FL. To our knowledge, this is the first work to report these failure cases, likely due to the lack of prior evaluations on such challenging datasets. We believe these findings underscore the need for further investigation into the factors contributing to algorithm performance in large-scale, heterogeneous FL settings.

\section{Proofs}
\label{appendix:theory}

\subsubsection*{Algorithms}
To handle the proof, we analyze a simpler version of our algorithm, in which we use the update rule in \cref{eq:proto_mom_update_real} instead of the one described in  \cref{eq:ghb_exp}.
The resulting \cref{algo:ghbm_theory_app} we analyze is reported along the plain {\ghb} (\cref{algo:ghbm_practical_app}) we used in the experiments. Both algorithms enjoy the same underlying idea: use the gradients of a larger portion of the clients to estimate the momentum term.

\begin{algorithm}[H]
\algsetup{linenosize=\tiny}
\footnotesize
\caption{\textsc{GHBM (practical version)}}
\label{algo:ghbm_practical_app}

\begin{algorithmic}[1]
\REQUIRE initial model $\theta^0$, $K$ clients, $C$ participation ratio, $T$ number of total round, $\eta$ and $\eta_l$ learning rates, $\tau \in \mathbb{N}^+$.
\FOR{$t=1$ to $T$} 
    \STATE $\mathcal{S}^t \leftarrow $ subset of clients  $\sim\mathcal{U}(\mathcal{S}, \max(1, K\cdot C))$
    \FOR{$i \in \mathcal{S}^t $ \textbf{in parallel}}
        \STATE $\theta_{i}^{t,0} \leftarrow \theta^{t-1}$
        \FOR{$j=1$ to $J$}
            \STATE sample a mini-batch $d_{i,j}$ from $\mathcal{D}_i$
            \STATE $u_i^{t,j} \leftarrow \nabla f_i(\theta_i^{t, j-1}, d_{i,j}) + \beta \tilde{m}^t_{\tau}$
            \STATE $\theta_i^{t,j} \leftarrow \theta_i^{t,j-1} - \eta_l u_i^{t,j}$
        \ENDFOR
    \ENDFOR
    \STATE $u^t \leftarrow \frac{1}{|\S{t}|} \sum_{i \in \S{t}} \left(\theta^{t-1} - \theta_i^{t, J} \right)$
    \STATE $\theta^t \leftarrow \theta^{t-1} -\eta u^t$
    \STATE $\tilde{m}^{t+1}_\tau \leftarrow \frac{1}{\tau J}\left(\theta^{t-\tau} - \theta^t\right)$
\ENDFOR
\end{algorithmic}
\end{algorithm}

\begin{algorithm}[H]
\algsetup{linenosize=\tiny}
\footnotesize
\caption{\textsc{GHBM (theory version)}}
\label{algo:ghbm_theory_app}

\begin{algorithmic}[1]
\REQUIRE initial model $\theta^0$, $K$ clients, $C$ participation ratio, $T$ number of total round, $\eta$ and $\eta_l$ learning rates, $\tau \in \mathbb{N}^+$. 
\FOR{$t=1$ to $T$} 
    \STATE $\mathcal{S}^t \leftarrow $ subset of clients  $\sim\mathcal{U}(\mathcal{S}, \max(1, K\cdot C))$
    \FOR{$i \in \mathcal{S}^t $ \textbf{in parallel}}
        \STATE $\theta_{i}^{t,0} \leftarrow \theta^{t-1}$
        \FOR{$j=1$ to $J$}
            \STATE sample a mini-batch $d_{i,j}$ from $\mathcal{D}_i$
            \STATE $u_i^{t,j} \leftarrow \beta \nabla f_i(\theta_i^{t, j-1}, d_{i,j}) + (1-\beta) \tilde{m}^t_{\tau}$
            \STATE $\theta_i^{t,j} \leftarrow \theta_i^{t,j-1} - \eta_l u_i^{t,j}$
        \ENDFOR
    \ENDFOR
    \STATE $u^t \leftarrow \frac{1}{\eta_l|\S{t}|J} \sum_{i \in \S{t}} \left(\theta^{t-1} - \theta_i^{t, J} \right)$
    \STATE $\bar{\theta}^t \leftarrow \theta^{t-1} - u^t + (1-\beta)\tilde{m}^t_\tau$
    \STATE $\tilde{m}^{t+1}_\tau \leftarrow (1-\beta)\tilde{m}^t_\tau + \frac{1}{\tau}\left(\bar{\theta}^{t-\tau} - \bar{\theta}^t\right)$
    \STATE $\theta^t \leftarrow \theta^{t-1} - \eta \tilde{m}^{t+1}_\tau$
\ENDFOR
\end{algorithmic}
\end{algorithm}

In the following, we list the differences between the two:
\begin{enumerate}
    \item Explicit use of $\tau$-averaged gradients when updating the momentum term (line 13). This can be implemented by keeping server-side an auxiliary sequence of models $\bar{\theta}^t$, in which the momentum added client side is subtracted server-side (line 12), such that taking the difference of two models gives the sum of pseudo-grads.
    \item Use of convex sum in local updates (line 7). This is done to align with the formulation of momentum methods in \cite{cheng2024momentum}, and more in general with the formulation of momentum commonly analyzed in literature. There is no theoretical difference between the two versions, as they only differ by a constant scaling \citep{liu2020SGDM}.
    \item Use of gradients averaged over local steps (line 11). This is done to align with the analysis of \cite{cheng2024momentum, xu2021fedcm}, and it is equivalent to coupling server and client learning rates (\ie setting $\eta=\gamma J\eta_l$ in \cref{algo:ghbm_theory_app}, where $\gamma$ is the server learning rate we would use in \cref{algo:ghbm_practical_app}).
\end{enumerate}
The two algorithms have similar performances, which are reported in \cref{fig:compare_ghbm_theory}

\begin{figure}[!h]
    \centering
    \includegraphics[width=0.45\linewidth]{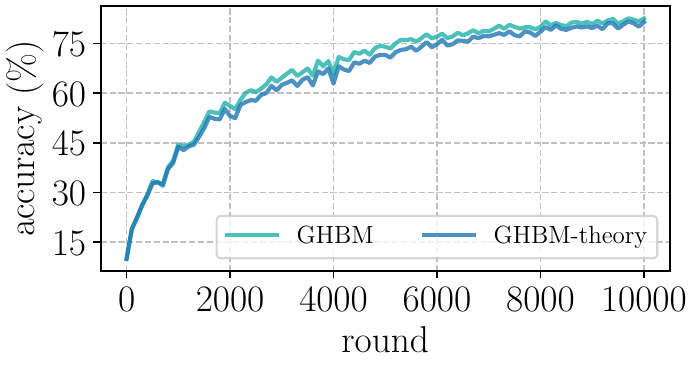}
    \includegraphics[width=0.45\linewidth]{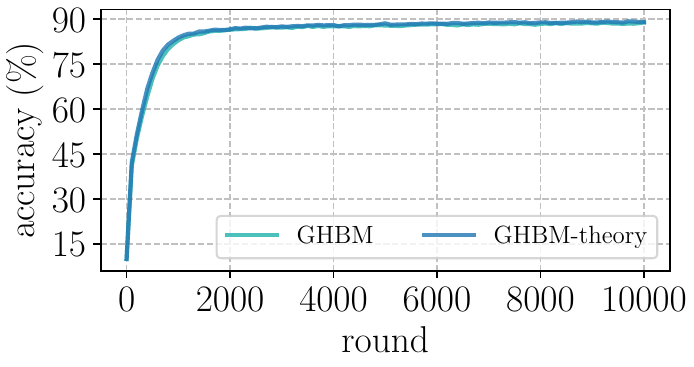}
    \includegraphics[width=0.45\linewidth]{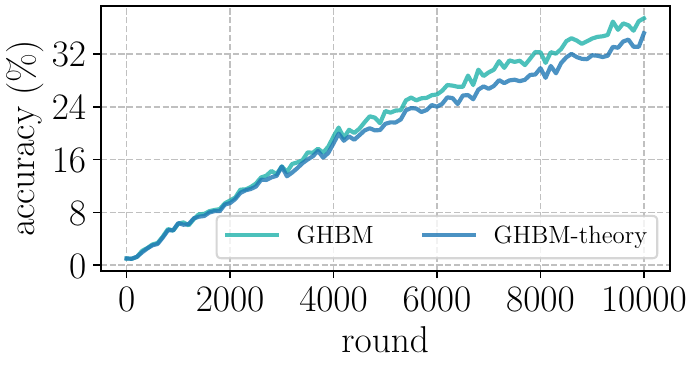}
    \includegraphics[width=0.45\linewidth]{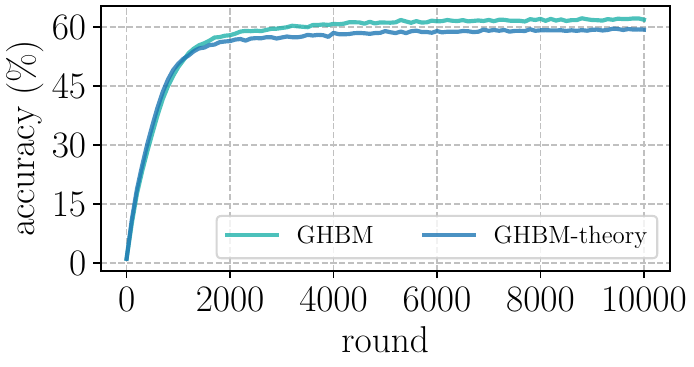}
    \caption{Comparing the {\ghb} implementation analyzed in theory (\cref{algo:ghbm_theory_app}) with the one proposed in the main paper (\cref{algo:ghbm_practical_app}). The plots show the convergence rate on \cifar{10} (top) and \cifar{100} (bottom), in \textsc{non-iid} (left) and \textsc{iid} (right) scenarios with {\resnet} architecture.}
    \label{fig:compare_ghbm_theory}
\end{figure}

\subsection*{Preliminaries}
Our convergence proof for {\ghb} is based on the recent work of \cite{cheng2024momentum}, which offers new proof techniques for momentum-based FL algorithms.
Throughout the proofs we use the following auxiliary variables to facilitate the presentation:

\begin{align}
    \drift{t} &:= \frac{1}{|\S{}|J} \sum_{j=1}^{J} \sum_{i=1}^{|\S{}|} \expect{\normsq{\theta_{i}^{t,j} - \theta^{t-1}}} \label{def:cdrift} \\
    \svrvar{t} &:= \expect{\normsq{\nabla f(\theta^{t-1}) - \tilde{m}^{t+1}_\tau}} \label{eq:svrvar} \\
    \zeta_{i}^{t,j} &:= \expect{\theta_i^{t, j+1} - \theta_{i}^{t, j}} \label{def:fs_drift} \\
    \fsdrift{t} &:= \frac{1}{|\S{}|}\sum_{i=1}^{|\S{}|} \expect{\normsq{\zeta_i^{t, 0}}} \nonumber \\
    \Lambda_t &:=\expect{\normsq{\left(\frac{1}{\tau} \sum_{k=t-\tau+1}^{t} \frac{1}{|\S{k}|J} \sum_{i=1}^{|\S{k}|} \sum_{j=1}^{J} \tilde{g}_i^{k,j}({\theta_i^{k,j-1}}) \right) - g^{t_\tau} }} \label{def:delayed_mgrad} \\
    \gamma_t &:=\expect{\normsq{g^{t_{\tau}} - \nabla f(\theta^{t-1})}}
\end{align}

Additionally, here we report the \textit{bounded gradient heterogeneity} assumption. It is used to quantify the heterogeneity reduction effect of {\ghb} varying its $\tau$ hyperparameter. Notice that our main claim does not depend on this assumption, as for the optimal value of $\tau=\nicefrac{1}{C}$ the assumption is not needed (see \cref{lemma:tau_ghb}).

\subsection{Momentum Expressions}
In this section we report the derivation of the momentum expressions in \cref{eq:mom_hb,eq:ghb} from the main paper.

\begin{lemma}[Heavy-Ball Formulation of Classical Momentum]
\label{lemma:expr_hb}
Let us consider the following classical formulation of momentum:
\begin{align}
    \tilde{m}^t &= \beta \tilde{m}^{t-1} + \tilde{g}^t(\theta^{t-1}) \label{eq:mom_exp_m} \\
    \theta^{t} &= \theta^{t-1} - \eta \tilde{m}^t \label{eq:mom_exp_theta}
\end{align}
The same update rule can be equivalently expressed with the following, known as \textit{heavy-ball} formulation:
\begin{equation}
\label{eq:mom_hb_proof}
    \theta^{t} = \theta^{t-1} + \beta(\theta^{t-1}-\theta^{t-2})  - \eta \tilde{g}(\theta^{t-1})
\end{equation}
\end{lemma}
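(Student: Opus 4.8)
The plan is to eliminate the auxiliary momentum variable $\tilde{m}^t$ from the pair of recursions \eqref{eq:mom_exp_m}--\eqref{eq:mom_exp_theta} and show that the resulting two-step recursion on the iterates $\theta^t$ is exactly \eqref{eq:mom_hb_proof}. The key observation is that \eqref{eq:mom_exp_theta} can be inverted: since $\theta^{t} = \theta^{t-1} - \eta \tilde{m}^t$, we have $\tilde{m}^t = \tfrac{1}{\eta}(\theta^{t-1} - \theta^{t})$, and likewise, shifting the index by one, $\tilde{m}^{t-1} = \tfrac{1}{\eta}(\theta^{t-2} - \theta^{t-1})$.

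First I would substitute both of these expressions into the momentum recursion \eqref{eq:mom_exp_m}, obtaining
\begin{equation*}
  \tfrac{1}{\eta}(\theta^{t-1} - \theta^{t}) = \beta\,\tfrac{1}{\eta}(\theta^{t-2} - \theta^{t-1}) + \tilde{g}^t(\theta^{t-1}).
\end{equation*}
Then I would multiply through by $\eta$ and solve for $\theta^{t}$, which gives $\theta^{t} = \theta^{t-1} - \beta(\theta^{t-2} - \theta^{t-1}) - \eta\,\tilde{g}^t(\theta^{t-1}) = \theta^{t-1} + \beta(\theta^{t-1} - \theta^{t-2}) - \eta\,\tilde{g}^t(\theta^{t-1})$, which is precisely \eqref{eq:mom_hb_proof}. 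Conversely, the same chain of equalities read backwards shows that any sequence satisfying \eqref{eq:mom_hb_proof}, together with the definition $\tilde{m}^t := \tfrac{1}{\eta}(\theta^{t-1}-\theta^{t})$, satisfies \eqref{eq:mom_exp_m}--\eqref{eq:mom_exp_theta}, so the two formulations are genuinely equivalent, not merely one-directional.

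The only subtlety, and the closest thing to an obstacle, is the treatment of the first step $t=1$, where $\theta^{-1}$ (equivalently $\tilde{m}^0$) is not defined by the recursion itself; this is handled by adopting the standard convention $\tilde{m}^0 = 0$, which corresponds to setting $\theta^{-1} := \theta^{0}$ so that the term $\beta(\theta^{0}-\theta^{-1})$ vanishes at $t=1$, matching the classical form exactly. Beyond that, the argument is a one-line algebraic substitution, so I would keep the exposition short and simply display the substitution and the rearrangement.
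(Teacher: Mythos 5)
Your proposal is correct and follows essentially the same route as the paper's proof: invert the update step to write $\tilde{m}^t=\tfrac{1}{\eta}(\theta^{t-1}-\theta^t)$ and $\tilde{m}^{t-1}=\tfrac{1}{\eta}(\theta^{t-2}-\theta^{t-1})$, substitute into the momentum recursion, and rearrange. The added remarks on the converse direction and the $\tilde{m}^0=0$ initialization are fine but not needed beyond what the paper states.
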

\begin{proof}
    First derive the expression of $\tilde{m}^t$ from \cref{eq:mom_exp_theta}, both for time $t$ and $t-1$:
    \begin{align*}
        \tilde{m}^t &= \frac{\left(\theta^{t-1}-\theta^t\right)}{\eta} \\
        \tilde{m}^{t-1} &= \frac{\left(\theta^{t-2}-\theta^{t-1}\right)}{\eta} \\
    \end{align*}
    Now plug these expressions into \cref{eq:mom_exp_m} to obtain (\ref{eq:mom_hb_proof}):
    \begin{align*}
        \frac{\left(\theta^{t-1}-\theta^t\right)}{\eta} &= \beta \frac{\left(\theta^{t-2}-\theta^{t-1}\right)}{\eta} + \tilde{g}^t(\theta^{t-1}) \\
        \left(\theta^t - \theta^{t-1}\right) &= \beta \left(\theta^{t-1}-\theta^{t-2}\right) - \eta\tilde{g}^t(\theta^{t-1}) \\
        \theta^t &= \theta^{t-1}  + \beta \left(\theta^{t-1}-\theta^{t-2} \right) - \eta\tilde{g}^t(\theta^{t-1})
    \end{align*}
\end{proof}

\begin{lemma}[Heavy-Ball formulation of generalized momentum]
\label{lemma:expr_ghb}
    Let us consider the following generalized formulation of momentum:
    \begin{align}
        \tilde{m}_\tau^t &= \frac{1}{\tau}\sum_{k=1}^{\tau} \beta \tilde{m}_\tau^{t-k} + \tilde{g}^t(\theta^{t-1}) \label{eq:ghb_exp_m}\\
        \theta^t &= \theta^{t-1} - \eta \tilde{m}_\tau^{t} \label{eq:ghb_exp_theta}
    \end{align}
    The same update rule can be equivalently expressed in an heavy ball form, which we call as \textit{Generalized Heavy-Ball} momentum ({\ghb}):
    \begin{equation}
    \label{eq:ghb_proof}
        \theta^{t} = \theta^{t-1} + \frac{\beta}{\tau}(\theta^{t-1}-\theta^{t-\tau-1}) - \eta \tilde{g}(\theta^{t-1})
    \end{equation}
\end{lemma}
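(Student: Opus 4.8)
The plan is to mirror the derivation used for \cref{lemma:expr_hb}: express the momentum iterate in terms of consecutive model differences via \cref{eq:ghb_exp_theta}, substitute into the defining recursion \cref{eq:ghb_exp_m}, and collapse the resulting sum by telescoping.

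Concretely, I would first solve \cref{eq:ghb_exp_theta} for $\tilde{m}_\tau$ at time $t$ and at each shifted time $t-k$, which gives $\tilde{m}_\tau^{t} = \eta^{-1}(\theta^{t-1}-\theta^{t})$ and $\tilde{m}_\tau^{t-k} = \eta^{-1}(\theta^{t-k-1}-\theta^{t-k})$ for $k=1,\dots,\tau$. Plugging these into \cref{eq:ghb_exp_m} rewrites that identity as $\eta^{-1}(\theta^{t-1}-\theta^{t}) = \frac{\beta}{\tau\eta}\sum_{k=1}^{\tau}(\theta^{t-k-1}-\theta^{t-k}) + \tilde{g}^t(\theta^{t-1})$.

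The one nontrivial observation is that $\sum_{k=1}^{\tau}(\theta^{t-k-1}-\theta^{t-k})$ is a telescoping sum equal to $\theta^{t-\tau-1}-\theta^{t-1}$. Substituting this, multiplying through by $\eta$, and rearranging gives $\theta^{t} = \theta^{t-1} + \frac{\beta}{\tau}(\theta^{t-1}-\theta^{t-\tau-1}) - \eta\,\tilde{g}^t(\theta^{t-1})$, which is exactly \cref{eq:ghb_proof}; the special case $\tau=1$ recovers \cref{eq:mom_hb_proof} of \cref{lemma:expr_hb}, confirming that {\ghb} generalizes classical heavy-ball momentum.

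There is no real obstacle here: the computation is elementary once the telescoping is spotted. The only point deserving a line of care is the boundary regime — the manipulation presumes that the iterates $\theta^{t-\tau-1},\dots,\theta^{t-1}$ and the momenta $\tilde{m}_\tau^{t-1},\dots,\tilde{m}_\tau^{t-\tau}$ are defined, so for the first few rounds one adopts the convention $\tilde{m}_\tau^{s}=0$ for $s\le 0$ (equivalently $\theta^{s}=\theta^{s-1}$), consistent with the initialization $\tilde{m}^{0}_\tau=0$ assumed in \cref{thm:GHBM}, under which the equivalence holds for every $t$.
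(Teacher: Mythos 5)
Your proof is correct and follows essentially the same route as the paper: express $\tilde{m}_\tau^{t-k}$ as $\eta^{-1}(\theta^{t-k-1}-\theta^{t-k})$ from \cref{eq:ghb_exp_theta}, substitute into \cref{eq:ghb_exp_m}, and telescope the sum to obtain \cref{eq:ghb_proof}. Your added remarks on the initialization convention and the $\tau=1$ special case are harmless extras, not departures from the paper's argument.
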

\begin{proof}
    First derive the expression of $\tilde{m}_\tau^t$ from \cref{eq:ghb_exp_theta}, both for time $t$ and $t-1$:
    \begin{align*}
        \tilde{m}_\tau^t &= \frac{\left(\theta^{t-1}-\theta^t\right)}{\eta} \\
        \tilde{m}_\tau^{t-1} &= \frac{\left(\theta^{t-2}-\theta^{t-1}\right)}{\eta} \\
    \end{align*}
    Now plug these expressions into \cref{eq:ghb_exp_m}:
    \begin{align*}
        \frac{\left(\theta^{t-1}-\theta^t\right)}{\eta} &= \frac{\beta}{\tau}\sum_{k=1}^{\tau} \frac{\left( \theta^{t-k-1} - \theta^{t-k}\right)}{\eta} + \tilde{g}^t(\theta^{t-1}) \\
        \left(\theta^{t}-\theta^{t-1}\right) &= \frac{\beta}{\tau}\sum_{k=1}^{\tau} \left( \theta^{t-k} - \theta^{t-k-1}\right) - \eta\tilde{g}^t(\theta^{t-1}) \\
        \theta^{t} &= \theta^{t-1} + \frac{\beta}{\tau}\sum_{k=1}^{\tau} \left( \theta^{t-k} - \theta^{t-k-1}\right) - \eta\tilde{g}^t(\theta^{t-1}) \\
        \theta^{t} &= \theta^{t-1}  + \frac{\beta}{\tau}(\theta^{t-1}-\theta^{t-\tau-1}) - \eta \tilde{g}^t(\theta^{t-1})
    \end{align*}
    Where the last equality (\ref{eq:ghb_proof}) comes from telescoping the summation on the rhs.
\end{proof}

\subsection{Technical Lemmas}
Now we cover some technical lemmas which are useful for computations later on. These are known results that are reported here for the convenience of the reader.

\begin{lemma}[relaxed triangle inequality] 
\label{lemma:relaxed_triangle}
Let $\{\boldsymbol{v}_1, \ldots, \boldsymbol{v}_n\}$ be $n$ vectors in $\mathbb{R}^d$. Then, the following is true:
\begin{equation*}
    \left\|  \sum_{i=1}^n \boldsymbol{v}_i  \right\|^2
    \leq n \sum_{i=1}^n \left\|\boldsymbol{v}_i\right\|^2
\end{equation*}
\end{lemma}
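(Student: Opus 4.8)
The plan is to prove the relaxed triangle inequality \cref{lemma:relaxed_triangle} by a direct application of the Cauchy--Schwarz inequality (equivalently, the power-mean or QM--AM inequality) in the inner-product space $\mathbb{R}^d$. The statement is that for any $n$ vectors $\boldsymbol{v}_1,\dots,\boldsymbol{v}_n$,
\[
    \left\| \sum_{i=1}^n \boldsymbol{v}_i \right\|^2 \leq n \sum_{i=1}^n \left\| \boldsymbol{v}_i \right\|^2 .
\]

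\textbf{Step 1: Reduce to a scalar inequality via Cauchy--Schwarz.} I would write $\left\| \sum_{i=1}^n \boldsymbol{v}_i \right\|^2 = \left\langle \sum_{i=1}^n \boldsymbol{v}_i, \sum_{j=1}^n \boldsymbol{v}_j \right\rangle$ and expand, or more slickly view $\sum_{i=1}^n \boldsymbol{v}_i = \sum_{i=1}^n 1 \cdot \boldsymbol{v}_i$ as a ``dot product'' of the all-ones vector $(1,\dots,1) \in \mathbb{R}^n$ with the tuple $(\boldsymbol{v}_1,\dots,\boldsymbol{v}_n) \in (\mathbb{R}^d)^n$. Applying Cauchy--Schwarz (in the Hilbert space $(\mathbb{R}^d)^n$ with the natural inner product) gives
\[
    \left\| \sum_{i=1}^n \boldsymbol{v}_i \right\|^2 \leq \left( \sum_{i=1}^n 1^2 \right)\left( \sum_{i=1}^n \|\boldsymbol{v}_i\|^2 \right) = n \sum_{i=1}^n \|\boldsymbol{v}_i\|^2 ,
\]
which is exactly the claim.

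\textbf{Step 2 (alternative, self-contained route):} If a more elementary write-up is preferred, I would instead expand the square, use $2\langle \boldsymbol{v}_i, \boldsymbol{v}_j \rangle \leq \|\boldsymbol{v}_i\|^2 + \|\boldsymbol{v}_j\|^2$ (Young's inequality / AM--GM on the cross terms), and sum over all pairs: $\left\| \sum_i \boldsymbol{v}_i \right\|^2 = \sum_i \|\boldsymbol{v}_i\|^2 + \sum_{i \neq j} \langle \boldsymbol{v}_i, \boldsymbol{v}_j \rangle \leq \sum_i \|\boldsymbol{v}_i\|^2 + \sum_{i \neq j} \tfrac{1}{2}(\|\boldsymbol{v}_i\|^2 + \|\boldsymbol{v}_j\|^2) = \sum_i \|\boldsymbol{v}_i\|^2 + (n-1)\sum_i \|\boldsymbol{v}_i\|^2 = n \sum_i \|\boldsymbol{v}_i\|^2$, counting that each index $i$ appears in $n-1$ unordered pairs.

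\textbf{Anticipated obstacle.} Honestly, there is no substantive obstacle here: this is a standard textbook inequality and either route above is a few lines. The only thing to be careful about is bookkeeping in the pairwise-expansion version — namely getting the multiplicity $(n-1)$ right when summing $\tfrac{1}{2}(\|\boldsymbol{v}_i\|^2+\|\boldsymbol{v}_j\|^2)$ over the $\binom{n}{2}$ unordered pairs (or the $n(n-1)$ ordered pairs with the $\tfrac12$). I would present the Cauchy--Schwarz proof as the main one since it is cleanest and least error-prone.
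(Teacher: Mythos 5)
Your proposal is correct, but it takes a (mildly) different route from the paper. The paper proves the lemma via Jensen's inequality: since $\boldsymbol{v}\mapsto\|\boldsymbol{v}\|^2$ is convex, applying Jensen with the uniform weights $\lambda_i=1/n$ gives $\bigl\|\tfrac1n\sum_i \boldsymbol{v}_i\bigr\|^2\le\tfrac1n\sum_i\|\boldsymbol{v}_i\|^2$, and multiplying through by $n^2$ yields the claim. You instead invoke Cauchy--Schwarz with the all-ones coefficient vector, or alternatively expand the square and bound each cross term by Young's inequality $2\langle\boldsymbol{v}_i,\boldsymbol{v}_j\rangle\le\|\boldsymbol{v}_i\|^2+\|\boldsymbol{v}_j\|^2$; your multiplicity count of $(n-1)$ in the second route is right, and that argument is airtight. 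For this elementary fact the three proofs are essentially interchangeable (Cauchy--Schwarz against the all-ones vector is the QM--AM inequality, which is Jensen for $x^2$), so nothing of substance separates them; the paper's Jensen phrasing has the minor advantage of directly yielding the weighted form $\|\sum_i\lambda_i\boldsymbol{v}_i\|^2\le\sum_i\lambda_i\|\boldsymbol{v}_i\|^2$, which is the shape that recurs in its later drift computations, while your pairwise expansion is the most self-contained. One small caution on your "slick" Step 1: the inner product of the all-ones vector in $\mathbb{R}^n$ with a tuple in $(\mathbb{R}^d)^n$ is a vector, not a scalar, so Cauchy--Schwarz in that Hilbert space does not literally apply as written; the clean statement you want is the vector-valued bound $\|\sum_i a_i\boldsymbol{v}_i\|^2\le\bigl(\sum_i a_i^2\bigr)\bigl(\sum_i\|\boldsymbol{v}_i\|^2\bigr)$, which itself is proved by the triangle inequality followed by scalar Cauchy--Schwarz (or by your Step 2). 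With that phrasing fixed, either of your write-ups is a valid replacement for the paper's.
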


\begin{proof}
    By Jensen's inequality, given a convex function $\phi$, a series of $n$ vectors $\{\boldsymbol{v}_1, \ldots, \boldsymbol{v}_n\}$ and a series of non-negative coefficients $\lambda_i$ with $\sum_{i=1}^n \lambda_i =1$, it results that 
    \begin{equation*}
        \phi\left( \sum_{i=1}^n \lambda_i \boldsymbol{v}_i\right) 
        \leq
        \sum_{i=1}^n \lambda_i \phi\left( \boldsymbol{v}_i\right) 
    \end{equation*}
    Since the function $\boldsymbol{v}\rightarrow \|\boldsymbol{v}\|^2$ is convex, we can use this inequality with coefficients $\lambda_1 = \ldots = \lambda_n = 1/n$, with $\sum_{i=1}^n \lambda_i =1$, and obtain that 
    \begin{equation*}
    \left\| \frac{1}{n}\sum_{i=1}^n \boldsymbol{v}_i\right\|^2 
    =
    \frac{1}{n^2}  \left\| \sum_{i=1}^n \boldsymbol{v}_i\right\|^2 
    \leq 
    \frac{1}{n}\sum_{i=1}^n \|\boldsymbol{v}_i\|^2
    \end{equation*}
\end{proof}

\subsection{Proofs of Main Lemmas}
In this section we provide the proofs of the main theoretical results presented in the main paper.

\paragraph{Proof of Lemma~\ref{lemma:tau_ghb}}(Deviation of $\tau$-averaged gradient from true gradient)\\

Let define $\S{}_d:= \S{} - \S{t}_\tau$ and $\S{}_i:= \S{} \cap \S{t}_\tau$.
Let us note that when all clients participate, \textit{i.e.} $\S{}_d = \emptyset$, the claim is trivially true. For $\S{}_d \ne \emptyset$, we can expand the terms at the left-hand side using their definitions as follows:

 \begin{align}
    \gamma_t &= \expect{\normsq{\frac{1}{|\S{t}_{\tau}|} \sum_{i=1}^{|\S{t}_{\tau}|} g_i^t - \frac{1}{|\S{}|}\sum_{i=1}^{|\S{}|} g_i^t}} \\
    &=\expect{\normsq{ \sum_{i \in \S{}_i}  \left(\frac{1}{|\S{t}_{\tau}|} - \frac{1}{|\S{}|} \right)g_i^t -   \sum_{k \in \S{}_d}\frac{1}{|\S{}|} g_k^t }} \\
    \overset{\text{lemma \ref{lemma:relaxed_triangle}}}&{\leq} 2\left(
    \underbrace{\expect{\normsq{ \sum_{i \in \S{}_i}  \left(\frac{1}{|\S{t}_{\tau}|}  - \frac{1}{|\S{}|} \right)g_i^t }}}_{\mathcal{T}_3} +   
    \underbrace{\expect{\normsq{\sum_{k \in \S{}_d}\frac{1}{|\S{}|} g_k^t }}}_{\mathcal{T}_4} \right)  \label{eq:proof_T3T4}
 \end{align}

 Let us consider first $\mathcal{T}_3$. We have:

 \begin{align}
    \mathcal{T}_3 &= \expect{\normsq{ \sum_{i \in \S{}_i}  \left(\frac{1}{|\S{t}_{\tau}|}  - \frac{1}{|\S{}|} \right)g_i^t }} 
    = \expect{\left(\frac{1}{|\S{t}_{\tau}|}  - \frac{1}{|\S{}|} \right)^2 \normsq{ \sum_{i \in \S{}_i}  g_i^t }} \\
    \overset{\text{lemma~\ref{lemma:relaxed_triangle}}}&{\leq}
    \expect{ \left(\frac{1}{|\S{t}_{\tau}|}  - \frac{1}{|\S{}|} \right)^2 |\S{}_i|  \sum_{i \in \S{}_i}  \normsq{g_i^t }} \\
    &= \expect{ \left(\frac{1}{|\S{t}_{\tau}|}  - \frac{1}{|\S{}|} \right)^2 |\S{}_i|  \sum_{i \in \S{}_i}  \normsq{g_i^t - \nabla f(\theta^{t-1}) + \nabla f(\theta^{t-1})}}\\
    \overset{\text{lemma \ref{lemma:relaxed_triangle}}}&{\leq} 2\expect{ \left(\frac{1}{|\S{t}_{\tau}|}  - \frac{1}{|\S{}|} \right)^2 |\S{}_i|  \sum_{i \in \S{}_i} \left( \normsq{g_i^t - \nabla f(\theta^{t-1})} + \normsq{\nabla f(\theta^{t-1})} \right)}\\
    \overset{\text{assumption \ref{assum:bounded_gd}}}&{\leq} 2\expect{ \left(\frac{1}{|\S{t}_{\tau}|}  - \frac{1}{|\S{}|} \right)^2 |\S{}_i|  \left( |\S{}_i|G^2 + \sum_{i \in \S{}_i} \normsq{\nabla f(\theta^{t-1})} \right)}
\end{align}
Since the term $\nabla f(\theta^{t-1})$ does not depend on the index $i$, we get
\begin{align}
    & 2\expect{ \left(\frac{1}{|\S{t}_{\tau}|}  - \frac{1}{|\S{}|} \right)^2 |\S{}_i|  \left( |\S{}_i|G^2 + \sum_{i \in \S{}_i} \normsq{\nabla f(\theta^{t-1})} \right)} \\
    &= 2\expect{ \left(\frac{1}{|\S{t}_{\tau}|}  - \frac{1}{|\S{}|} \right)^2 |\S{}_i|  \left( |\S{}_i|G^2 + |\S{}_i|\normsq{\nabla f(\theta^{t-1})} \right)} \\
    &= 2\expect{ \left(\frac{1}{|\S{t}_{\tau}|}  - \frac{1}{|\S{}|} \right)^2 |\S{}_i|^2}  \left( G^2 + \normsq{\nabla f(\theta^{t-1})} \right)
 \end{align}
 Now, note that $\S{t}_\tau \subseteq \S{} \implies |\S{}_i| = |\S{t}_\tau| $. Therefore, 
 \begin{align}
    \label{eq:proof_T3}
     \mathcal{T}_3 &\leq 2\expect{ \left(\frac{1}{|\S{t}_{\tau}|}  - \frac{1}{|\S{}|} \right)^2 |\S{}_i|^2}  \left( G^2 + \normsq{\nabla f(\theta^{t-1})} \right) \\
     & = 2 \expect{ \left(\frac{|\S{}| - |\S{t}_{\tau}|}{|\S{}|}\right)^2}  \left( G^2 + \normsq{\nabla f(\theta^{t-1})} \right)
 \end{align}

 Moving now to $\mathcal{T}_4$, we have:
 \begin{align}
    \mathcal{T}_4 &= \expect{\normsq{\sum_{k \in \S{}_d}\frac{1}{|\S{}|} g_k^t }} 
    \leq \expect{ \left( \frac{1}{|\S{}|} \right)^2 \normsq{\sum_{k \in \S{}_d} g_k^t }}\\
    \overset{\text{lemma \ref{lemma:relaxed_triangle}}}&{\leq}  \expect{ \left( \frac{1}{|\S{}|} \right)^2 |\S{}_d| \sum_{k \in \S{}_d} \normsq{g_k^t }}\\
    &= \expect{ \left( \frac{1}{|\S{}|} \right)^2 |\S{}_d| \sum_{k \in \S{}_d} \normsq{g_k^t - \nabla f(\theta^{t-1}) + \nabla f(\theta^{t-1})}} \\
    \overset{\text{lemma \ref{lemma:relaxed_triangle}}}&{\leq} 2\expect{ \left( \frac{1}{|\S{}|} \right)^2 |\S{}_d| \sum_{k \in \S{}_d}  \left( \normsq{g_k^t - \nabla f(\theta^{t-1})} + \normsq{\nabla f(\theta^{t-1})} \right)}\\
     \overset{\text{assumption \ref{assum:bounded_gd}}}&{\leq} 2\expect{ \left( \frac{1}{|\S{}|} \right)^2 |\S{}_d| \left( |\S{}_d| G^2 + \sum_{k \in \S{}_d}  \normsq{\nabla f(\theta^{t-1})} \right)} \\
     &{=} 2\expect{\left(\frac{1}{|\S{}|} \right)^2  |\S{}_d|\left( |\S{}_d| G^2 + |\S{}_d|\normsq{\nabla f(\theta^{t-1})} \right)} \\
     &{=}  2\expect{\left(\frac{|\S{}_d|}{|\S{}|} \right)^2}  \left(G^2 + \normsq{\nabla f(\theta^{t-1})} \right)\\
 \end{align}
Observing that $|\S{}_d| = |\S{}| - |\S{t}_\tau|$ we obtain:
\begin{equation}
    \label{eq:proof_T4}
    \mathcal{T}_4 \leq 2\expect{\left(\frac{|\S{}_d|}{|\S{}|} \right)^2}  \left(G^2 + \normsq{\nabla f(\theta^{t-1})} \right) = \expect{ \left(\frac{|\S{}| - |\S{t}_{\tau}|}{|\S{}|}\right)^2}  \left( G^2 + \normsq{\nabla f(\theta^{t-1})} \right)
\end{equation}

Finally, by plugging (\ref{eq:proof_T3}) and (\ref{eq:proof_T4}) in (\ref{eq:proof_T3T4}) we obtain
\begin{equation*}
    \expectcs{\normsq{g^{(t)_{\tau}}(\theta) - \nabla f(\theta)}}{t} \leq 8 \expectcs{ \left(\frac{|\S{}| - |\S{t}_{\tau}|}{|\S{}|}\right)^2}{t}  \left( G^2 + \normsq{\nabla f(\theta)} \right)
\end{equation*}
which concludes the proof.
\begin{flushright}\qedsymbol\end{flushright}

\paragraph{Proof of Corollary~\ref{corollary:tau_ghb_cyclic}}
This corollary follows from Lemma~\ref{lemma:tau_ghb}, which states that 
\begin{equation*}
    \expectcs{\normsq{g^{(t)_{\tau}}(\theta) - \nabla f(\theta)}}{t} \leq 8 \expectcs{ \left(\frac{|\S{}| - |\S{t}_{\tau}|}{|\S{}|}\right)^2}{t}  \left( G^2 + \normsq{\nabla f(\theta)} \right)
\end{equation*}

To prove the results, we use (i) \cref{assum:cyclic_part}, (ii) the fact that  $|\S{t}| = |\S{}| C\; \forall t$ and (iii) $\S{t}_\tau$ is union of $\tau$ disjoint $\mathcal{S}^t$ sets. Using points (i)-(iii), and assuming $\tau \in [0, \frac{1}{C}]$, it follows that:
\begin{equation*}
      \normsq{g^{(t)_{\tau}}(\theta) - \nabla f(\theta)} \leq 8  \left(1-\tau C\right)^2  \left( G^2 + \normsq{\nabla f(\theta)} \right)
\end{equation*}
\begin{flushright}\qedsymbol\end{flushright}

\newpage
\paragraph{Proof of Lemma~\ref{lemma:delayed_mgrad}}(Bounded error of delayed gradients)\\

    Note that, by \cref{assum:cyclic_part}, $|\S{t}|=|\S{}|C\, \forall t$, and that $|\S{}|C\tau = |\S{t}_\tau| $:
    \begin{align}
        \Lambda_t &= \expect{\normsq{\frac{1}{\tau} \sum_{k=t-\tau+1}^{t} \frac{1}{|\S{k}|J} \sum_{i=1}^{|\S{k}|} \sum_{j=1}^{J} \tilde{g}_i^{k,j}({\theta_i^{k,j-1}})  - g^{t_\tau} }} \\
        &= \expect{\normsq{\frac{1}{\tau} \sum_{k=t-\tau+1}^{t} \frac{1}{|\S{k}|J} \sum_{i=1}^{|\S{k}|} \sum_{j=1}^{J} \left(\tilde{g}_i^{k,j}({\theta_i^{k,j-1}})  - g_i({\theta^{t-1}}) \right)}} \\
        &= \expect{\normsq{\frac{1}{\tau} \sum_{k=t-\tau+1}^{t} \frac{1}{|\S{k}|J} \sum_{i=1}^{|\S{k}|} \sum_{j=1}^{J} \left(
        \tilde{g}_i^{k,j}({\theta_i^{k,j-1}})  - g_i({\theta_i^{k,j-1}}) 
        +g_i({\theta_i^{k,j-1}}) - g_i({\theta^{k-1}}) 
        +g_i({\theta^{k-1}}) - g_i({\theta^{t-1}})
        \right)}}\\
        &\leq 3 \left(\mathcal{T}_1 + \mathcal{T}_2 + \mathcal{T}_3 \right)
    \end{align}
    \begin{align}
        \mathcal{T}_1 &= \expect{\normsq{\frac{1}{\tau} \sum_{k=t-\tau+1}^{t} \frac{1}{|\S{k}|J} \sum_{i=1}^{|\S{k}|} \sum_{j=1}^{J} \left(
        \tilde{g}_i^{k,j}({\theta_i^{k,j-1}})  - g_i({\theta_i^{k,j-1}})\right)}}\\
        &\leq \frac{1}{\tau}\frac{\sigma^2}{|\S{t}|J} = \frac{\sigma^2}{|\S{t}_\tau|J} \\
        \mathcal{T}_2 &= \expect{\normsq{\frac{1}{\tau} \sum_{k=t-\tau+1}^{t} \frac{1}{|\S{k}|J} \sum_{i=1}^{|\S{k}|} \sum_{j=1}^{J} \left(
        g_i({\theta_i^{k,j-1}})  - g_i({\theta^{k-1}})\right)}}\\
        &\leq \frac{L^2}{|\S{}|J\tau}\sum_{k=t-\tau+1}^{t}\sum_{i=1}^{|\S{}|} \sum_{j=1}^{J}\expect{\normsq{\theta^{k,j-1} - \theta^{k-1}}} \\
        &= \frac{L^2}{\tau} \sum_{k=t-\tau+1}^{t}\drift{k} \\
        \mathcal{T}_3 &= \expect{\normsq{\frac{1}{\tau} \sum_{k=t-\tau+1}^{t} \frac{1}{|\S{k}|J} \sum_{i=1}^{|\S{k}|} \sum_{j=1}^{J} \left(
        g_i({\theta^{k-1}})  - g_i({\theta^{t-1}})\right)}}\\
        &\leq \frac{L^2}{|\S{}|\tau}\sum_{k=t-\tau+1}^{t}\sum_{i=1}^{|\S{}|}\expect{\normsq{\theta^{k-1} - \theta^{t-1}}} \\
        &\leq \frac{L^2}{\tau}\sum_{k=t-\tau+1}^{t}\expect{\normsq{\theta^{k-1} - \theta^{t-1}}} \\
        &= \frac{L^2}{\tau} \sum_{k=t-\tau+1}^{t}\left(t-k\right)\expect{\normsq{\theta^{k} - \theta^{k-1}}} \\
        &\leq 2L^2\eta^2 \sum_{k=t-\tau+1}^{t-1}\left(\expect{\normsq{\nabla f(\theta^{k-1}}} + \svrvar{k} \right)
    \end{align}
    So, combining with lemma \cref{lemma:client_drift,lemma:fs_drift} we have:
\begin{align}
    \sum_{t=1}^{T} \Lambda_t &\leq 3\left(\frac{T\sigma^2}{|\S{t}_\tau|J} + L^2 \sum^{T}_{t=1}\drift{t} + 2L^2\eta^2(\tau-1) \sum_{t=1}^{T-1}\left(\expect{\normsq{\nabla f(\theta^{t-1})}} + \svrvar{t} \right) \right) \\
    \overset{\text{lemma \ref{lemma:client_drift}}}&{=}
    3\bigg(\frac{T\sigma^2}{|\S{t}_\tau|J} + 2L^2\eta^2(\tau-1) \sum_{t=1}^{T-1}\left(\expect{\normsq{\nabla f(\theta^{t-1})}} + \svrvar{t} \right) \\
    &+ \underbrace{L^2TJ\eta_l^2\beta^2\sigma^2\left( 1+ 2J^3\eta_l^2\beta^2L^2\right)}_{\mathcal{T}_4} + 2J^2L^2e^2\sum^{T}_{t=1} \Xi_t)
    \bigg) \nonumber \\
    \overset{\text{lemma \ref{lemma:fs_drift}}}&{=}
    3\bigg(\frac{T\sigma^2}{|\S{t}_\tau|J} + 2L^2\eta^2(\tau-1) \sum_{t=1}^{T-1}\left(\expect{\normsq{\nabla f(\theta^{t-1})}} + \svrvar{t} \right) \\
    &+ \mathcal{T}_4 + 
    \underbrace{2J^2L^2e^2\left( 4\eta_l^2\left( (1-\beta)^2 +e(\beta\eta LT)^2\right)\right)}_{\alpha_1} \sum_{t=0}^{T-1}\left( \svrvar{t} + \expect{\normsq{\nabla f(\theta^{t-1})}}\right)
     \nonumber \\
    &+ \underbrace{2e^2J^2L^2(2e\eta_l^2\beta\tau T G_{\tau})}_{\mathcal{T}_5} \bigg)\nonumber \\
    &= 3\bigg(\frac{T\sigma^2}{|\S{t}_\tau|J} + \mathcal{T}_4 +\underbrace{\left( \alpha_1 +2L^2\eta_l^2 (\tau-1) \right)}_{\alpha_2}\sum_{t=1}^{T-1}\left(\expect{\normsq{\nabla f(\theta^{t-1})}} + \svrvar{t} \right) + \mathcal{T}_5\bigg)
\end{align}
\begin{flushright}\qedsymbol\end{flushright}

\subsection{Convergence Proof}

\begin{lemma}[Bounded variance of server updates]
Under \cref{assum:unbias_bvar,assum:smoothness}, it holds that:
\label{lemma:svrvar}
\begin{align}
    \sum_{t=1}^{T}\svrvar{t} &\leq \frac{8}{5\beta} \svrvar{0} + \frac{3}{5} \sum_{t=0}^{T-1} \expect{\normsq{\nabla f(\theta^{t-1})}} + 21\beta \frac{\sigma^2}{|\S{t}_\tau|J}T + \\
    &+\frac{448}{5}(\eta_lJL)^2(e^3\tau T) G_{\tau} + 6\beta \sum_{t=1}^{T}\gamma_t\nonumber
\end{align}
\end{lemma}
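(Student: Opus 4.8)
The plan is to turn the momentum update of \cref{algo:ghbm_theory_app} into a one-step recursion for the \emph{momentum error} and then sum that recursion over the horizon, closing a self-referential loop at the end. Concretely, I would first rewrite line~13 of \cref{algo:ghbm_theory_app} (using $u^t=\beta\tilde g^t+(1-\beta)\tilde m^t_\tau$ and the auxiliary sequence $\bar\theta^t$) in exponential-moving-average form, $\tilde m^{t+1}_\tau=(1-\beta)\tilde m^t_\tau+\beta\,\tilde g^{t_\tau}$, with $\tilde g^{t_\tau}$ the $\tau$-round-averaged stochastic server pseudo-gradient of \cref{lemma:delayed_mgrad}. Setting $e_t:=\nabla f(\theta^{t-1})-\tilde m^{t+1}_\tau$ (so $\svrvar{t}=\expect{\normsq{e_t}}$) and adding and subtracting $(1-\beta)\nabla f(\theta^{t-1})$ and $(1-\beta)\nabla f(\theta^{t-2})$, I obtain
\[
e_t=(1-\beta)\,e_{t-1}+(1-\beta)\bigl(\nabla f(\theta^{t-1})-\nabla f(\theta^{t-2})\bigr)+\beta\bigl(\nabla f(\theta^{t-1})-g^{t_\tau}\bigr)+\beta\bigl(g^{t_\tau}-\tilde g^{t_\tau}\bigr),
\]
so the error contracts by $(1-\beta)$ each round, up to three perturbations: the round-to-round gradient drift, the participation-induced bias (whose squared norm is $\gamma_t$ by definition), and the delayed/stochastic pseudo-gradient error (whose squared norm is $\Lambda_t$).

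Next I would take $\expect{\normsq{\cdot}}$ and peel off the contraction with a $\beta$-weighted Young inequality (equivalently, exploiting that $e_t$ is the convex combination $(1-\beta)(e_{t-1}+\delta_t)+\beta(\nabla f(\theta^{t-1})-\tilde g^{t_\tau})$), so that the heterogeneity and delayed-gradient pieces enter carrying an explicit $\beta$ and the contraction factor stays $\le 1-\beta/2$. The gradient-drift piece is controlled by \cref{assum:smoothness}: $\normsq{\nabla f(\theta^{t-1})-\nabla f(\theta^{t-2})}\le L^2\eta^2\normsq{\tilde m^t_\tau}\le 2L^2\eta^2(\svrvar{t-1}+\expect{\normsq{\nabla f(\theta^{t-2})}})$, using $\theta^{t-1}-\theta^{t-2}=-\eta\tilde m^t_\tau$ and $\tilde m^t_\tau=\nabla f(\theta^{t-2})-e_{t-1}$. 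I then substitute $\expect{\normsq{\nabla f(\theta^{t-1})-g^{t_\tau}}}=\gamma_t$ and invoke \cref{lemma:delayed_mgrad} for $\Lambda_t$, whose three components are further bounded: the noise term is $\sigma^2/(|\S{t}_\tau|J)$ with $|\S{t}_\tau|$ constant in $t$ under cyclic participation, while the client-drift and first-step-drift sums $\sum_k\drift{k}$, $\sum_t\fsdrift{t}$ are bounded via \cref{lemma:client_drift,lemma:fs_drift}; this is where the factors $(\eta_l J L)^2$, the $e^{(\cdot)}$ constants (from geometric sums over the $J$ local steps and the $\tau$-round window), and the $\tau$-window heterogeneity bound $G_\tau$ enter. (Note the proof thus also uses \cref{assum:cyclic_part} implicitly, through \cref{lemma:delayed_mgrad} and through $|\S{t}_\tau|$ being fixed.)

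Having a bound of the form $\svrvar{t}\le(1-\beta/2)\,\svrvar{t-1}+R_t$, I would sum over $t=1,\dots,T$ and use $\sum_{s\le t}(1-\beta/2)^{t-s}\le 2/\beta$ to get $\sum_t\svrvar{t}\le \tfrac{c_0}{\beta}\svrvar{0}+\tfrac{c_1}{\beta}\sum_t R_t$. Because the stochastic-noise and $\gamma_t$ contributions enter $R_t$ pre-multiplied by $\beta$ — and the fresh per-round noise can be handled by near-orthogonality across rounds ($\sum_s(1-\beta/2)^{2(t-s)}\le 2/\beta$) rather than Cauchy–Schwarz — they survive as $\beta\,T\,\sigma^2/(|\S{t}_\tau|J)$ and $\beta\sum_t\gamma_t$ after the summation; the drift/lag pieces produce the $(\eta_l J L)^2(e^3\tau T)G_\tau$ term together with residual $\sum_t\svrvar{t}$ and $\sum_t\expect{\normsq{\nabla f(\theta^{t-1})}}$ contributions. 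Finally I move the residual $\sum_t\svrvar{t}$ to the left-hand side — legitimate since its accumulated coefficient is $O(L^2\eta^2)\ll 1$ under the step-size constraints of \cref{thm:GHBM} — and collect the numeric constants into $8/5,\ 21,\ 448/5,\ 6$.

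The genuinely delicate point — and the main obstacle — is this self-reference: $\Lambda_t$ (through its gradient-lag component, \cref{lemma:delayed_mgrad}(c)) and the gradient-drift perturbation both feed $\svrvar{k}$ for $k$ in a sliding window back into the recursion, so $\sum_t\svrvar{t}$ appears on both sides and one must show its net coefficient stays strictly below $1$ after summation. This is exactly what forces $\eta_l\le\mathcal{O}(1/\sqrt{\tau})$ together with $\eta=\gamma J\eta_l$ small, and it is the source of the unwieldy $e^3\tau$ factor and the large numeric constants. A secondary subtlety is arranging that the noise and heterogeneity contributions emerge with a $\beta$ weight rather than $1/\beta$: this depends on keeping those terms inside the $\beta$-scaled part of the convex combination and not inflating them when unrolling the geometric contraction.
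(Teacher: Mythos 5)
Your plan follows essentially the same route as the paper's proof: rewrite the update in EMA form $\tilde m^{t+1}_\tau=(1-\beta)\tilde m^t_\tau+\beta\tilde g^{t_\tau}$, decompose the error into the $(1-\beta)$ contraction, the consecutive-gradient drift bounded via smoothness and $\theta^{t-1}-\theta^{t-2}=-\eta\tilde m^t_\tau$, the bias term $\gamma_t$, and the delayed stochastic error $\Lambda_t$ controlled by \cref{lemma:delayed_mgrad} together with \cref{lemma:client_drift,lemma:fs_drift}, then sum the resulting one-step recursion and absorb the residual $\sum_t\svrvar{t}$ using the step-size constraints of \cref{eq:conv_thm_constraints}. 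Your observation that \cref{assum:cyclic_part} enters implicitly (through $\Lambda_t$, $G_\tau$ and the fixed $|\S{t}_\tau|$) is likewise consistent with how the paper actually argues, so the proposal is correct and matches the paper's proof up to bookkeeping of the numeric constants.
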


\begin{proof}
    \begin{align}
        \svrvar{t} &:= \expect{\normsq{\nabla f(\theta^{t-1}) - \tilde{m}^{t+1}_\tau}} \\
        &= \expect{\normsq{(1-\beta)(\nabla f(\theta^{t-1}) - \tilde{m}^{t}_\tau) + \beta(\nabla f(\theta^{t-1}) - \tilde{g}^{t_\tau})}} \\
        &= \expect{\normsq{(1-\beta)(\nabla f(\theta^{t-1}) - \tilde{m}^{t}_\tau)}} + \beta^2 \expect{\normsq{(\nabla f(\theta^{t-1}) - \tilde{g}^{t_\tau})}} \\
        &+ 2\beta \expect{\left\langle (1-\beta)(\nabla f(\theta^{t-1}) - \tilde{m}^{t}_\tau),  \nabla f(\theta^{t-1}) - \frac{1}{\tau} \sum_{k=t-\tau+1}^{t} \frac{1}{|\S{k}|J} \sum_{i=1}^{|\S{k}|} \sum_{j=1}^{J} {g}_i({\theta_i^{k,j-1}}) \right\rangle} \nonumber\\
    \end{align}
    Using the AM-GM inequality and \cref{lemma:relaxed_triangle}: 
    \begin{align}
        &\leq
        \left( 1+ \frac{\beta}{2} \right) \expect{\normsq{(1-\beta)(\nabla f(\theta^{t-1}) - \tilde{m}^{t}_\tau)}} +2\beta^2 \left( \gamma_t + \Lambda_t\right)
        + \nonumber \\
        &+ 4 \beta \gamma_t + 8\beta
        \left(\frac{L^2}{\tau}\sum_{k=t-\tau+1}^{t} \drift{k} + 2L^2\eta^2 \sum^{t-1}_{k=t-\tau+1}\left( \expect{\normsq{\nabla f(\theta^{k-1})}} + \svrvar{k}\right)\right)
         \\
        \overset{\text{lemma \ref{lemma:delayed_mgrad}}}&{\leq}
        \left( 1+ \frac{\beta}{2} \right) \expect{\normsq{(1-\beta)(\nabla f(\theta^{t-1}) - \tilde{m}^{t}_\tau)}} 
        + \left(2 \beta^2 + 4 \beta\right)\gamma_t + 6\beta^2\frac{\sigma^2}{|\S{t}_\tau|J} + \\
        &+\left(6\beta^2 + 8 \beta \right) 
        \underbrace{\left(
        \frac{L^2}{\tau}\sum_{k=t-\tau+1}^{t} \drift{k} + 2L^2\eta^2 \sum^{t-1}_{k=t-\tau+1}\left( \expect{\normsq{\nabla f(\theta^{k-1})}} + \svrvar{k}\right) \right)}_{\mathcal{T}_1}
        \nonumber \\
        &\leq (1-\beta)^2\left( 1+ \frac{\beta}{2} \right) \expect{\normsq{\nabla f(\theta^{t-2}) - \tilde{m}^{t}_\tau + \nabla f(\theta^{t-1}) - \nabla f(\theta^{t-2})}} + \\
        &+ 6 \beta^2\frac{\sigma^2}{|\S{t}_\tau|J} + 6\beta \gamma_t + 14\beta\mathcal{T}_1 \nonumber
    \end{align}
    Applying the AM-GM inequality again:
    \begin{align}
        &\leq (1-\beta)^2\left(1+\frac{\beta}{2} \right) \biggl[ \left( 1+ \frac{\beta}{4}\right)\expect{\normsq{\nabla f(\theta^{t-2}) -\tilde{m}^t_{\tau}}} + \\ 
        &+ \left( 1+ \frac{1}{\beta}\right)\expect{\normsq{\nabla f(\theta^{t-1}) - \nabla f(\theta^{t-2})}}  \bigg] + 6 \beta^2\frac{\sigma^2}{|\S{t}_\tau|J} + 6\beta \gamma_t + 14\beta\mathcal{T}_1 \nonumber \\
        \overset{\text{assumption \ref{assum:smoothness}}}&{\leq}
        (1-\beta)^2\left(1+\frac{\beta}{2} \right) \biggl[ \left( 1+ \frac{\beta}{4}\right)\svrvar{t-1} + \\ 
        &+ \left( 1+ \frac{1}{\beta}\right)L^2\expect{\normsq{\theta^{t-1} - \theta^{t-2}}}  \bigg] + 6 \beta^2\frac{\sigma^2}{|\S{t}_\tau|J} + 6\beta \gamma_t + 14\beta\mathcal{T}_1 \nonumber \\
        &\leq (1-\beta)^2\left(1+\frac{\beta}{2} \right) \biggl[ \left( 1+ \frac{\beta}{4}\right)\svrvar{t-1} + \\ 
        &+ 2\left( 1+ \frac{1}{\beta}\right)L^2\eta^2\left(\expect{\normsq{\nabla f(\theta^{t-2})}} + \svrvar{t-1} \right) \bigg] + 6 \beta^2\frac{\sigma^2}{|\S{t}_\tau|J} + 6\beta \gamma_t + 14\beta\mathcal{T}_1 \nonumber
    \end{align}
    Where in the last inequality we used the fact that: 
    $$\normsq{\theta^{t-1}-\theta^{t-2}} \leq 2 \eta^2\left(\normsq{\nabla f(\theta^{t-2})} + \normsq{\nabla f(\theta^{t-2}) - \tilde{m}^t_\tau} \right).$$
    
    Now notice that $(1-\beta)^2 \left(1+\frac{\beta}{2}\right)\left( 1 + \frac{\beta}{4}\right)\leq (1-\beta)$ and that $2(1-\beta)^2\left( 1+\frac{\beta}{2}\right)\left(1+\frac{1}{\beta}\right)\leq \frac{2}{\beta}$:
    \begin{align}
        \svrvar{t} &\leq (1-\beta)\svrvar{t-1} + \frac{2}{\beta}L^2\eta^2\left( \expect{\normsq{\nabla f(\theta^{t-2})}} + \svrvar{t-1}\right) + 6 \beta^2\frac{\sigma^2}{|\S{t}_\tau|J} + 6\beta \gamma_t + 14\beta\mathcal{T}_1 \\
        &= \left(1-\beta + \frac{2}{\beta}L^2\eta^2\right)\svrvar{t-1} + \frac{2}{\beta}L^2\eta^2\expect{\normsq{\nabla f(\theta^{t-2})}} + 6 \beta^2\frac{\sigma^2}{|\S{t}_\tau|J} + 6\beta \gamma_t + 14\beta\mathcal{T}_1
    \end{align}
    Define:
    \begin{itemize}
        \item $\mathcal{T}_2 := L^2TJ\eta_l^2\beta^2\sigma^2\left( 1+ 2J^3\eta_l^2\beta^2L^2\right)$
        \item $\mathcal{T}_3 := 2e^2J^2L^2(2e\eta_l^2\beta\tau T G_{\tau})$
        \item $\alpha_1:= 2J^2L^2e^2\left( 4\eta_l^2\left( (1-\beta)^2 +e(\beta\eta LT)^2\right)\right) +2L^2\eta_l^2 (\tau-1)$
    \end{itemize}
    Summing up over $T$ and substituting into $\mathcal{T}_1$ the expression for $\drift{t}$:
    \begin{align}
        \sum_{t=1}^{T}\svrvar{t} &\leq \underbrace{\left( 1-\beta + \frac{2}{\beta}L^2\eta^2 + 14\beta\alpha_1 \right)}_{\alpha_2} \sum_{t=0}^{T-1}\svrvar{t} + \\
        &+\underbrace{\left( \frac{2}{\beta} L^2\eta^2 + 14\beta\alpha_1 \right)}_{\alpha_3} \sum_{t=0}^{T-1} \expect{\normsq{\nabla f(\theta^{t-1})}} + \nonumber \\
        &+14\beta \left(\mathcal{T}_2 + \mathcal{T}_3 \right)T + 6\beta^2\frac{\sigma^2}{|\S{t}_\tau|J}T + 6\beta \sum_{t=1}^{T}\gamma_t \nonumber
    \end{align}
    We now have that:
    \begin{align}
        \alpha_2 &:= \left(1-\beta + \frac{2}{\beta} L^2\eta^2 + 14\beta\left[ 2J^2L^2e^2\left( 4\eta_l^2\left( (1-\beta)^2 +e(\beta\eta LT)^2\right)\right) +2L^2\eta_l^2 (\tau-1) \right] \right) \\
        &= \left(1-\beta + \frac{2}{\beta} L^2\eta^2 + 14\beta\left[ 8J^2L^2e^2\eta_l^2\left( (1-\beta)^2 +e(\beta\eta LT)^2\right) +2L^2\eta_l^2 (\tau-1) \right] \right) \\
        &\leq \left(1-\beta + \frac{2}{\beta} L^2\eta^2 + 112\beta e^2(\eta_lJL)^2\left[ (1-\beta)^2 +(\beta\eta LT)^2 + (\tau-1) \right] \right) \\
    \end{align}
    Now impose $(\eta_l J L) \leq \left(37\sqrt{\tau} \beta \eta LTe\right)^{-1}$ and $\eta \leq \frac{\beta}{\sqrt{8}L}$. We have that:
    \begin{align}
        \alpha_2 &\leq \left(1-\beta + \frac{2\beta}{8} + \frac{\beta}{8}\right) = \left(1-\frac{5\beta}{8}\right) \\
        \alpha_3 &\leq \frac{3\beta}{8} \\
        14\beta \mathcal{T}_2 &= 14\beta L^2TJ\eta_l^2\beta^2\sigma^2\left( 1+ 2J^3\eta_l^2\beta^2L^2\right) \\
        &= 14\beta^3(\eta_l J L)^2\left( \frac{1}{J} + 2(\eta_l J L \beta)^2\right)\sigma^2T \\
        &\leq 7\beta^2 \frac{\sigma^2}{|\S{t}_\tau|J}T
    \end{align}
    Where in the last inequality we apply:
    \[ 
    2\beta(\eta_l J L)^2\left( \frac{1}{J} + 2(\eta_l J L \beta)^2\right) \leq \frac{1}{|\S{t}_\tau|J}
    \]
    Plugging all the terms together we have:
    \begin{align}
        \sum_{t=1}^{T}\svrvar{t} &\leq \left( 1- \frac{5}{8\beta}\right) \sum_{t=0}^{T-1}\svrvar{t} + \frac{3\beta}{8} \sum_{t=0}^{T-1} \expect{\normsq{\nabla f(\theta^{t-1})}} + 13\beta^2 \frac{\sigma^2}{|\S{t}_\tau|J}T + \\
        &+56\beta(\eta_lJL)^2(e^3\tau T) G_{\tau} + 6\beta \sum_{t=1}^{T}\gamma_t\nonumber
    \end{align}
    Rearranging the terms completes the proof.
\end{proof}

\begin{lemma}
\label{lemma:client_drift}
Under \cref{assum:unbias_bvar,assum:smoothness}, for \cref{def:cdrift} it holds that:
\begin{align}
    \drift{t} &\leq 2J^2e^2 \fsdrift{t} + J\eta_l^2 \beta^2 \sigma^2 (1+2J^3 \eta_l^2 L^2 \beta^2) \\
    \sum_{t=1}^{T} \drift{t} &\leq TJ\eta_l^2 \beta^2 \sigma^2 (1+2J^3 \eta_l^2\beta^2 L^2) + 2J^2 e^2 \sum_{t=1}^{T} \fsdrift{t}
\end{align}
\end{lemma}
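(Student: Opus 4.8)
The plan is to unroll the local update of the theory-version algorithm (\cref{algo:ghbm_theory_app}) over the $J$ inner steps and bound the expected squared deviation $e_i^{t,j}:=\expect{\normsq{\theta_i^{t,j}-\theta^{t-1}}}$. Writing the local step as $\theta_i^{t,j}=\theta_i^{t,j-1}-\eta_l\big(\beta\nabla f_i(\theta_i^{t,j-1},d_{i,j})+(1-\beta)\tilde{m}^t_\tau\big)$ with $\theta_i^{t,0}=\theta^{t-1}$, I would decompose $\nabla f_i(\theta_i^{t,j-1},d_{i,j})=g_i(\theta^{t-1})+\big(g_i(\theta_i^{t,j-1})-g_i(\theta^{t-1})\big)+\xi_i^{t,j}$, where $\xi_i^{t,j}$ is the stochastic-gradient noise, conditionally mean-zero with $\expect{\normsq{\xi_i^{t,j}}}\le\sigma^2$ by \cref{assum:unbias_bvar}. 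The point of this splitting is that $-\eta_l\big(\beta g_i(\theta^{t-1})+(1-\beta)\tilde{m}^t_\tau\big)$ is exactly the expected first local step $\zeta_i^{t,0}$ of \cref{def:fs_drift}, so telescoping the recursion gives
\[
\theta_i^{t,j}-\theta^{t-1}=j\,\zeta_i^{t,0}-\eta_l\beta\sum_{\ell=1}^{j}\big(g_i(\theta_i^{t,\ell-1})-g_i(\theta^{t-1})\big)-\eta_l\beta\sum_{\ell=1}^{j}\xi_i^{t,\ell}.
\]

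I would then take the expected squared norm and treat the three groups separately. The injected-noise group $\eta_l\beta\sum_{\ell\le j}\xi_i^{t,\ell}$ is a sum of martingale increments, so its cross terms vanish in expectation and it contributes only $\eta_l^2\beta^2\sum_{\ell\le j}\expect{\normsq{\xi_i^{t,\ell}}}\le j\,\eta_l^2\beta^2\sigma^2$; its inner product with the $\zeta_i^{t,0}$-term also vanishes because $\zeta_i^{t,0}$ is measurable before the local phase, leaving only a residual coupling with the Lipschitz term that will be absorbed below. For the Lipschitz-perturbation group I would use \cref{assum:smoothness} to get $\normsq{g_i(\theta_i^{t,\ell-1})-g_i(\theta^{t-1})}\le L^2 e_i^{t,\ell-1}$ and the relaxed triangle inequality (\cref{lemma:relaxed_triangle}) to arrive at a self-referential bound of the form
\[
e_i^{t,j}\ \lesssim\ j^2\,\expect{\normsq{\zeta_i^{t,0}}}\ +\ \eta_l^2\beta^2 L^2\, j\sum_{\ell=1}^{j}e_i^{t,\ell-1}\ +\ j\,\eta_l^2\beta^2\sigma^2 .
\]

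Summing this over $j=1,\dots,J$, the double sum is at most $J^2\sum_{j}e_i^{t,j}$ (using $e_i^{t,0}=0$), so under the learning-rate restriction used throughout (which makes $\eta_l\beta L J=\mathcal{O}(1)$) the term $\eta_l^2\beta^2L^2J^2\sum_j e_i^{t,j}$ can be moved to the left-hand side, and the surviving numerical factor on the first-step term stays below $2e^2$. Dividing by $J$, averaging over $i\in\S{}$, and identifying $\tfrac1{|\S{}|}\sum_i\expect{\normsq{\zeta_i^{t,0}}}=\fsdrift{t}$ yields the per-round bound $\drift{t}\le 2J^2e^2\fsdrift{t}+J\eta_l^2\beta^2\sigma^2(1+2J^3\eta_l^2L^2\beta^2)$; the factor $1+2J^3\eta_l^2L^2\beta^2$ is precisely the amplification of the accumulated noise once it re-enters the update through the Lipschitz term, since substituting the $\mathcal{O}(\ell\sigma^2)$ part of $e_i^{t,\ell-1}$ into the middle term produces an $\mathcal{O}(J^3\eta_l^4\beta^4L^2\sigma^2)$ contribution. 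Summing the per-round inequality over $t=1,\dots,T$ is immediate and gives the second claim.

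The main obstacle is the careful bookkeeping of the stochastic-gradient noise across the $J$ local steps: one must exploit the martingale structure so that the fresh noise at each step enters only through the clean additive term $\eta_l^2\beta^2\sigma^2$ (this is what produces the leading $J\eta_l^2\beta^2\sigma^2$ after summing), while simultaneously showing that the past noise, re-entering through the smoothness term, contributes only the higher-order correction rather than a spurious constant multiplying $\sigma^2$. Keeping the Young/relaxed-triangle weights and the learning-rate restriction tight enough to land exactly on the stated coefficients ($2e^2$ and $2J^3$) is the delicate part; the rest is routine summation.
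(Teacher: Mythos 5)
Your plan is correct in outline but takes a genuinely different route from the paper's. You telescope the local updates into $\theta_i^{t,j}-\theta^{t-1}=j\,\zeta_i^{t,0}-\eta_l\beta\sum_{\ell\le j}\bigl(g_i(\theta_i^{t,\ell-1})-g_i(\theta^{t-1})\bigr)-\eta_l\beta\sum_{\ell\le j}\xi_i^{t,\ell}$ and then close a self-referential (Gr\"onwall-type) recursion on the displacement $e_i^{t,j}$ by absorbing the term proportional to $\eta_l^2\beta^2L^2J^2\sum_j e_i^{t,j}$ into the left-hand side. The paper instead writes the displacement as the sum of the expected per-step increments $\zeta_i^{t,k}$ plus accumulated noise, and its key step is a multiplicative recursion on the increments themselves, $\expect{\normsq{\zeta_i^{t,k}}}\le\bigl(1+\tfrac{2}{j}\bigr)\expect{\normsq{\zeta_i^{t,k-1}}}+(1+j)\eta_l^4\beta^4L^2\sigma^2$, obtained from smoothness and a weighted Young split under $\eta_l\beta LJ\le 1$; the bound $\bigl(1+\tfrac{2}{j}\bigr)^j\le e^2$ is precisely where the factors $2e^2$ and $2J^3\eta_l^2L^2\beta^2$ come from. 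Your route is the more standard perturbation argument and is easier to follow; the paper's route delivers the stated constants under the bare condition $\eta_l\beta LJ\le1$, whereas your absorption step needs a strictly smaller numerical constant in the step-size restriction (with $\eta_l\beta LJ\le1$ alone the absorption coefficient is roughly $3\eta_l^2\beta^2L^2J^2\le 3$, not $<1$); the theorem's constraint $\eta_lJL\lesssim 1$ does allow this, but then the claimed coefficients $2e^2$ and $2J^3$ must be verified rather than emerging from the recursion. Two smaller points to tighten: the cross term between the accumulated noise and the Lipschitz sum does not vanish (only the coupling with $j\,\zeta_i^{t,0}$ does, since $\zeta_i^{t,0}$ is fixed at the start of the round), so it has to be paid via Young/relaxed-triangle splits, which is where your unspecified constants arise; and the re-entry of past noise through the smoothness term should be handled inside the same absorption (or iterated), though under the step-size condition the higher-order contributions are geometrically suppressed and fit within the stated $1+2J^3\eta_l^2L^2\beta^2$ factor. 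Averaging over $i$ and $j$ to identify $\fsdrift{t}$ and summing over $t$ is immediate in both treatments.
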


\begin{proof}
    \begin{align}
        \expect{\normsq{\theta_{i}^{t, j} - \theta^{t-1}}} &\leq
        2 \expect{\normsq{\sum_{k=0}^{j-1}\zeta_{i}^{t,k} }} + 2j\eta_l^2 \beta^2 \sigma^2 \\
        \overset{\text{lemma \ref{lemma:relaxed_triangle}}}&{\leq}
        2j \sum_{k=0}^{j-1} \expect{\normsq{\zeta_{i}^{t,k}}} + 2j\eta_l^2\beta^2\sigma^2
    \end{align}
    For any $ 1 \leq k \leq j-1 \leq J-2$, using $\eta L \leq \frac{1}{\beta J} \leq \frac{1}{\beta(j+1)}$, we have:
    \begin{align}
        \expect{\normsq{\zeta_{i}^{t,k}}} &\leq
        \left(1 + \frac{1}{j}\right)\expect{\normsq{\zeta_{i}^{t,k-1}}} + (1+j)\expect{\normsq{\zeta_{i}^{t,k} - \zeta_{i}^{t, k-1}}} \\
        &\leq \left(1 + \frac{1}{j}\right)\expect{\normsq{\zeta_{i}^{t,k-1}}} + (1+j)\eta_l^2 \beta^2 L^2 \left(\eta_l^2 \beta^2 \sigma^2 + \expect{\normsq{\zeta_{i}^{t, k-1}}} \right) \\
        &\leq
        \left(1 + \frac{1}{j}\right)\expect{\normsq{\zeta_{i}^{t,k-1}}} + (1+j)\eta_l^4 \beta^4 L^2 \sigma^2 + \frac{1}{1+j}\expect{\normsq{\zeta_{i}^{t,k} - \zeta_{i}^{t, k-1}}} \\
        &\leq \left(1 + \frac{2}{j}\right)\expect{\normsq{\zeta_{i}^{t,k-1}}} + (1+j)\eta_l^4 \beta^4 L^2 \sigma^2 \\
        \overset{\left(1+\frac{2}{j}\right)^j\leq e^2}&{\leq}
        e^2 \expect{\normsq{\zeta_{i}^{t,0}}} + 4j^2 \eta_l^4 \beta^4 L^2 \sigma^2
    \end{align}
    So it holds that:
    \begin{align}
        \expect{\normsq{\theta_{i}^{t, j} - \theta^{t-1}}} &\leq 2j^2 \left(e^2 \expect{\normsq{\zeta_i^{t,0}}} + 4j^2\eta_l^4 L^2 \sigma^2 \right) + 2j\eta_l^2\sigma^2 \\
        &= 2e^2j^2\expect{\normsq{\zeta_i^{t,0}}} + 2j\eta_l^2\sigma^2 \beta^2 (1+4j^3\eta_l^2 L^2 \beta^2)
    \end{align}
    So, summing up over $i$ and $j$:
    \begin{align}
        \drift{t} &\leq \frac{1}{|\S{}|J}\sum_{i=1}^{|\S{}|}\sum_{j=1}^{J} 2e^2j^2\expect{\normsq{\zeta_i^{t,0}}} + 
        2j\eta_l^2\sigma^2 \beta^2 (1+4j^3\eta_l^2 L^2 \beta^2) \\
        &\leq 2J^2e^2 \fsdrift{t} + J\eta_l^2 \beta^2 \sigma^2 (1+2J^3 \eta_l^2 L^2 \beta^2)
    \end{align}
    Finally, summing up over $T$:
    \begin{align}
        \sum_{t=1}^{T} \drift{t} &\leq 
        \underbrace{TJ\eta_l^2 \beta^2 \sigma^2 (1+2J^3 \eta_l^2\beta^2 L^2)}_{\mathcal{T}_1} + 2J^2 e^2 \sum_{t=1}^{T} \fsdrift{t} \\
        &\leq \mathcal{T}_1 + 2J^2 e^2 \left( 4\eta^2 \left((1-\beta)^2 + e(\beta\eta LT)^2 \right) \sum_{t=1}^{T-1} \left( \svrvar{t} + \expect{\normsq{\nabla f(\theta^{t-1})}} \right) + \underbrace{2e\eta^2\beta^2\tau T G_\tau}_{\mathcal{T}_2} \right) \\
        &\leq \mathcal{T}_1 + \alpha_1 \sum_{t=1}^{T-1}\left( \svrvar{t} + \expect{\normsq{\nabla f(\theta^{t-1})}}\right) + \alpha_2 \mathcal{T}_2
    \end{align}
\end{proof}

\begin{lemma}
\label{lemma:fs_drift}
Under \cref{assum:unbias_bvar,assum:smoothness,assum:cyclic_part}, 
if $224e(\eta_l JL)^2 \left((1-\beta)^2 + e(\beta\eta L T)^2 \right) \leq 1$,
for \cref{def:fs_drift} it holds for $t \ge 0$ that:
\begin{equation}
    \fsdrift{t} \leq \frac{1}{56eJ^2L^2}\sum_{t=0}^{T-1}\left( \svrvar{t} + \expect{\normsq{\nabla f(\theta^{t-1})}}\right) + 2e\eta_l^2\beta^2 \tau TG_\tau
\end{equation}
\end{lemma}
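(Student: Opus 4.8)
The plan is to unroll the per-client local update for the auxiliary quantity $\zeta_i^{t,0} = \expect{\theta_i^{t,1} - \theta_i^{t,0}}$, which by definition is governed by the first local gradient step plus the momentum term $\tilde m_\tau^t$. Recall that in \cref{algo:ghbm_theory_app} the local step is $u_i^{t,j} = \beta \nabla f_i(\theta_i^{t,j-1};d_{i,j}) + (1-\beta)\tilde m_\tau^t$, so $\zeta_i^{t,0} = -\eta_l\big(\beta g_i(\theta^{t-1}) + (1-\beta)\tilde m_\tau^t\big)$ in expectation over the minibatch. First I would write $\fsdrift{t} = \frac{1}{|\S{}|}\sum_i \expect{\normsq{\zeta_i^{t,0}}}$, add and subtract $\beta\nabla f(\theta^{t-1})$ inside the norm, and apply \cref{lemma:relaxed_triangle} to split into (i) a heterogeneity term $\frac{\beta^2}{|\S{}|}\sum_i\normsq{g_i(\theta^{t-1}) - \nabla f(\theta^{t-1})}$, which is exactly the quantity controlled by \cref{corollary:tau_ghb_cyclic} / \cref{assum:cyclic_part} and contributes the $G_\tau$ term, and (ii) a term $\normsq{\beta\nabla f(\theta^{t-1}) + (1-\beta)\tilde m_\tau^t}$. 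For the second piece I would add and subtract $\nabla f(\theta^{t-1})$ to relate it to $\svrvar{t-1} = \expect{\normsq{\nabla f(\theta^{t-1}) - \tilde m_\tau^t}}$ and $\expect{\normsq{\nabla f(\theta^{t-1})}}$, picking up a factor $\eta_l^2$ from the learning-rate scaling.

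The subtlety is that $\tilde m_\tau^t$ itself depends, through the GHBM recursion $\tilde m_\tau^{t+1} = (1-\beta)\tilde m_\tau^t + \frac{1}{\tau}(\bar\theta^{t-\tau} - \bar\theta^t)$, on the model increments over the last $\tau$ rounds, so bounding $\svrvar{\cdot}$ is not enough — I expect $\fsdrift{t}$ to couple to $\fsdrift{k}$ for $k \in [t-\tau, t-1]$ (via the $\bar\theta^{k-1}-\bar\theta^{k}$ differences, which unwind into local-step contributions). The clean way to handle this is to bound $\normsq{\theta^{k}-\theta^{k-1}} \le 2\eta^2(\normsq{\nabla f(\theta^{k-1})} + \svrvar{k})$ as is done elsewhere in the proof, so that every cross-round term gets absorbed into the same two running sums $\sum_t(\svrvar{t} + \expect{\normsq{\nabla f(\theta^{t-1})}})$. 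After collecting terms, the coefficient in front of that sum is of the form $C e (\eta_l J L)^2\big((1-\beta)^2 + e(\beta\eta LT)^2\big)$ for an absolute constant $C$; the hypothesis $224e(\eta_l JL)^2((1-\beta)^2 + e(\beta\eta LT)^2)\le 1$ is precisely what forces this coefficient below $\frac{1}{56eJ^2L^2}$, so the final inequality drops out by matching constants.

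The main obstacle I anticipate is the bookkeeping of the $\tau$-fold recursion for $\tilde m_\tau^t$: one must be careful that summing $\fsdrift{t}$ over $t$ does not blow up the $(\tau-1)$ or $\tau T$ factors beyond what appears in the stated bound (note the clean $2e\eta_l^2\beta^2\tau T G_\tau$ form), which relies on the cyclic-participation structure (\cref{assum:cyclic_part}) to guarantee each round in the window $[t-\tau+1,t]$ contributes a disjoint client set and hence the heterogeneity term telescopes into a single $G_\tau$ rather than accumulating. The rest — the AM-GM / relaxed-triangle splittings and the learning-rate smallness conditions — is routine and parallels \cref{lemma:client_drift} and \cref{lemma:svrvar}. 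I would also keep track of the fact that $\svrvar{t}$ and $\expect{\normsq{\nabla f(\theta^{t-1})}}$ are summed from $t=0$ to $T-1$ on the right-hand side, matching the indexing already used in \cref{lemma:svrvar}, so that these lemmas compose without index mismatches in the final convergence argument.
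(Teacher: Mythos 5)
Your starting point is the same as the paper's: $\zeta_i^{t,0}=-\eta_l\big(\beta g_i(\theta^{t-1})+(1-\beta)\tilde m_\tau^t\big)$, a relaxed-triangle split, the momentum part absorbed into $\svrvar{\cdot}$ and $\expect{\normsq{\nabla f(\theta^{\cdot})}}$, and the hypothesis $224e(\eta_lJL)^2\left((1-\beta)^2+e(\beta\eta LT)^2\right)\le 1$ turning the coefficient $4\eta_l^2\left((1-\beta)^2+e(\beta\eta LT)^2\right)$ into $\frac{1}{56eJ^2L^2}$. The genuine gap is in how you produce the $G_\tau$ term. You add and subtract $\beta\nabla f(\theta^{t-1})$ and isolate the per-client dissimilarity $\frac{\beta^2}{|\S{}|}\sum_i\normsq{g_i(\theta^{t-1})-\nabla f(\theta^{t-1})}$, claiming this is ``exactly the quantity controlled by \cref{corollary:tau_ghb_cyclic}'' and that cyclic participation makes it ``telescope into a single $G_\tau$.'' Neither is true: \cref{corollary:tau_ghb_cyclic} bounds the deviation of the $\tau$-window-averaged pseudo-gradient $g^{t_\tau}$ from $\nabla f(\theta^{t-1})$, not a fixed-round per-client variance, and its right-hand side itself carries the constant $G^2$ from \cref{assum:bounded_gd} --- an assumption this lemma deliberately does not make (only \cref{assum:unbias_bvar,assum:smoothness,assum:cyclic_part} are available). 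Disjointness of the client sets in the window does nothing for a per-client dissimilarity term evaluated at a single round, so without BGD your term (i) is simply unbounded and the proposed route fails.

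What the paper does instead is keep the raw squared norms $\frac{\beta^2}{|\S{}|}\sum_i\expect{\normsq{g_i(\theta^{t-1})}}$ and control them by unrolling in time: by smoothness and \cref{assum:cyclic_part}, $\expect{\normsq{g_i(\theta^{t-1})}}\le(1+a)\expect{\normsq{g_i(\theta^{t-\tau-1})}}+\left(1+\frac{1}{a}\right)L^2\expect{\normsq{\theta^{t-1}-\theta^{t-\tau-1}}}$, iterated back to the first round $t_i\le\tau$ at which client $i$ participates, with the choice $a=\tau/t$ so that the multiplicative factor stays below $e$, and with each server increment bounded by $2\eta^2\left(\svrvar{k}+\expect{\normsq{\nabla f(\theta^{k-1})}}\right)$. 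In this argument $G_\tau:=\max_{t\in[1,\tau]}\frac{1}{|\S{t}|}\sum_{i\in\S{t}}\expect{\normsq{g_i(\theta^{t-1})}}$ is the maximum average squared client-gradient norm over the first participation cycle --- an instance-dependent constant, not a heterogeneity bound --- and the resulting $2e\eta_l^2\beta^2\tau TG_\tau$ term is not telescoped away but accumulated over all rounds and later neutralized in \cref{thm:GHBM} through the constraint $\eta_lJL\lesssim\sqrt{L\Delta/(\beta^3\tau G_\tau T)}$. This smoothness-based unrolling to each client's first-participation round is the missing idea; your dissimilarity split cannot be repaired without reintroducing \cref{assum:bounded_gd}, which would defeat the purpose of the lemma.
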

\begin{proof}
    Note that $\zeta_i^{t,0} = -\eta_l\left( (1-\beta) \tilde{m}^{t}_\tau + \beta g_i(\theta^{t-1})\right)$,
    \begin{equation}
        \frac{1}{|\S{}|}\sum_{i=1}^{|\S{}|}\normsq{\zeta_i^{t,0}} \leq 2\eta_l^2 \left( (1-\beta)^2 \normsq{\tilde{m}^t_\tau} + \frac{\beta^2}{|\S{}|}\sum_{i=1}^{|\S{}|}\normsq{g_i(\theta^{t-1})}\right)
    \end{equation}
    For any $a>0$, considering each client participates to the train every $\tau=\frac{1}{C}$ rounds:
    \begin{align}
        \expect{\normsq{g_i(\theta^{t-1})}} &= \expect{\normsq{g_i(\theta^{t-1}) - g_i(\theta^{t-\tau-1}) + g_i(\theta^{t-\tau-1})}} \\
        \overset{\text{lemma \ref{lemma:relaxed_triangle}}}&{\leq}
        (1+a)\expect{\normsq{g_i(\theta^{t-\tau-1})}} + \\ &+\left( 1+ \frac{1}{a} \right) \expect{\normsq{g_i(\theta^{t-1}) - g_i(\theta^{t-\tau-1})}} \nonumber \\
        &\leq (1+a)\expect{\normsq{g_i(\theta^{t-\tau-1})}} + \\ &+\left( 1+ \frac{1}{a} \right) L^2\expect{\normsq{\theta^{t-1} - \theta^{t-\tau-1}}} \\
        &\leq (1+a)\expect{\normsq{g_i(\theta^{t-\tau-1})}} + \\ &+2\left( 1+ \frac{1}{a} \right) L^2\eta^2\tau \sum_{k=1}^{\tau}\left(\svrvar{t-k}+\expect{\normsq{\nabla f(\theta^{t-k-1})}}\right) \\
        &\leq (1+a)^{\frac{t}{\tau}} \expect{\normsq{g_i(\theta^{t_i-1})}} +\\ &+ 2\left( 1+ \frac{1}{a} \right)L^2\eta^2\tau\sum_{s=1}^{\frac{t}{\tau}}\sum_{k=1}^{\tau}\left( \svrvar{s\tau-k} + \expect{\normsq{\nabla f(\theta^{s\tau-k})}}\right)(1+a)^{\frac{t}{\tau}-s} \nonumber \\
        &\leq
        (1+a)^{\frac{t}{\tau}} \expect{\normsq{g_i(\theta^{t_i-1})}} +\\
        &+ 2\left( 1+ \frac{1}{a} \right)L^2\eta^2\tau \sum_{k=1}^{t-1} \left( \svrvar{k} + \expect{\normsq{\nabla f(\theta^{k-1})}}\right)(1+a)^{\frac{t}{\tau}} \nonumber
    \end{align}
    Where $t_i := \min_{t \in [T]}( t \,s.t. \,\,i \in \S{t}$). Now take $a=\frac{\tau}{t}$:
    \begin{align}
        \expect{\normsq{g_i(\theta^{t-1})}} &\leq e\expect{\normsq{g_i(\theta^{t_i-1})}} + \\ &+ 2e\eta^2L^2\tau\left( \frac{t}{\tau} + 1\right)\sum_{k=1}^{t-1} \left( \svrvar{k} + \expect{\normsq{\nabla f(\theta^{k-1})}}\right) \nonumber
    \end{align}
    So:
    \begin{align}
        \sum_{t=1}^{T} \fsdrift{t} &\leq
        \sum_{t=1}^{T} 2\eta_l^2 \left( 2(1-\beta)^2\left(\svrvar{t-1} + \expect{\normsq{\nabla f(\theta^{t-2}}} \right) + \frac{\beta^2}{|\S{}|} \sum_{i=1}^{|\S{}|}\expect{\normsq{g_i(\theta^{t-1})}}\right) \\
        &\leq \sum_{t=1}^{T} 4 \eta_l^2 (1-\beta)^2\left( \svrvar{t-1} + \expect{\normsq{\nabla f(\theta^{t-2})}}\right) + \\
        &+2\eta_l^2 \beta^2 \sum_{t=1}^{T}\left(\frac{e}{|\S{}|}\sum_{i=1}^{|\S{}|}\expect{\normsq{g_i(\theta^{t_i-1})}} + 2e\eta_l^2 L^2\tau\left( \frac{t}{\tau}+1\right) \sum_{k=1}^{t-1}\left(\svrvar{k} + \expect{\normsq{\nabla f(\theta^{t-1}}} \right)\right) \nonumber \\
        &\leq  4 \eta_l^2 (1-\beta)^2 \sum_{t=1}^{T}\left( \svrvar{t-1} + \expect{\normsq{\nabla f(\theta^{t-2})}}\right) + \\
        &+2\eta_l^2 \beta^2\left(eT\sum_{t=1}^{\tau}G_t + 2e(\eta LT)^2\sum_{t=1}^{T-1}\left( \svrvar{t} + \expect{\normsq{\nabla f(\theta^{t-1})}}\right) \right) \nonumber
    \end{align}
    Let us define $G_\tau := \max_{t \in [1, \tau]}G_t$, with $G_t:=\frac{1}{|\S{t}|}\sum_{i=1}^{|\S{t}|}\expect{\normsq{g_i(\theta^{t-1})}}$. We have that:
    \begin{align}
        \sum_{t=1}^{T} \fsdrift{t} &\leq
        4 \eta_l^2 \left((1-\beta)^2 + e(\beta\eta L T)^2 \right)\sum_{t=0}^{T-1}\left( \svrvar{t} + \expect{\normsq{\nabla f(\theta^{t-1})}}\right) + 2e\eta_l^2\beta^2 \tau TG_\tau
    \end{align}
    Applying the upper bound of $\eta_l$ completes the proof.
\end{proof}

\begin{lemma}[\cite{cheng2024momentum}]
\label{lemma:conv_lemma}
Under \cref{assum:smoothness}, if $\eta L \leq \frac{1}{24}$, the following holds for all $t \ge 0$:
\begin{equation}
    \expect{f(\theta^{t})} \leq \expect{f(\theta^{t-1})} - \frac{11\eta}{24} \expect{\normsq{\nabla f(\theta^{t-1})}} + \frac{13\eta}{24} \svrvar{t}
\end{equation}
\end{lemma}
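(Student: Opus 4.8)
This is the standard inexact-gradient descent guarantee for the server step $\theta^t = \theta^{t-1} - \eta\tilde{m}^{t+1}_\tau$ of \cref{algo:ghbm_theory_app}, where $\tilde{m}^{t+1}_\tau$ plays the role of an approximate full gradient whose deviation from $\nabla f(\theta^{t-1})$ is exactly the quantity $\svrvar{t}$ defined in \cref{eq:svrvar}. The plan is to expand $f$ along the update direction using smoothness, split the resulting linear term into a descent part plus an error part, control the quadratic term, and let $\eta L \le \tfrac{1}{24}$ force the constants to $\tfrac{11}{24}$ and $\tfrac{13}{24}$. First I would note that although \cref{assum:smoothness} is stated per client, $f = \tfrac{1}{|\S{}|}\sum_i f_i$ is a uniform average of $L$-smooth functions and is therefore itself $L$-smooth, so substituting $\theta^t - \theta^{t-1} = -\eta\tilde{m}^{t+1}_\tau$ into the descent inequality yields
\begin{equation*}
    f(\theta^t) \le f(\theta^{t-1}) - \eta\langle \nabla f(\theta^{t-1}), \tilde{m}^{t+1}_\tau\rangle + \tfrac{L\eta^2}{2}\normsq{\tilde{m}^{t+1}_\tau}.
\end{equation*}

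\textbf{Key steps.} Taking the full expectation and writing $\tilde{m}^{t+1}_\tau = \nabla f(\theta^{t-1}) - (\nabla f(\theta^{t-1}) - \tilde{m}^{t+1}_\tau)$, I would handle the linear term with Young's inequality on the cross product, so that the error contribution collapses to $\svrvar{t}$ by its definition:
\begin{equation*}
    -\eta\expect{\langle \nabla f(\theta^{t-1}), \tilde{m}^{t+1}_\tau\rangle} \le -\tfrac{\eta}{2}\expect{\normsq{\nabla f(\theta^{t-1})}} + \tfrac{\eta}{2}\svrvar{t}.
\end{equation*}
For the quadratic term I would apply the relaxed triangle inequality (\cref{lemma:relaxed_triangle} with $n=2$) to the same splitting:
\begin{equation*}
    \tfrac{L\eta^2}{2}\expect{\normsq{\tilde{m}^{t+1}_\tau}} \le L\eta^2\left(\expect{\normsq{\nabla f(\theta^{t-1})}} + \svrvar{t}\right).
\end{equation*}

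\textbf{Finishing and the main subtlety.} Invoking $\eta L \le \tfrac{1}{24}$ turns $L\eta^2 = \eta\cdot\eta L$ into $\tfrac{\eta}{24}$, so the quadratic term contributes $\tfrac{\eta}{24}$ to each of the two accumulators; adding these to the linear-term coefficients gives the gradient coefficient $-\tfrac{\eta}{2}+\tfrac{\eta}{24} = -\tfrac{11\eta}{24}$ and the error coefficient $\tfrac{\eta}{2}+\tfrac{\eta}{24} = \tfrac{13\eta}{24}$, which is exactly the claim. There is no deep obstacle here, as this is the per-step building block adapted from \cite{cheng2024momentum}; the one point requiring care is to carry out the Young and triangle bounds \emph{under the full expectation} rather than conditionally. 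Since $\svrvar{t}$ is a total expectation and $\tilde{m}^{t+1}_\tau$ is not conditionally unbiased for $\nabla f(\theta^{t-1})$, the cross-term estimate $\expect{\langle a,b\rangle}\le\tfrac12\expect{\normsq{a}}+\tfrac12\expect{\normsq{b}}$ must be applied to the expectations directly, after which the entire argument reduces to bookkeeping the constants so they align with the $\tfrac{1}{24}$ stepsize restriction.
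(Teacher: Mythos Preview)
Your proposal is correct and follows essentially the same approach as the paper's proof: both start from the $L$-smoothness descent inequality for the server step $\theta^t=\theta^{t-1}-\eta\tilde{m}^{t+1}_\tau$, decompose the linear term via $\tilde{m}^{t+1}_\tau=\nabla f(\theta^{t-1})-(\nabla f(\theta^{t-1})-\tilde{m}^{t+1}_\tau)$ and apply Young's inequality, bound the quadratic term by the two-term relaxed triangle inequality, and then substitute $\eta L\le\tfrac{1}{24}$ to obtain the $\tfrac{11}{24}$ and $\tfrac{13}{24}$ coefficients. Your extra remarks (that $f$ inherits $L$-smoothness from the $f_i$'s and that the bounds are taken under the full expectation so that $\svrvar{t}$ appears directly) are accurate and, if anything, slightly more explicit than the paper's presentation.
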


\begin{proof}
    Since f is $L$-smooth, we have:
    \begin{align}
        f(\theta^{t}) &\leq f(\theta^{t-1}) + \left\langle \nabla f(\theta^{t-1}), \theta^t - \theta^{t-1} \right\rangle + \frac{L}{2} \normsq{\theta^t - \theta^{t-1}} \\
        &= f(\theta^{t-1}) - \eta \normsq{\nabla f(\theta^{t-1}} + \eta \left\langle \nabla f(\theta^{t-1}), \nabla f(\theta^{t-1}) - \tilde{m}^{t+1}_\tau \right\rangle + \frac{L\eta^2}{2} \normsq{\tilde{m}^{t+1}_\tau}
    \end{align}
    Since $\theta^t = \theta^{t-1} - \eta \tilde{m}^{t+1}_\tau$, using Young's inequality and imposing $\eta L \leq \frac{1}{24}$, we further have:
    \begin{align}
        f(\theta^t) &\leq f(\theta^{t-1}) - \frac{\eta}{2}\normsq{\nabla f(\theta^{t-1})} + \frac{\eta}{2}\normsq{\nabla f(\theta^{t-1}) - \tilde{m}^{t+1}_\tau} +\\
        &+L\eta^2\left( \normsq{\nabla f(\theta^{t-1})} + \normsq{\nabla f(\theta^{t-1}) - \tilde{m}^{t+1}_\tau}\right) \nonumber \\
        &\leq f(\theta^{t-1}) - \frac{11\eta}{24} \normsq{\nabla f(\theta^{t-1})} + \frac{13\eta}{24} \normsq{\nabla f(\theta^{t-1}) - \tilde{m}^{t+1}_\tau}
    \end{align}
\end{proof}

\paragraph{Proof of Theorem~\ref{thm:GHBM}}(Convergence rate of GHBM for non-convex functions)\\

\textit{Under \cref{assum:unbias_bvar,assum:smoothness,assum:cyclic_part}, if we take:}
\begin{align}
\label{eq:conv_thm_constraints}
\tilde{m}^{0}_\tau&=0, \qquad
\beta =\min\left\{1, \sqrt{\frac{|\S{}|JL\Delta}{\sigma^2 T}} \right\}, \qquad
\eta =\min\left\{\frac{1}{24L}, \frac{\beta}{\sqrt{8}L} \right\}\, \\
\eta_lJL  &\lesssim \min\left\{1,  \frac{1}{\beta \eta L\sqrt{\tau}T}, \sqrt{\frac{L\Delta}{\beta^3 \tau G_\tau T}}, \frac{1}{\sqrt{\beta |\S{}|}}, \left(\frac{1}{\beta^3|\S{}|J}\right)^{\frac{1}{4}} \right\}\, \nonumber
\end{align}
\textit{then GHBM with optimal $\tau=\frac{1}{C}$ converges as:}
   \begin{equation}
       \frac{1}{T} \sum_{t=1}^{T}\expect{\normsq{\nabla f(\theta^{t-1})}} \lesssim \frac{L\Delta}{T} + \sqrt{\frac{L\Delta \sigma^2}{|\S{}|JT}}
   \end{equation} 

\begin{proof}
    Combining the results of \cref{lemma:svrvar,lemma:conv_lemma}, we have that:
    \begin{align}
        \sum_{t=1}^{T}\left(\expect{f(\theta^t} - \expect{f(\theta^{t-1}} \right) &\leq  - \frac{11\eta}{24} \sum_{t=1}^{T}\expect{\normsq{\nabla f(\theta^{t-1}}} + \frac{13\eta}{24} \sum_{t=1}^{T}\svrvar{t} \\
        \frac{1}{\eta}\expect{f(\theta^{t-1} - f(\theta^0)} &\leq \frac{26}{30\beta}\svrvar{0} - \frac{1}{15} \sum_{t=1}^{T}\expect{\normsq{\nabla f(\theta^{t-1}}} + 32\beta \frac{\sigma^2}{|\S{t}_\tau|J}T + \\
    &+\frac{448}{5}(\eta_lJL)^2(e^3\tau T) G_{\tau} + 6\beta \sum_{t=1}^{T}\gamma_t
    \end{align}
    Imposing $\tau=\frac{1}{C}$, by \cref{corollary:tau_ghb_cyclic} we have that $\gamma_t=0\,\,$ and $\S{t}_\tau=\S{}\,\,\forall t$. Also, noticing that $\tilde{m}^0_\tau=0$ implies $\svrvar{0} \leq 2L\left(f(\theta^0) - f^{*} \right) = 2L\Delta$, we have that:
    \begin{align}
        \frac{1}{T} \sum_{t=1}^{T}\expect{\normsq{\nabla f(\theta^{t-1})}} &\lesssim \frac{L\Delta}{\eta L T} + \frac{\svrvar{0}}{\beta T} + (\eta_l JL\beta)^2\tau G_\tau + \beta \frac{\sigma^2}{|\S{}|J} \\
        &\lesssim \frac{L\Delta}{T} + \frac{2L\Delta}{\beta T} + (\eta_l JL\beta)^2\tau G_\tau + \beta \frac{\sigma^2}{|\S{}|J} \label{eq:thm:final_sec_step}\\
        &\edit{\lesssim \frac{L\Delta}{T} + \frac{2L\Delta}{\beta T} + \beta^2 \left(\frac{L \Delta}{\beta^3 \tau G_\tau T}\right) \tau G_\tau + \beta \frac{\sigma^2}{|\S{}|J}}\\
        &\lesssim \frac{L\Delta}{T} + \frac{L\Delta}{\beta T} + \beta \frac{\sigma^2}{|\S{}|J}\\
        &\lesssim \frac{L\Delta}{T} + \sqrt{\frac{L\Delta \sigma^2}{|\S{}|JT}}
    \end{align}
    \edit{where the fourth inequality follows from applying the upper bound $ \eta_l J L \leq \sqrt{\frac{L \Delta}{\beta^3 \tau G_\tau T}}$ on the third term of \cref{eq:thm:final_sec_step}.}
\end{proof}

\section{Experimental Setting}
\label{app:exp}
\subsection{Datasets and Models}
\label{app:exp:datasets}
\paragraph{\cifar{10/100}.}
We consider \cifar{10} and \cifar{100} to experiment with image classification tasks, each one respectively having 10 and 100 classes. For all methods, training images are preprocessed by applying random crops, followed by random horizontal flips. Both training and test images are finally normalized according to their mean and standard deviation.
As the main model for experimentation, we used a model similar to \textsc{LeNet-5} as proposed in \citep{hsu2020FederatedVisual}. To further validate our findings, we also employed a {\resnet} as described in \citep{he2015deep}, following the implementation provided in \citep{resnetCifarGithub}. Since batch normalization~\cite{ioffe2015batch} layers have been shown to hamper performance in learning from decentralized data with skewed label distribution \citep{pmlr-v119-hsieh20a}, we replaced them with group normalization \citep{Wu2018GroupNorm}, using two groups in each layer. For a fair comparison, we used the same modified network also in centralized training. We report the result of centralized training for reference in Table~\ref{tab:acc_centralized}: as per the hyperparameters, we use $64$ for the batch size, $0.01$ and $0.1$ for the learning rate respectively for the \textsc{LeNet} and the {\resnet} and $0.9$ for momentum. We trained both models on both datasets for $150$ epochs using a cosine annealing learning rate scheduler.

\begin{wraptable}{O}{0.5\textwidth}
\vspace{-4mm}

 \caption{\textbf{Test accuracy (\%) of centralized training over datasets and models used.} Results are reported in term of mean top-1 accuracy over the last 10 epochs, averaged over 5 independent runs.}
 \label{tab:acc_centralized}
 \resizebox{0.5\textwidth}{!}{
  \begin{tabular}{lr}
  \toprule
  \textsc{Dataset} & \textsc{Acc. Centralized ($\%$)}\\
  \midrule
  \cifar{10 w/ LeNet} & $86.48 {\,\scriptstyle\pm 0.22}$\\
  \cifar{10 w/ ResNet-20} & $89.05 {\,\scriptstyle\pm 0.44}$\\
  \cifar{100 w/ LeNet} & $57.00 {\,\scriptstyle\pm 0.09}$\\
  \cifar{100 w/ ResNet-20} & $62.21 {\,\scriptstyle\pm 0.85}$\\
  \shakespeare & $52.00 {\,\scriptstyle\pm 0.16}$\\
  \stackoverflow & $28.50 {\,\scriptstyle \pm 0.25}$\\
  \landmarks & $74.03 {\,\scriptstyle \pm 0.15}$ \\
  \bottomrule
  \end{tabular}
  }

\vspace{-5mm}
\end{wraptable}

\paragraph{Shakespeare.}
The Shakespeare language modeling dataset is created by collating the collective works of William Shakespeare and originally comprises $715$ clients, with each client denoting a speaking role. However, for this study, a different approach was used, adopting the LEAF \citep{caldas2019leaf} framework to split the dataset among $100$ devices and restrict the number of data points per device to $2000$. The non-IID dataset is formed by assigning each device to a specific role, and the local dataset for each device contains the sentences from that role. Conversely, the IID dataset is created by randomly distributing sentences from all roles across the devices.

For this task, we have employed a two-layer Long Short-Term Memory (LSTM) classifier, consisting of 100 hidden units and an 8-dimensional embedding layer. Our objective is to predict the next character in a sequence, where there are a total of 80 possible character classes. The model takes in a sequence of 80 characters as input, and for each character, it learns an 8-dimensional representation. The final output of the model is a single character prediction for each training example, achieved through the use of 2 LSTM layers and a densely-connected layer followed by a softmax. This model architecture is the same used by \citep{li2020FedProx, acar2021FedDyn}.

We report the result of centralized training for reference in Table~\ref{tab:acc_centralized}: we train for $75$ epochs with constant learning rate, using as hyperparameters $100$ for the batch size, $1$ for the learning rate, $0.0001$ for the weight decay and no momentum.

\paragraph{StackOverflow.}

The Stack Overflow dataset is a language modeling corpus that comprises questions and answers from the popular Q\&A website, StackOverflow. 
Initially, the dataset consists of $342477$ unique users but for, practical reasons, we limit our analysis to a subset of $40k$ users. Our goal is to perform the next-word prediction on these text sequences. To achieve this, we utilize a Recurrent Neural Network (RNN) that first learns a $96$-dimensional representation for each word in a sentence and then processes them through a single LSTM layer with a hidden dimension of $670$. Finally, the model generates predictions using a densely connected softmax output layer. The model and the preprocessing steps are the same as in \citep{reddi2020FedOpt}.
We report the result of centralized training for reference in Table~\ref{tab:acc_centralized}: as per the hyperparameters, we use $16$ for the batch size, $10^{-\nicefrac{1}{2}}$ for the learning rate and no momentum or weight decay. We train for $50$ epochs with a constant learning rate.
Given the size of the test dataset, testing is conducted on a subset of them made by $10000$ randomly chosen test examples, selected at the beginning of training. 

\paragraph{Large-scale Real-world Datasets.}
As large-scale real-world datasets for our experimentation, we follow \cite{hsu2020FederatedVisual}. {\landmarks} is composed of $\approx164k$ images belonging to $\approx2000$ classes, realistically split among $1262$ clients. {\inaturalist} is composed of $\approx120k$ images belonging to $\approx1200$ classes, split among $9275$ clients.
These datasets are challenging to train not only because of their inherent complexity (size of images, number of classes) but also because usually at each round a very small portion of clients is selected. In particular, for {\landmarks} we sample $10$ clients per round, while for {\inaturalist} we experiment with different participation rates, sampling $10$, $50$, or $100$ clients per round. In the main paper, we choose to report the participation rate instead of the number of sampled clients to better highlight that the tested scenarios are closer to a cross-device setting, which is the most challenging for algorithms based on client participation, like {\scaffold} and ours.
As per the model, for both datasets, we use a MobileNetV2 pretrained on ImageNet.

\paragraph{Details on the Experiment in \cref{fig:compare_ghbm_theory}.}
\edit{
In the main text (see \cref{sec:theory:conv_rate}) we provide an experiment to illustrate the convergence rate of {\ghb} (see \cref{fig:compare_ghbm_theory}). The learning problem consists in a linear regression of the coefficients $(a,b,c) \in \mathbb{R}$ of a quadratic function $f(x)=ax^2+bx+c$. The synthetic dataset is made of $6400$ observations of the above function (with $a=10, b=5, c=-1$) in the range $x \in [-10, 10]$. The dataset is split among $K=50$ clients each one having $128$ samples, and non-iidness is simulated by splitting the domain into equally big disjoint subsets, and having each client the observation of that domain.
}

\begin{table}[h]
\centering
    \caption{Details about datasets' split used for our experiments}
    \label{tab:datasets}
    \resizebox{\linewidth}{!}{
    \begin{tabular}{lcccccc}
      \toprule 
        &
      \multicolumn{1}{c}{\cifar{10}} &
      \multicolumn{1}{c}{\cifar{100}} &
      \multicolumn{1}{c}{\shakespeare} &
      \multicolumn{1}{c}{\stackoverflow} &
      \multicolumn{1}{c}{\landmarks} &
      \multicolumn{1}{c}{\inaturalist}  \\
      \midrule
      Clients & 100 & 100 & 100 & 40.000 & 1262 & 9275 \\
      Number of clients per round & 10 & 10 & 10 & 50 & 10 & $\{10,50,100\}$\\
      Number of classes & 10 & 100 & 80 & 10004 & 2028 & 1203 \\
      Avg. examples per client & 500 & 500 & 2000 & 428 & 130 & 13 \\
      Number of local steps & 8 & 8 & 20 & 27 & 13 & 2 \\
      Average participation (round no.)  & 1k & 1k & 25 & 1.5 & 40 & $\{5,27,54\}$\\
      \bottomrule
    \end{tabular}
    }
\end{table}

\subsection{Simulating Heterogeneity}
\label{app:exp:non_iid}
For \cifar{10/100} we simulate arbitrary heterogeneity by splitting the total datasets according to a Dirichlet distribution with concentration parameter $\alpha$, following \cite{hsu2020FederatedVisual}. 
In practice, we draw a multinomial $q_i \sim \mathbf{Dir}(\alpha p)$ from a Dirichlet distribution, where $p$ describes a prior class distribution over $N$ classes, and $\alpha$ controls the heterogeneity among all clients: the greater $\alpha$ the more homogeneous the clients' data distributions will be. After drawing the class distributions $q_i$, for every client $i$, we sample training examples for each class according to $q_i$ without replacement.
\begin{table*}[t]
    \begin{center}
    \caption{Hyper-parameter search grid for each combination of method and dataset (for $\alpha=0$). The best values are indicated in \textbf{bold}.
    }
    \label{tab:best_hyper}
    \resizebox{0.97\linewidth}{!}{
    \begin{tabular}{lccccc}
      \toprule 
       \textsc{Method} &
       \textsc{HParam} &
      \multicolumn{2}{c}{\cifar{10/100}} &
      \multicolumn{1}{c}{\textsc{Shakespeare}} &
      \multicolumn{1}{c}{\textsc{StackOverflow}} \\
      \cmidrule{3-4}
      & & \multicolumn{1}{c}{\small \textsc{LeNet}}
        & \multicolumn{1}{c}{\small \textsc{ResNet-20}}\\
      \midrule
      \multirow{2}{*}{\textsc{All FL}} 
      & wd & [\textbf{0.001}, 0.0008, 0.0004] & [0.0001, \textbf{0.00001}] & [0, \textbf{0.0001}, 0.00001] & [\textbf{0}, 0.0001, 0.00001]\\
      & $B$ & 64 & 64 & 100 & 16\\
      \midrule
      \multirow{2}{*}{\fedavg} 
      & $\eta$ & [\edit{2}, \textbf{\edit{1.5}}, 1, 0.5, 0.1] & [\edit{1.5}, \textbf{1}, 0.1] & [\edit{1.5}, \textbf{1}, 0.5, 0.1] & [\edit{1.5}, \textbf{1}, 0.5, 0.1] \\
      & $\eta_l$ & [0.1, 0.05, \textbf{0.01}, \edit{0.005}] & [\edit{1}, \textbf{\edit{0.5}}, 0.1, 0.01] & [\edit{1.5}, \textbf{1}, 0.5, 0.1] & [1, 0.5, \textbf{0.3}, 0.1]\\
      \midrule
      \multirow{3}{*}{\fedprox} 
      & $\eta$ & [\edit{2}, \textbf{\edit{1.5}}, 1, 0.5, 0.1] & [\edit{1.5}, \textbf{1}, 0.1] & [\edit{1.5}, \textbf{1}, 0.5, 0.1] & [\edit{1.5}, \textbf{1}, 0.5, 0.1]\\
      & $\eta_l$ & [0.1, 0.05, \textbf{0.01}, \edit{0.005}] & [\edit{1}, \textbf{\edit{0.5}}, 0.1, 0.01] & [\edit{1.5}, \textbf{1}, 0.5, 0.1] & [1, 0.5, \textbf{0.3}, 0.1]\\
      & $\mu$ & [\edit{1}, 0.1, \textbf{0.01}, 0.001] & [\edit{1}, \textbf{0.1}, 0.01, 0.001] & [0.1, 0.01, 0.001, \textbf{0.0001}, \edit{0.00001}] & [0.1, \textbf{0.01}, 0.001, 0.0001]\\
      \midrule
      \multirow{2}{*}{\scaffold} 
      & $\eta$ & [\edit{1.5}, \textbf{1}, 0.5, 0.1] & [\edit{1.5}, \textbf{1}, 0.1] & [\edit{1.5}, \textbf{1}, 0.5, 0.1] & [\edit{1.5}, \textbf{1}, 0.5, 0.1]\\
      & $\eta_l$ & [0.1, 0.05, \textbf{0.01}, \edit{0.005}] & [\edit{0.5}, \textbf{0.1}, 0.01] & [\edit{1.5}, \textbf{1}, 0.5, 0.1] & [1, 0.5, \textbf{0.3}, 0.1]\\
      \midrule
      \multirow{3}{*}{\feddyn} 
      & $\eta$ & [\edit{1.5}, \textbf{1}, 0.5, 0.1] & [\edit{1.5}, \textbf{1}, 0.1] & [\edit{1.5}, \textbf{1}, 0.5, 0.1] & [\edit{1.5}, \textbf{1}, 0.5, 0.1]\\
      & $\eta_l$ & [0.1, 0.05, \textbf{0.01}, \edit{0.005}] & [0.1, \textbf{0.01}, \edit{0.005}] & [\edit{1.5}, \textbf{1}, 0.5, 0.1] & [1, 0.5, \textbf{0.3}, 0.1]\\
      & $\alpha$ & [0.1, 0.01, \textbf{0.001}, \edit{0.0001}] & [0.1, 0.01, \textbf{0.001}, \edit{0.0001}] & [0.1, \textbf{0.009}, 0.001] & [\textbf{0.1}, 0.009, 0.001]\\
      \midrule
      \multirow{3}{*}{\adabest} 
      & $\eta$ & [\edit{1.5}, \textbf{1}, 0.5, 0.1] & [\edit{1.5}, \textbf{1}, 0.5, 0.1] & [\edit{1.5}, \textbf{1}, 0.5, 0.1] & [\edit{1.5}, \textbf{1}, 0.5, 0.1]\\
      & $\eta_l$ & [0.1, 0.05, \textbf{0.01}, \edit{0.005}] & [0.1, 0.05, \textbf{0.01}, \edit{0.005}] & [\edit{1.5}, \textbf{1}, 0.5, 0.1] & [1, 0.5, \textbf{0.3}, 0.1]\\
      & $\alpha$ & [0.1, 0.01, \textbf{0.001}, \edit{0.0001}] & [0.1, 0.01, \textbf{0.001}, \edit{0.0001}] & [0.1, \textbf{0.009}, 0.001] & [\textbf{0.1}, 0.009, 0.001]\\
      \midrule
      \multirow{2}{*}{\mime} 
      & $\eta$ & [\edit{2}, \textbf{\edit{1.5}}, 1, 0.5, 0.1] & [\edit{2}, \textbf{\edit{1.5}}, 1, 0.1] & [\edit{1.5}, \textbf{1}, 0.5, 0.1] & [\edit{1.5}, \textbf{1}, 0.5, 0.1]\\
      & $\eta_l$ & [0.1, 0.05, \textbf{0.01}, \edit{0.005}] & [\edit{0.5}, \textbf{0.1}, 0.01] & [\edit{1.5}, \textbf{1}, 0.5, 0.1] & [1, 0.5, \textbf{0.3}, 0.1]\\
      \midrule
      \multirow{3}{*}{\fedavgm} 
      & $\eta$ & [1, 0.5, 0.1, \textbf{\edit{0.05}}, \edit{0.01}] & [1, \textbf{0.1}, \edit{0.05}] & [1, \textbf{0.5}, 0.1] & [\edit{1.5}, \textbf{1}, 0.5, 0.1]\\
      & $\eta_l$ & [\edit{0.5}, \textbf{0.1}, 0.05, 0.01, \edit{0.005}] & [\edit{1}, \textbf{\edit{0.5}}, 0.1, 0.01] & [\edit{1.5}, \textbf{1}, 0.5, 0.1] & [1, 0.5, \textbf{0.3}, 0.1]\\
      & $\beta$ & [0.99, 0.9, \textbf{\edit{0.85}}, \edit{0.8}] & [0.99, 0.9, \textbf{\edit{0.85}}, \edit{0.8}] & [0.99, \textbf{0.9}, \edit{0.85}] & [0.99, \textbf{0.9}, \edit{0.85}] \\
      \midrule
      \multirow{4}{*}{\edit{\textsc{FedACG}}}
      & $\eta$ & [1, 0.5, 0.1, \textbf{0.05}, 0.01] & [1, \textbf{0.1}, 0.05] & [0.5, \textbf{0.1}, 0.05] & [\edit{1.5}, \textbf{1}, 0.5, 0.1]\\
      & $\eta_l$ & [0.5, \textbf{0.1}, 0.05, 0.01, 0.005] & [0.5, \textbf{0.1}, 0.01] & [\edit{1.5}, \textbf{1}, 0.5, 0.1] & [1, 0.5, \textbf{0.3}, 0.1]\\
      & $\lambda$ & [0.99, \textbf{0.9}, 0.85] & [0.99, \textbf{0.9}, 0.85] & [0.99, \textbf{0.9}, 0.85] & [0.99, \textbf{0.9}, 0.85] \\
      & $\beta$ & [0.1, \textbf{0.01}, 0.001] & [0.1, \textbf{0.01}, 0.001] & [0.1, 0.01, 0.001, \textbf{0.0001}, \edit{0.00001}] & [0.1, \textbf{0.01}, 0.001, 0.0001]\\
      \midrule
      \multirow{3}{*}{\mimemom} 
      & $\eta$ & [1, 0.5, \textbf{0.1}, \edit{0.05}] & [\edit{1.5}, \textbf{1}, 0.5, 0.3, 0.1, \edit{0.05}] & [1, 0.5, \textbf{0.1}, \edit{0.05}] & [\edit{1.5}, \textbf{1}, 0.5, 0.1]\\
      & $\eta_l$ & [0.1, 0.05, \textbf{0.01}, \edit{0.005}] & [\edit{0.5}, 0.1, 0.05, 0.03, \textbf{0.01}, \edit{0.005}] & [\edit{1.5}, \textbf{1}, 0.5, 0.1] & [1, 0.5, 0.3, \textbf{0.1}, \edit{0.05}]\\
      & $\beta$ & [0.99, 0.95, \textbf{0.9}, \edit{0.85}, \edit{0.8}] & [0.99, 0.95, 0.9, \textbf{\edit{0.85}}, \edit{0.8}] & [0.99, \textbf{0.9}, \edit{0.85}] & [0.99, \textbf{0.9}, \edit{0.85}] \\
      \midrule
      \multirow{3}{*}{\mimelite} 
      & $\eta$ & [1, 0.5, \textbf{0.1}, \edit{0.05}] & [\edit{1.5}, \textbf{1}, 0.5, 0.3, 0.1] & [1, 0.5, \textbf{0.1}, \edit{0.05}] & [\edit{1.5}, \textbf{1}, 0.5, 0.1]\\
      & $\eta_l$ & [0.1, 0.05, \textbf{0.01}, \edit{0.005}] &  [0.1, 0.05, 0.03, \textbf{0.01}, \edit{0.005}] & [\edit{1.5}, \textbf{1}, 0.5, 0.1] & [1, 0.5, 0.3, \textbf{0.1}, \edit{0.05}]\\
      & $\beta$ & [0.99, \textbf{0.9}, \edit{0.85}, \edit{0.8}] & [0.99, 0.95, 0.9, \textbf{\edit{0.85}}, \edit{0.8}] & [0.99, \textbf{0.9}, \edit{0.85}] & [0.99, \textbf{0.9}, \edit{0.85}] \\
      \midrule
      \multirow{3}{*}{\textsc{FedCM}} 
      & $\eta$ & [1, 0.5, \textbf{0.1}, \edit{0.05}] & [\edit{1.5}, \textbf{1}, 0.5, 0.1] & [1, 0.5, \textbf{0.1}, \edit{0.05}] & -\\
      & $\eta_l$ & [1, 0.5, \textbf{0.1}, \edit{0.05}] & [1, 0.5, \textbf{0.1}, \edit{0.5}] & [\edit{1.5}, \textbf{1}, 0.5, 0.1] & -\\
      & $\alpha$ & [0.05, \textbf{0.1}, 0.5] & [0.05, \textbf{0.1}, 0.5] & [0.05, \textbf{0.1}, 0.5] & - \\
      \midrule
      \multirow{3}{*}{\textbf{{\ghb} (ours)}} 
      & $\eta$ & [\textbf{1}, 0.5, 0.1] & [\textbf{1}, 0.1] & [\textbf{1}, 0.5, 0.1] & [\textbf{1}, 0.5, 0.1] \\
      & $\eta_l$ & [0.1, 0.05, \textbf{0.01}] & [0.1, \textbf{0.01}] & [\textbf{1}, 0.5, 0.1] & [1, 0.5, \textbf{0.3}, 0.1] \\
      & $\beta$ & [\textbf{0.9}] & [\textbf{0.9}] & [\textbf{0.9}] & [\textbf{0.9}] \\
      & $\tau$ & [5, \textbf{10}, 20, 40] & [5, \textbf{10}, 20, 40] & [5, \textbf{10}, 20, 40] & [5, 10, \textbf{20}, 40] \\
      \midrule
      \multirow{3}{*}{\textbf{\fedhbm (ours)}} 
      & $\eta$ & [\textbf{1}, 0.5, 0.1] & [\textbf{1}, 0.1] & [\textbf{1}, 0.5, 0.1] & [\textbf{1}, 0.5, 0.1] \\
      & $\eta_l$ & [0.1, 0.05, \textbf{0.01}] & [0.1, \textbf{0.01}] & [\textbf{1}, 0.5, 0.1] & [1, 0.5, \textbf{0.3}, 0.1]\\
      & $\beta$ & [\textbf{1}, 0.99, 0.9] & [\textbf{1}, 0.99, 0.9] & [\textbf{1}, 0.99, 0.9] & [\textbf{1}, 0.99, 0.9] \\
      \bottomrule
    \end{tabular}
    }
    \end{center}
\end{table*} %

\subsection{Evaluating Communication and Computational Cost}
\label{app:exp:cost}
In the main paper we showed a comparison in communication and computational cost of state-of-art FL algorithms compared to our solutions {\ghb} and {\fedhbm}: in this section we detail how those results in table \cref{tab:comm_comp_cost} have been obtained. We follow a three-step procedure:
\begin{enumerate}
    \item For each algorithm $a$, we calculate the minimum number of rounds $r_a$ to reach the performance of {\fedavg}, the total amount of bytes exchanged $b_a$ in the whole training budget (number of rounds, as described in \cref{app:exp:impl_practicality}) and the measure the corresponding total training time $t_a$. In this way, the different requirements in communication and computation of each algorithm are taken into account for the next steps.
    \item We calculate the actual communication and computational requirements as $(tb_a=b_a\cdot s_a, tt_a = t_a \cdot s_a)$, where $s_a=\frac{r_a}{T}$ is the speedup of the algorithm {\wrt} {\fedavg}. For those competitor algorithms that did not reach the target performance ({\eg} {\mimemom}) in the training budget $T$, we conservatively consider $r_a=T$.
    In this way, the convergence speed of each algorithm is taken into account for determining the actual amount of computation needed.
    \item We complement the above information with with a reduction/increase factor {\wrt} {\fedavg}, calculated as 
    $rtb_a = \left(1-\frac{tb_a}{tb_{\fedavg}}\right)$ and $rtt_a = \left(1-\frac{tt_a}{tt_{\fedavg}}\right)$ and expressed as a percentage. A cost reduction (\ie $rtb_a>0$ or $rtt_a>0$) is indicated with \textcolor{MaterialGreen800}{$\boldsymbol{\downarrow}$}, while a cost increase (\ie $rtb_a<0$ or $rtt_a<0$) is indicated with \textcolor{MaterialRed800}{$\boldsymbol{\uparrow}$}.
    This gives a practical indication of how much communication/computation have been saved in choosing the algorithm at hand as an alternative for {\fedavg}.
\end{enumerate}

\subsection{Hyperparameters}

\label{app:exp:hyperparameters}
For ease of consultation, we report the hyper-parameters grids as well as the chosen values in Table~\ref{tab:best_hyper}. For {\landmarks} and {\inaturalist} we only test the best SOTA algorithms: {\fedavg} and {\fedavgm} as baselines, {\scaffold} and {\mimemom}.
\paragraph{\mobilenet.} For all algorithms we perform $E=5$ local epochs, and searched $\eta \in \{0.1, 1\}$ and $\eta_l \in \{0.01, 0.1\}$, and found $\eta=0.1, \eta_l=0.1$ works best for {\fedavgm}, while $\eta=1, \eta_l=0.1$ works best for the others. For {\inaturalist}, we had to enlarge the grid for {\scaffold} and {\mimemom}: for both we searched $\eta \in \{10^{-3/2}, 10^{-1}, 10^{-1/2}, 1\}$ and $\eta_l \in \{10^{-2}, 10^{-3/2},  10^{-1}, 10^{-1/2}\}$.
\paragraph{\vit.} For all algorithms we perform $E=5$ local epochs, and searched $\eta \in \{0.1, 1\}$ and $\eta_l \in \{0.03, 0.01\}$ following \citep{steiner2022how}, and found $\eta=0.1, \eta_l=0.03$ works best for {\fedavgm}, while $\eta=1, \eta_l=0.03$ works best for the others.

\subsection{Implementation Details}
We implemented all the tested algorithms and training procedures in a single codebase, using \textsc{PyTorch 1.10} framework, compiled with \textsc{cuda 10.2}. 
The federated learning setup is simulated by using a single node equipped with $11$ Intel(R) Core(TM) i7-6850K \textsc{CPUs} and $4$ NVIDIA GeForce GTX 1070 \textsc{GPUs}. For the large-scale experiments we used the computing capabilities offered by \textsc{LEONARDO} cluster of \textsc{CINECA-HPC}, employing nodes equipped with $1$ CPU Intel(R) Xeon 8358 32 core, 2,6 GHz \textsc{CPUs} and $4$ NVIDIA A100 SXM6 64GB (VRAM) \textsc{GPUs}.
The simulation always runs in a sequential manner (on a single \textsc{GPU}) the parallel client training and the following aggregation by the central server.

\paragraph{Practicality of Experiments.}
\label{app:exp:impl_practicality}
Under the above conditions, a single {\fedavg} experiment on \cifar{100} takes $\approx$ 02:05 hours ({\lenet}, with $T=20.000$) and $\approx$ 03:36 hours ({\resnet}, with $T=10.000$). For {\scaffold} we always use the \texttt{"option II"} of their algorithm \citep{karimireddy2020scaffold} to calculate the client controls, incurring almost no overhead in our simulations. We found that using \texttt{"option I"} usually degrades both final model quality and requires almost double the training time, due to the additional forward+backward passes. Conversely, all \textsc{Mime}'s methods incur a significant overhead due to the additional round needed to calculate the full-batch gradients, taking $\approx$ 10:40 hours for \cifar{100} with {\resnet}. 
On {\shakespeare} and {\stackoverflow}, {\fedavg} takes $\approx22$ minutes and $\approx 3.5$ hours to run respectively $T=250$ and $T=1500$ rounds.

\subsection{Additional Experiments}

\begin{wraptable}{OH}{0.55\textwidth}
\vspace{-5mm}

    \caption{\textbf{Test accuracy (\%) comparison of SOTA FL algorithms in a controlled setting.} Best result is in \textbf{bold}, second best is \underline{underlined}.}
    \label{tab:main_controlled_cifar10}
    \centering
\resizebox{\linewidth}{!}{
    \begin{tabular}{lcccccc}
      \toprule 
      \multirow{2}{*}{\textsc{Method}} &
      \multicolumn{2}{c}{\cifar{10} {\scriptsize(\resnet)}} &
      \multicolumn{2}{c}{\cifar{10} {\scriptsize(\lenet)}} \\
    \cmidrule(lr){2-3} \cmidrule(lr){4-5} 
       &
      \multicolumn{1}{c}{\small \textsc{NON-IID}} &
      \multicolumn{1}{c}{\small \textsc{IID}} &
      \multicolumn{1}{c}{\small \textsc{NON-IID}} &
      \multicolumn{1}{c}{\small \textsc{IID}} \\
        \midrule
         \fedavg & $61.0 {\scriptstyle\,\pm 1.0}$ & $86.4 {\scriptstyle\,\pm 0.2}$ & $66.1 {\scriptstyle\,\pm 0.3}$ & $83.1 {\scriptstyle\,\pm 0.3}$ \\
         \fedprox & $61.0 {\scriptstyle\,\pm 1.8}$ & $86.7 {\scriptstyle\,\pm 0.2}$ & $66.1 {\scriptstyle\,\pm 0.3}$ & $83.1 {\scriptstyle\,\pm 0.3}$ \\
         \scaffold & $71.8 {\scriptstyle\,\pm 1.7}$ & $86.8 {\scriptstyle\,\pm 0.3}$ & $74.8 {\scriptstyle\,\pm 0.2}$ & $82.9 {\scriptstyle\,\pm 0.2}$ \\
         \feddyn &  $60.2 {\scriptstyle\,\pm 3.0}$ & $87.0 {\scriptstyle\,\pm 0.3}$ & $70.9 {\scriptstyle\,\pm 0.2}$ & ${83.5} {\scriptstyle\,\pm 0.1}$ \\
         \adabest & $73.6 {\scriptstyle\,\pm 3.0}$ & $86.7 {\scriptstyle\,\pm 0.5}$ & $66.1 {\scriptstyle\,\pm 0.3}$ & $83.1 {\scriptstyle\,\pm 0.4}$ \\
         \mime & $53.7 {\scriptstyle\,\pm 2.9}$ & $86.7 {\scriptstyle\,\pm 0.1}$ & $75.1 {\scriptstyle\,\pm 0.5}$ & $83.1 {\scriptstyle\,\pm 0.2}$ \\
        \cmidrule{1-1}
         \fedavgm & $66.0 {\scriptstyle\,\pm 2.2}$ & $87.7 {\scriptstyle\,\pm 0.3}$ & $67.6 {\scriptstyle\,\pm 0.3}$ & $83.6 {\scriptstyle\,\pm 0.3}$ \\
         \fedcm ${\scriptstyle (\ghb\,\tau=1)}$ & $65.2 {\scriptstyle\,\pm 3.2}$ & $87.1 {\scriptstyle\,\pm 0.3}$ & $69.0 {\scriptstyle\,\pm 0.3}$ & $83.4 {\scriptstyle\,\pm 0.3}$ \\
         \fedadc ${\scriptstyle (\ghb\,\tau=1)}$ & $65.7 {\scriptstyle\,\pm 3.0}$ & $87.1 {\scriptstyle\,\pm 0.2}$ & $66.1 {\scriptstyle\,\pm 0.3}$ & $83.4 {\scriptstyle\,\pm 0.3}$ \\
         \mimemom & $69.2 {\scriptstyle\,\pm 3.6}$ & $88.0 {\scriptstyle\,\pm 0.1}$ & ${80.9} {\scriptstyle\,\pm 0.4}$ & $83.1 {\scriptstyle\,\pm 0.2}$ \\
         \mimelite & $57.0 {\scriptstyle\,\pm 0.9}$ & $88.0 {\scriptstyle\,\pm 0.4}$ & $78.8 {\scriptstyle\,\pm 0.4}$ & $83.2 {\scriptstyle\,\pm 0.3}$ \\
         \cmidrule{1-1}
          \rowcolor{Cerulean!30!white}\textbf{{\localghb} (ours)} & $\underline{80.6} {\scriptstyle\,\pm 0.3}$ & $\underline{88.8} {\scriptstyle\,\pm 0.1}$ & $\underline{81.1} {\scriptstyle\,\pm 0.3}$ & $\underline{83.7} {\scriptstyle\,\pm 0.1}$ \\
         \rowcolor{Cerulean!30!white}\textbf{{\fedhbm} (ours)} & $\mathbf{83.4} {\scriptstyle\,\pm 0.3}$ & $\mathbf{89.2} {\scriptstyle\,\pm 0.1}$ & $\mathbf{81.7} {\scriptstyle\,\pm 0.1}$ & $\mathbf{83.8} {\scriptstyle\,\pm 0.1}$ \\
        \bottomrule
    \end{tabular}
}
    \setlength{\tabcolsep}{1.4pt}

\vspace{-10mm}
\end{wraptable}

\paragraph{Experiments on \cifar{10}}
Table \ref{tab:main_controlled_cifar10} reports the results of experiments analogous to the ones presented in Tab. \ref{tab:main_controlled}. For the main paper, we report experiments on \cifar{100}, as it is a more complex dataset and often a more reliable testing ground for FL algorithms. Indeed, sometimes algorithms perform well on \cifar{10} but worse on \cifar{100} (as for the already discussed case of {\feddyn}). Results in Tab. \ref{tab:main_controlled_cifar10} confirm the findings of the main paper: under extreme heterogeneity, some algorithms behave inconsistently across {\lenet} and {\resnet} (notice that {\feddyn} and {\mimelite} only with {\lenet} improve {\fedavg}. Conversely, {\localghb} and {\fedhbm} both consistently improve the state-of-art by a large margin.

\end{document}